\documentclass[11pt]{article}

\usepackage{arxiv} %

\usepackage[utf8]{inputenc} %
\usepackage[T1]{fontenc}    %
\usepackage{graphicx}
\usepackage{subcaption}
\usepackage[sort&compress,square,comma,authoryear]{natbib}

\newcommand{\ourmaintitle}{Factoring out prior knowledge from low-dimensional embeddings}
\newcommand{\ourtitle}{\ourmaintitle}

\newcommand{\confetti}{\textsc{Confetti}\xspace}
\newcommand{\jedi}{\textsc{Jedi}\xspace}

\usepackage{latexsym}
\usepackage[ruled, vlined, nofillcomment, linesnumbered]{algorithm2e}
\usepackage{eda} %
\usepackage{textgreek}

\graphicspath{ {./figs/} }

\def\eqref#1{equation~\ref{#1}}

\def\1{\bm{1}}

\DeclareMathAlphabet{\mathsfit}{\encodingdefault}{\sfdefault}{m}{sl}
\SetMathAlphabet{\mathsfit}{bold}{\encodingdefault}{\sfdefault}{bx}{n}

\newcommand{\KL}{D_{\mathrm{KL}}}

\DeclareMathOperator*{\argmin}{arg\,min}

\SetKwComment{tcpas}{\{}{\}}
\SetCommentSty{textnormal}
\SetArgSty{textnormal}
\SetKwRepeat{Do}{do}{while}
\SetKw{False}{false}
\SetKw{True}{true}
\SetKw{Null}{null}
\SetKwInOut{Output}{output}
\SetKwInOut{Input}{input}
\SetKw{AND}{and}
\SetKw{OR}{or}
\SetKw{Continue}{continue}

\newcommand{\tsne}{t\textsc{SNE}\xspace}
\newcommand{\umap}{\textsc{UMAP}\xspace}
\newcommand{\lle}{\textsc{LLE}\xspace}
\newcommand{\ctsne}{ct\textsc{SNE}\xspace}
\newcommand{\slle}{\textsc{sLLE}$^{-1}$\xspace}

\newcommand\independent{\protect\mathpalette{\protect\independenT}{\perp}}
\def\independenT#1#2{\mathrel{\rlap{$#1#2$}\mkern2mu{#1#2}}}

\newcommand{\JS}{\mathrm{JS}}

\newcommand{\norm}[1]{\ensuremath{\|{#1}\|}}

\newcommand{\reals}{\ensuremath{{\rm I\!R}}}

\newcommand{\divmix}{\ensuremath{\alpha}}
\newcommand{\distmix}{\ensuremath{\beta}}

\newcommand{\nos}{\ensuremath{\text{NOS}}\xspace}
\title{\ourtitle}

\author{
	Edith Heiter \\
	Dept. of Electronics and Information Systems\\
	IDLab, Ghent University\\
	\texttt{edith.heiter@ugent.be} \\
	\And
	Jonas Fischer \\
	Max Planck Institute for Informatics and \\
	Saarland University\\
	\texttt{fischer@mpi-inf.mpg.de} \\
	\AND
	Jilles Vreeken \\
	CISPA Helmholtz Center for Information Security \\
	\texttt{jv@cispa.de} \\
}

\date{}

\begin{document}

	\maketitle

	\begin{abstract}
		Low-dimensional embedding techniques such as \tsne and \umap allow visualizing high-dimensional data and therewith facilitate the discovery of interesting structure. Although they are widely used, they visualize data as is, rather than in light of the background knowledge we have about the data. What we already know, however, strongly determines what is novel and hence interesting. 
In this paper we propose two methods for factoring out prior knowledge in the form of distance matrices from low-dimensional embeddings.
To factor out prior knowledge from \tsne embeddings, we propose \jedi that adapts the \tsne objective in a principled way using Jensen-Shannon divergence.
To factor out prior knowledge from any downstream embedding approach, we propose \confetti, in which we directly operate on the input distance matrices.
Extensive experiments on both synthetic and real world data show that both methods work well, providing embeddings that exhibit meaningful structure that would otherwise remain hidden.
	\end{abstract}

	\section{Introduction}
\label{sec:intro}

Embedding high dimensional data into low dimensional spaces, such as with \tsne~\citep{maaten2008visualizing} or \umap~\citep{mcinnes2018umap}, allow us to visually inspect and discover meaningful structure from the data that would otherwise be difficult or impossible to see.
These methods are as popular as they are useful, but, at the same time limited in that they are one-shot only: they embed the data as is, and that is that.
If the resulting embedding reveals novel knowledge, all is well, but, what if the structure that dominates it is something we already know, something we are no longer interested in, or, if we want to discover whether the data has meaningful structure other than what the first result revealed?
In word embeddings, for example, we may already know that certain words are synonyms, while in single cell sequencing we may want to discover structure other than known cell types,
or factor out family relationships.
The question at hand is therefore, how can we obtain low-dimensional embeddings that reveal structure \emph{beyond} what we already know, i.e. how to factor out prior knowledge from low-dimensional embeddings?

For conditional embeddings, research so far mostly focused on \textit{emphasizing} rather than factoring out prior knowledge \citep{de2003supervised, hanhijarvi:09:tell, barshan2011supervised},
with conditional \tsne as notable exception, which, however, can only factor out label information \citep{kang2019cond}.
Here, we propose two techniques for factoring out a more general form of prior knowledge from low-dimensional embeddings of arbitrary data types. In particular, we consider background knowledge in the form of pairwise distances between samples.
This formulation allows us to cover a plethora of practical instances including labels, 
clustering structure, 
family trees,
user-defined distances, but also, and especially important for unstructured data, kernel matrices. 

To factor out prior knowledge from \tsne embeddings, we propose \jedi, in which we adapt the \tsne objective in a principled way using Jensen-Shannon divergence. 
It has an intuitively appealing information theoretic interpretation, and maintains all the strengths and weaknesses of \tsne.
One of these is runtime, which is why \umap is particularly popular in bioinformatics.
To factor out prior knowledge from embedding approaches in general, including \umap, we hence propose \confetti, which directly operates on the input data.
An extensive set of experiments shows that both methods work well in practice and provide embeddings that reveal meaningful structure beyond provided background knowledge, such as organizing flower images according to shape rather than color, or organizing single cell gene expression data beyond cell type, revealing batch effects and tissue type.

	\section{Related Work}\label{sec:related}

Embedding high dimensional data into a low dimensional spaces is a research topic of perennial interest that includes classic methods such as principal component analysis~\cite{pearson1901liii}, multidimensional scaling~\citep{torgerson1952multidimensional}, self organizing maps \citep{kohonen1982self}, and isomap \citep{tenenbaum2000global}, all of which focus on keeping large distances intact. This is inadequate for data that lies on a manifold that resembles a Euclidean space only locally, which is the case for high dimensional data \citep{silva2003global} and for which we hence need methods such as locally linear embedding (LLE) \citep{roweis2000nonlinear} and stochastic neighbor embedding (SNE) \citep{hinton2003stochastic} that focus on keeping local distances intact. The current state of the art methods are t-distributed SNE (tSNE) by \cite{maaten2008visualizing} and Uniform Manifold Approximation (UMAP) by \cite{mcinnes2018umap}. Both are by now staple methods for data processing, e.g. in biology \citep{becht2019dimensionality, kobak2019art} and NLP \citep{coenen2019visualizing}. As they often yield highly similar embeddings \citep{kobak2019umap} it is a matter of taste which one to use. While \tsne has an intuitive interpretation, 
despite recent optimizations \citep{maaten2014accelerating, linderman2019fast}
compared to UMAP it suffers from very long runtimes.

Whereas the above consider only the data as is, there also exist proposals that additionally take user input and/or domain knowledge into account. 
For specific applications to gene expression, it was proposed to optimize projections of gene expression to model similarities in corresponding gene ontology annotations \citep{peltonen2010geneont}. More recently, attention has been brought to removing unwanted variation (RUV) from data using negative controls in particular in the light of gene expression, assuming that the expression can be modeled as a linear function of factors of variation and a normally distributed variable \citep{gagnon2012ruv}. This approach has been successfully applied to different tasks and domains of gene expression \citep{risso2014ruvApp, buettner2015ruvnorm, gerstner2016ruvSleep, hung2019ruvGE}.
Here, we are interested to develop a domain independent method to obtain low-dimensional embeddings while factoring out prior knowledge. For that, we neither want to assume a functional relationship between prior and input data, nor do we want to assume a particular distribution of the input, but keep the original data manifold intact. Furthermore, we do not want to rely on negative samples that have to be known and present in the data to be able to factor out the prior.

The general, domain independent methods supervised LLE \citep{de2003supervised}, guided LLE \citep{alipanahi2011guided}, and supervised PCA \citep{barshan2011supervised} all aim to emphasize rather than factor out the structure given as prior knowledge. 
Like us, \cite{kang2016subjectively,kang2019cond,puolamaki2018interactive} factor out background knowledge, but are much more limited in the type of prior knowledge. In particular, \cite{puolamaki2018interactive} requires users to specify clusters in the embedded space, \cite{kang2016subjectively} requires background knowledge for which a maximum entropy distribution can be obtained, while \cite{kang2019cond} extend \tsne and propose conditional \tsne (ctSNE) which accepts prior knowledge in the form of class labels. In contract, we consider prior knowledge in the form of arbitrary distance metrics, which can capture relative relationships which appears naturally in real world data, such difference in age, geographic location, or level of gene expression. We propose both, an information theoretic extension to \tsne, and an embedding-algorithm independent approach to factor out prior knowledge. %

	\section{Theory}
\label{sec:theory}

We present two approaches, with distinct properties, that both solve the problem of embedding high dimensional data while factoring out prior knowledge.
We start with an informal definition of the problem, after which we introduce vanilla \tsne. We then present our first solution, \jedi, which extends the \tsne objective to incorporate prior information. We then present \confetti, which uses an elegant yet powerful idea that allows us to directly factor out prior knowledge from the distance matrix of the high dimensional data, which allows \confetti to be used in combination with any embedding algorithm that operates on distance matrices.

\subsection{The Problem -- informally}

Given a set of $n$ samples $X$ from a high dimensional space, e.g. $\reals^d$, our goal is to find a low dimensional representation $Y$ in $\reals^2$ that captures the local structure in $X$ while factoring out prior knowledge $Z$ about the samples.
Here, we consider both high dimensional data $X$ and prior $Z$ to be given as distance matrices, thus allowing for data from typical spaces such as Euclidean, but also images, up to unstructured data such as texts or graphs, for which distance matrices can be specified using a kernel.
When embedding $X$, our goal is to embed the samples such that the pairwise low dimensional distances $D^Y$ resemble high dimensional distances
$D^X$ locally, but are distinct to the prior distances $D^Z$. Informally, we can state this goal as finding an embedding $Y$ subject to
\[
 D^X \approx D^Y \not\approx D^Z \; .
\]
We could formally define this as a multi-objective problem composed of a minimization over the difference between $D^X$ and $D^Y$ and a maximization of the difference between $D^Y$ and $D^Z$. Besides how to measure these differences, there are two problems that render classic multi-objective optimization impractical. First, the two functions are highly imbalanced, with the minimization objective obtaining its optimum at $0$ and the maximization at $+\infty$, hence we need to constrain the optimization. Second, we want to put emphasis on correctly reconstructing local structure, as this yields superior visualizations \citep{maaten2008visualizing, mcinnes2018umap}.

\subsection{The problem -- information theoretically}

The t-distributed Stochastic Neighbor Embedding (\tsne) \citep{maaten2008visualizing} is a state-of-the-art approach for embedding data into low dimensional spaces that preserves the local structure of the high dimensional data. %
In particular, it models the local neighborhood of a point by casting the pairwise distances into similarity distributions that express for each point $i$ the likelihood of observing point $j$ as neighbor, given by $p_{j|i}$. For the high dimensional distances $D^X_{ij}$,
this likelihood is approximated by a Gaussian kernel centered at point $i$
\[
 p_{j|i} = \frac{\exp(-(D^X_{ij})^2/2\sigma_i^2)}{\sum_{k\neq i}\exp(-(D^X_{ik})^2/2\sigma_i^2)}.
\]
To account for varying densities of points in the space, the variance $\sigma_i$ is dependent on where the kernel is centered.
Given the user specified parameter \textit{perplexity}, which can be thought of as an estimate of the neighborhood size, we can solve $\mathit{perplexity}=2^{H(P_i)}$
for $\sigma_i$ for each point $i$, where $H(P_i)=\sum_j p_{j|i} \log p_{j|i}$ is the entropy.
By symmetrizing the conditional probabilities, the joint probability of a pair of points is given as $p_{ij}=\frac{p_{j|i} + p_{i|j}}{2n}$, which yields the desired local similarity representation of high dimensional points.

The low dimensional point similarities $q_{ij}$ are represented by a t-distribution instead of a Gaussian, which solves the \textit{crowding problem}\footnote{The crowding problem is the phenomenon of assembling all points in the center of the map, due to the accumulation of many small attractive forces as moderate distances are not accurately modelled.} due to its heavy tails. We thus get low dimensional similarities
\[
 q_{ij} = \frac{(1+ (D^Y_{ij})^2)^{-1}}{\sum_{k\neq l} (1+ (D^Y_{kl})^2)^{-1}}.
\]
The goal of \tsne is to model pairs of points exhibiting a high similarity in the high dimensional space to have a high similarity in the low dimensional space. This is achieved by minimizing the Kullback-Leibler Divergence (KL), given by $\KL(P ~||~ Q)=\sum_{i\neq j} p_{ij} \log\frac{p_{ij}}{q_{ij}}$, for the pairwise probabilities.
This information theoretic measure yields the number of excess bits needed if we would encode $P$ using a code optimal for encoding $Q$ and thus models how well $Q$ approximates $P$.
Minimizing the KL divergence with respect to $Y$, we get a non-convex objective that we can practically optimize using gradient descent.
Using a similar notion of neighborhood distributions, we can now define a new objective that instantiates our objective using tools from information theory.

\subsubsection{Factoring out prior information with Jedi}
To incorporate prior information into the \tsne objective, we first need to model the neighborhood distribution $P'$ of the prior. Similar to the high dimensional data, we use a Gaussian kernel by which we hence put emphasis on samples that are close according to the prior, defined as
\[
p'_{j|i} = \frac{\exp(-(D^Z_{ij})^2/2\sigma_i^2)}{\sum_{k\neq i}\exp(-(D^Z_{ik})^2/2\sigma_i^2)},
\]
with $\sigma_i$ a perplexity parameter which describes the desired neighborhood size in the prior space.

We thus search for a similarity distribution $Q$ of points $Y$ in the low dimensional space, that is similar to the high dimensional similarities $P$ but different from the prior similarities $P'$.
Similar to \tsne, the first term of our objective corresponds to minimizing the KL divergence, thus a natural extension would be to add a second term that rewards maximizing the KL divergence between the distances of embedding $Q$ and prior $P'$. This would be naive, however, as this second term would dominate the optimization 
because KL divergence is unbounded. Furthermore, it would not allow us to exploit the asymmetry of divergence in the one or the other direction.

To mitigate the unboundedness, the skewed KL divergence has been proposed, mixing the two distributions $\KL^{\beta}(P ~||~ Q) = \KL(P ~||~ (1-\beta)P + \beta Q)$ with $\beta\in[0,1]$ controlling skewness and thus boundedness (see e.g. \cite{yamano2019some}).
To obtain symmetry, the $\beta$-Jensen-Shannon divergence defined as $\JS_{\beta}(P~||~Q)=\frac{1}{2}(\KL^{\beta}(P ~||~ Q) + \KL^{\beta}(Q ~||~ P))$ was introduced.
Based on these ideas, we propose a new divergence, which we call parameterized Jensen-Shannon Divergence (pJSD), which allows to control for both, the level of skewness as well as the level of symmetry, and prove that pJSD is bounded.

\begin{definition}[Parameterized Jensen Shannon Divergence]
 For two probability distributions $P'$ and $Q$ we define the parameterized Jensen-Shannon divergence as
\[
 \JS^{\alpha}_{\beta} (P' ~||~ Q) = \alpha \KL(P'~||~\beta Q + (1-\beta)P') + (1-\alpha)\KL(Q~||~ \beta P' + (1-\beta)Q),
\]
where $0 \leq \alpha \leq 1$ determines the level of symmetry and $0 < \beta < 1$ determines the level of skewness.
\end{definition}

\begin{theorem}[Upper bound on pJSD]
 For $0 \leq \alpha \leq 1$ and $0 < \beta < 1$ the parametrized JS divergence is bounded from above by
 \[
  \JS^{\alpha}_{\beta} \leq -\log(1-\beta).
 \]
\end{theorem}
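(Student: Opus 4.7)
The plan is to bound each of the two skewed KL terms in the definition of $\JS^{\alpha}_{\beta}$ separately by $-\log(1-\beta)$, and then use the fact that they are combined as a convex combination with weights $\alpha$ and $1-\alpha$.

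First I would focus on the term $\KL(P'\,||\,\beta Q + (1-\beta)P')$. Writing it out,
\[
\KL(P'\,||\,\beta Q + (1-\beta)P') = \sum_i p'_i \log \frac{p'_i}{\beta q_i + (1-\beta)p'_i}.
\]
The key observation is that since $q_i \geq 0$, the mixture in the denominator satisfies $\beta q_i + (1-\beta)p'_i \geq (1-\beta) p'_i$, so the logarithm is pointwise bounded:
\[
\log \frac{p'_i}{\beta q_i + (1-\beta)p'_i} \leq \log\frac{p'_i}{(1-\beta)p'_i} = -\log(1-\beta).
\]
Summing with weights $p'_i$ (which sum to one) yields $\KL(P'\,||\,\beta Q + (1-\beta)P') \leq -\log(1-\beta)$.

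By exactly the same argument with the roles of $P'$ and $Q$ swapped, I would obtain $\KL(Q\,||\,\beta P' + (1-\beta)Q) \leq -\log(1-\beta)$. Plugging both bounds into the definition gives
\[
\JS^{\alpha}_{\beta}(P'\,||\,Q) \leq \alpha(-\log(1-\beta)) + (1-\alpha)(-\log(1-\beta)) = -\log(1-\beta),
\]
which is the desired bound.

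There is no real obstacle here: the statement is essentially a one-line observation about the denominator of the skewed KL divergence, and the proof does not even need the assumption $\alpha \geq 0$ beyond its role in making the convex combination meaningful. The only thing worth being careful about is making sure the bound is finite, which is guaranteed by $\beta < 1$; at the boundary $\beta = 1$ the bound would blow up, consistent with recovering the ordinary unbounded KL divergence.
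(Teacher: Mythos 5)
Your proof is correct, and it takes a genuinely different and more elementary route than the paper's. The paper invokes a general upper bound for $f$-divergences, $D_f(P\,\|\,Q) \leq \lim_{x \to 0}\bigl(f(x) + f^*(x)\bigr)$ with $f^*(x) = x f(1/x)$, identifies each skewed KL term as an $f$-divergence with an explicit generator, and then evaluates the two limits to obtain $-\log(1-\beta)$ for each term before combining them convexly. You instead bound each term pointwise: since $\beta q_i + (1-\beta)p'_i \geq (1-\beta)p'_i$, the log-ratio never exceeds $-\log(1-\beta)$, and averaging against $p'_i$ gives the bound directly (and symmetrically for the reverse term). Your argument is self-contained, avoids any appeal to conjugate generators or limit computations, and makes transparent both where the bound comes from (the $(1-\beta)$ floor on the mixture) and when it is approached (when $Q$ vanishes on the support of $P'$, i.e., near-disjoint supports). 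What the paper's route buys is placement of the result inside the standard $f$-divergence framework, so the same lemma yields bounds for other skewed divergences without a new pointwise argument; what yours buys is brevity and the observation that the bound holds term-by-term, so the convexity of the $\alpha$-combination is the only place the assumption $0 \leq \alpha \leq 1$ is used. Both proofs are valid and yield the identical constant.
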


We provide a proof in App.~\ref{app:pJSD}. Putting the pieces together, we can now formulate our objective as the minimization of the KL divergence between the similarity distributions of $X$ and $Y$, and the maximization of the pJSD between the similarity distributions of $Y$ and $Z$, which is

\[
 \argmin_Y  \KL(P~||~Q) - \JS^{\alpha}_{\beta}(P' ~||~Q).
\]
While the parameters $\alpha,\beta$ give the user flexibility on how much of the prior distribution should be factored out,
we will later discuss a good default parameter instantiation based on synthetic data.
This objective, similar to vanilla \tsne and related approaches such as \ctsne, is not convex, and is optimized using gradient descent.
We provide the derivation of the gradients in App.~\ref{app:grad}.
This provides us with a method that solves our problem of factoring out prior knowledge on information theoretic grounds, which we call \jedi in resemblance of the \textit{Je}nsen Shannon \textit{Di}vergence.

\paragraph{Computational Complexity}
The computational and memory complexity of \jedi is in $O(kn^2)$, for $n$ samples and $k$ iterations,
which comes from the summation over all pairs of sample in the divergences in each iteration.
Due to the interactions in the gradient of pJSD, standard algorithmic optimizations of \tsne \citep{maaten2014accelerating, linderman2019fast} are not directly applicable.

Overall, \jedi is a powerful, theoretically appealing approach to factor out prior knowledge.
Based on \tsne, \jedi inherits many of its strengths {and} weaknesses.
In particular runtime and strong emphasis on local structure make it hard to successfully apply \tsne, and therewith \jedi, to datasets that are either very large and/or contain structure at different scales, i.e. data as typically considered in bioinformatics. In the next section, we therefore revisit the problem, and propose an embedding algorithm independent approach that is applicable to such settings.

\subsection{The problem -- algorithm independently}

\umap is one of the state-of-the-art competitors to \tsne that alleviates its drawbacks for large data that not only contain local structure. Rather than presenting a dedicated solution for \umap, we here propose a general, embedding-formulation independent approach. To do so, we have to revisit the original problem formulation,
where the goal is to approximate high dimensional distances with embedding distances while simultaneously keeping them far from the prior.
The key idea here is that, if we factor out the prior knowledge from the high dimensional distances directly, we are independent of the actual embedding process, and hence any embedding algorithm can be used.
Informally, we can state this as
$ (D^X \ominus D^Z) \approx D^Y$,
where $\ominus$ describes some yet to be defined way of factoring out prior knowledge $Z$ from the distances over high dimensional data $X$.
Once we have this operator, we can use any distance metric based embedding algorithm on its result to obtain high quality embeddings $Y$ from which $Z$ has been factored out. Clearly, the operator should result in a proper distance metric, discard any structure that is evident and keep any structure that is not evident given prior knowledge $Z$. W.l.o.g., for the remainder of this section we assume that the distances $D$ are scaled to $D' = \frac{1}{D_{max}}D$, with $D_{max}$ the maximum value in $D$. We define operator $\ominus$ as
\begin{equation}
 (D^X \ominus_{\lambda} D^Z)_{ij} = \begin{cases}
			    D^X_{ij} - \frac{1}{2}\lambda D^Z_{ij} + \lambda &~~~i\neq j\;, \\
			    D^X_{ij} &~~~i=j\;,
                          \end{cases}
\end{equation}
which is to say, we subtract the information given by the prior distances from the high dimensional distances in a linear form, with $\lambda$ controlling how much prior to be removed. Although surprisingly simple, this elegant definition has very convenient properties that render it very powerful. First, there is only a single parameter $\lambda$, which due to linearity gives the user direct and interpretable control over how much the prior information should be taken into account.
Second, the distance matrix we obtain by applying the operator maintains metric properties -- the proof can be found in App.~\ref{app:metric} -- that are required to properly optimize downstream embedding algorithms. Without the guarantee of symmetry and triangle inequality optimizing over pairwise distances is not possible.

\begin{theorem}[Metric]
 Assuming that $D^X$ and $D^Z$ are based on valid metrics, for any $\lambda > 0$, $(D^X \ominus_{\lambda} D^Z)$ fulfills the metric axioms of non-negativity, symmetry, identity, and triangle inequality.
\end{theorem}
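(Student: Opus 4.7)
The plan is to verify each of the four metric axioms in turn. Symmetry, identity, and non-negativity are essentially immediate from the definition together with the scaling convention $D^X, D^Z \in [0,1]$; the only real work is the triangle inequality, where the additive constant $\lambda$ has been engineered to absorb the worst-case distortion introduced by subtracting $\tfrac{\lambda}{2} D^Z$. I do not expect any genuine obstacle — the argument hinges on recognizing that $\lambda$ is the smallest shift compatible with the $[0,1]$ scaling.

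For the three easy axioms I would proceed as follows. \emph{Symmetry} is inherited directly from the symmetry of $D^X$ and $D^Z$, since every off-diagonal entry depends on $(i,j)$ only through symmetric quantities. \emph{Non-negativity} uses $D^X_{ij} \geq 0$ together with $D^Z_{ij} \leq 1$ to obtain, for any $i \neq j$,
\[
(D^X \ominus_\lambda D^Z)_{ij} \;\geq\; 0 - \tfrac{\lambda}{2} + \lambda \;=\; \tfrac{\lambda}{2} \;>\; 0,
\]
while diagonal entries vanish by definition. \emph{Identity of indiscernibles} then follows at once, since off-diagonal entries are strictly positive whereas diagonal entries are zero.

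The substantive step is the triangle inequality. For pairwise distinct $i,j,k$ I would expand
\[
\Delta := (D^X \ominus_\lambda D^Z)_{ij} + (D^X \ominus_\lambda D^Z)_{jk} - (D^X \ominus_\lambda D^Z)_{ik} = \bigl(D^X_{ij} + D^X_{jk} - D^X_{ik}\bigr) + \tfrac{\lambda}{2}\bigl(D^Z_{ik} - D^Z_{ij} - D^Z_{jk}\bigr) + \lambda.
\]
The first parenthesis is nonnegative by the triangle inequality for $D^X$. The middle parenthesis is bounded below by $-2$ using only $D^Z_{ik} \geq 0$ and $D^Z_{ij}, D^Z_{jk} \leq 1$, hence the second summand is $\geq -\lambda$ and is exactly absorbed by the trailing $+\lambda$, giving $\Delta \geq 0$. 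The degenerate cases in which two of the indices coincide reduce to trivial inequalities once the diagonal value $0$ is substituted, so the bound extends to all triples. The single conceptual point worth flagging in the write-up is that the additive $+\lambda$ is calibrated exactly against the $[0,1]$ rescaling: without the preliminary normalization just before the definition of $\ominus_\lambda$, the shift would need to be enlarged by a factor of $D^Z_{\max}$, and the clean cancellation above would fail.
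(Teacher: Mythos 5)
Your proposal is correct and follows essentially the same route as the paper: the paper isolates the additive correction $f_{\lambda}(i,j) = -\tfrac{1}{2}\lambda D^Z_{ij} + \lambda$ (for $i\neq j$), shows it is itself a metric taking values in $[\tfrac{1}{2}\lambda,\lambda]$ so that $\max \leq 2\min$ yields its triangle inequality, and then invokes closure of metrics under addition, whereas you verify the axioms directly on the sum — but your bound on the middle parenthesis by $-2$ is exactly that same $\max \leq 2\min$ observation. The organization differs slightly; the substance does not.
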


Furthermore, the operator has the property of maintaining the original structure under an uninformative prior.
More formally, we define $N^D_k(i)=\{j \in \mathit{kNN}(i)~ \text{according to} ~D\}$ to be the $k$-neighborhood of sample $i$ according to distances $D_{ij}$.
For simplicity of notation, we will assume that all distances are distinct, the results and definitions can directly be generalized to the case of equal distances.
Assuming an uninformative prior, which was generated independently of the high dimensional data $D^Z \independent D^X$,
on expectation the neighborhoods of each point stay the same.

\begin{theorem}[Uninformative prior (proof in App.~\ref{app:uniprior})]
 Assume the prior is uninformative, that is $D^Z \independent D^X$.
 Furthermore, the distances are normalized to $D^Z_{ij}\in [0,1]$.
 For fixed $\lambda > 0$ let $(F_{\lambda})_{ij} = D^X \ominus_{\lambda} D^Z$.
 On expectation, the neighborhoods in $X$ and in $X$ with factored out prior are the same, that is $\forall i,k.~N^{F_{\lambda}}_k(i) =_{E[.]} N^{D^X}_k(i)$.
\end{theorem}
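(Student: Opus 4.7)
The plan is to reduce the claim to the elementary observation that adding a single constant to every off-diagonal entry of a distance matrix does not change the ordering of distances around any fixed centre, and therefore preserves every $k$-nearest-neighbour set. The entire theorem then follows from a one-line computation of $\E[(F_\lambda)_{ij}]$ under the independence assumption.

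First I would fix indices $i \neq j$ and take the expectation in the definition of $\ominus_\lambda$. Because $D^Z \independent D^X$, the value $D^X_{ij}$ is deterministic with respect to the randomness in the prior, so
\[
 \E\bigl[(F_\lambda)_{ij}\bigr] \;=\; D^X_{ij} - \tfrac{1}{2}\lambda\,\E[D^Z_{ij}] + \lambda .
\]
The natural formalisation of an uninformative prior is that the entries $D^Z_{ij}$ are identically distributed across pairs (otherwise the prior would already single out certain pairs). Writing $\mu := \E[D^Z_{ij}] \in [0,1]$ for the common mean, the right-hand side simplifies to $\E[(F_\lambda)_{ij}] = D^X_{ij} + c$, where $c = \lambda(1-\tfrac{1}{2}\mu)$ is a scalar that does not depend on $(i,j)$. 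The normalisation $D^Z_{ij} \in [0,1]$ is used only here, to guarantee that $\mu$ is well-defined and finite.

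Second, since $\{\E[(F_\lambda)_{ij}] : j \neq i\}$ differs from $\{D^X_{ij} : j \neq i\}$ by the uniform additive shift $c$, the two lists have the same sorted order, and in particular the same set of $k$ smallest entries for every $k$. Hence $N^{F_\lambda}_k(i) = N^{D^X}_k(i)$ when the left-hand side is computed from the matrix of expected distances, which is exactly the statement $=_{\E[.]}$ of the theorem. The assumption that all distances are distinct, carried over from the preceding paragraph, ensures the neighbourhoods are unambiguously defined.

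The only subtle point, and the one worth flagging, is the meaning of "equality on expectation" for neighbourhood \emph{sets}: sets are not real-valued, so $\E$ cannot be applied to them directly. The reading used above---equality of the sets derived from $\E[F_\lambda]$---is the natural one and is what the proof delivers. A stronger, realisation-wise statement (that $N^{F_\lambda}_k(i)=N^{D^X}_k(i)$ holds with high probability for large $n$) would require a concentration argument on the realised $D^Z_{ij}$ around $\mu$, which is not claimed here but would follow under an i.i.d.\ model via standard tail bounds, exploiting exactly the same linear structure of $\ominus_\lambda$.
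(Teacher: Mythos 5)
Your proof is correct and rests on the same key observation as the paper's: under independence, $\E[(F_\lambda)_{ij}] = D^X_{ij} - \frac{1}{2}\lambda\,\E[D^Z_{ij}] + \lambda$, which is $D^X_{ij}$ plus a pairwise-constant shift and therefore preserves the ordering of distances around every centre; the paper merely packages this order-preservation step as a proof by contradiction on the farthest element of $N^{F_\lambda}_k(i)$. One genuine improvement on your side: you state explicitly that the prior entries must be identically distributed (so the shift $c=\lambda(1-\frac{1}{2}\mu)$ is the same for all pairs), an assumption the paper uses silently when it replaces $\E_{P_Z}[D^Z_{il}]$ by $\E_{P_Z}[D^Z_{ij}]$ in its chain of inequalities, and you also flag the ambiguity of applying $\E[\cdot]$ to neighbourhood sets, which the paper leaves unaddressed.
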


This theorem proves that the embeddings obtained from this distance matrix are expected to be robust against priors unrelated to the input, and thus no knowledge is lost and no spurious knowlegde is generated. Both the metricity as well as the robustness against uninformative priors are essential for generating good embeddings, which more complex formulations can not provide that easily.
Normalizing the distances as discussed above, and applying the $\ominus_{\lambda}$ to factor out prior knowledge,
we obtain an embedding algorithm independent method to factor out prior knowledge.
In reminiscence of how the plots look, we refer to this method as \confetti and give its pseudocode as Alg.~\ref{alg:confetti}.

\paragraph{Complexity}  \confetti runs in time $O(n^2)$, which includes the normalization of the distance matrix
and computation of the operator, which only counts towards a very small constant. Additionally, we will need to run an embedding algorithm, which
respective runtime is added to $O(n^2)$.

	\section{Experiments}
\label{sec:exps}

We evaluate on both synthetic and real world data. 
Since there does not exist any direct competitor that can factor out arbitrary distance matrices from an embedding, we compare to two closest competitors.
The first is \ctsne~\citep{kang2019cond}, which extends the \tsne objective and can factor out prior information given as cluster labels.
The second is supervised \lle~\citep{de2003supervised}, which although originally designed to emphasize structure in the embedding given as labels, we can modify such that it instead emphasizes any structure \emph{not} in the prior. We give the details for this modification, which we refer to as \slle, in App.~\ref{app:slle}. For fair comparison, we optimize all parameters via grid search on a synthetic data hold out set, and use these throughout all experiments (see App.~\ref{app:hyper}).

\subsection{Reliably factoring out distance priors}

\begin{figure}
 \centering
        \begin{subfigure}[b]{0.24\textwidth}   
            \centering 
            \includegraphics[width=1.0\textwidth]{./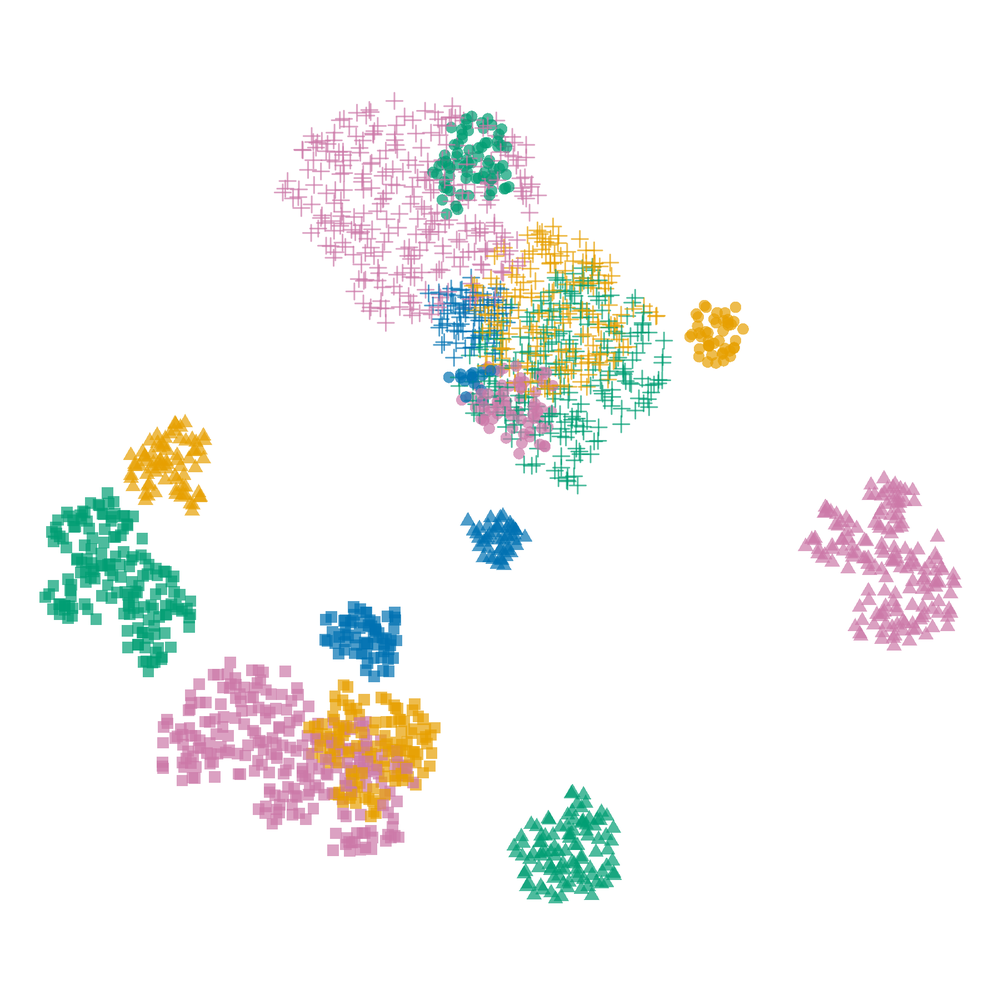}
            \caption{\ctsne.}
        \end{subfigure}
        \hfill
        \begin{subfigure}[b]{0.24\textwidth}   
            \centering 
            \includegraphics[width=1.0\textwidth]{./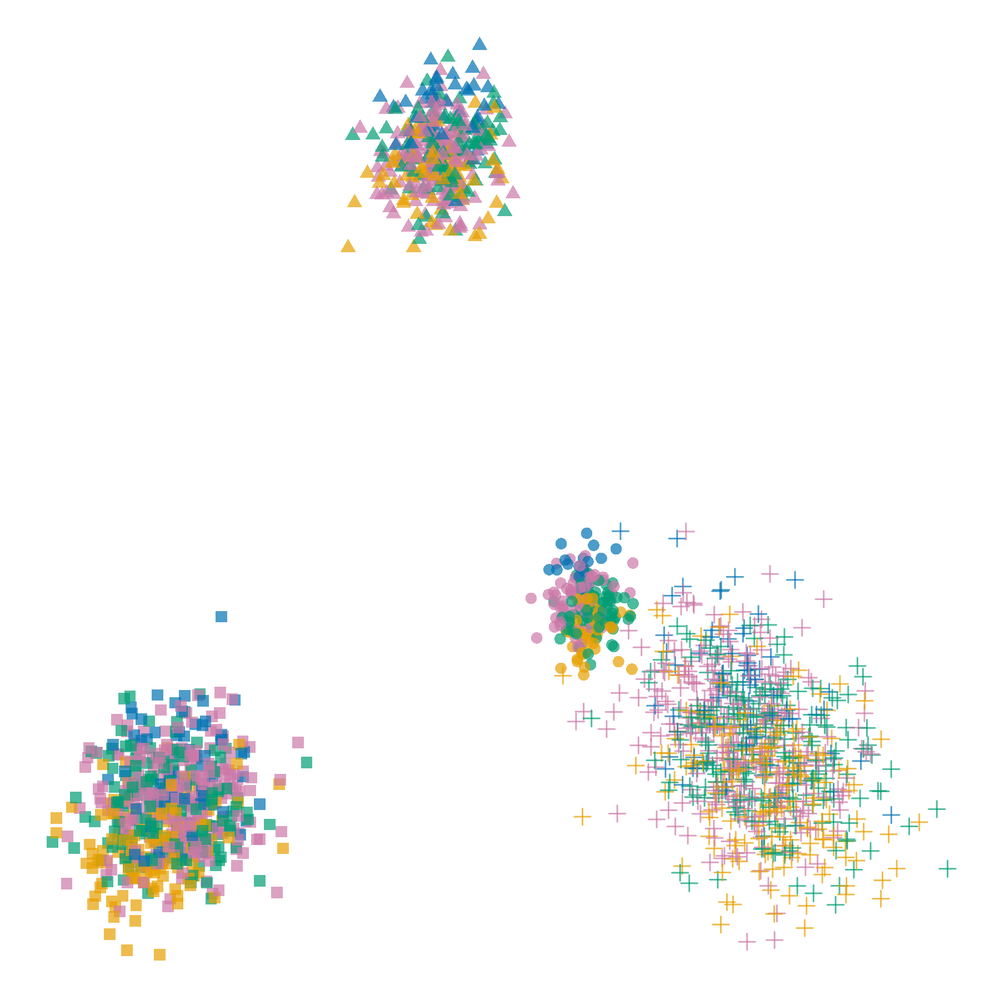}
            \caption{\slle.}
        \end{subfigure}
	\hfill
        \begin{subfigure}[b]{0.24\textwidth}
            \centering
            \includegraphics[width=1.0\textwidth]{./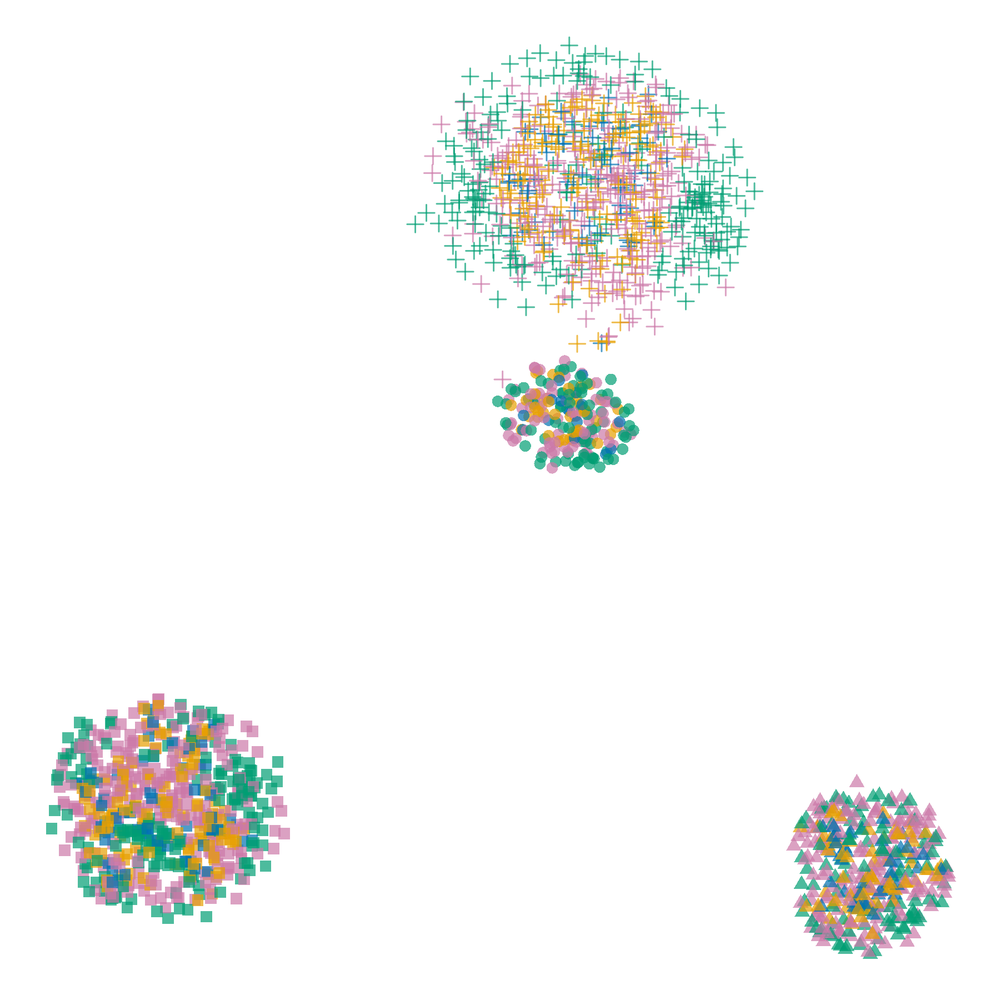}
            \caption{\confetti.}
        \end{subfigure}
        \hfill
        \begin{subfigure}[b]{0.24\textwidth}   
            \centering 
            \includegraphics[width=1.0\textwidth]{./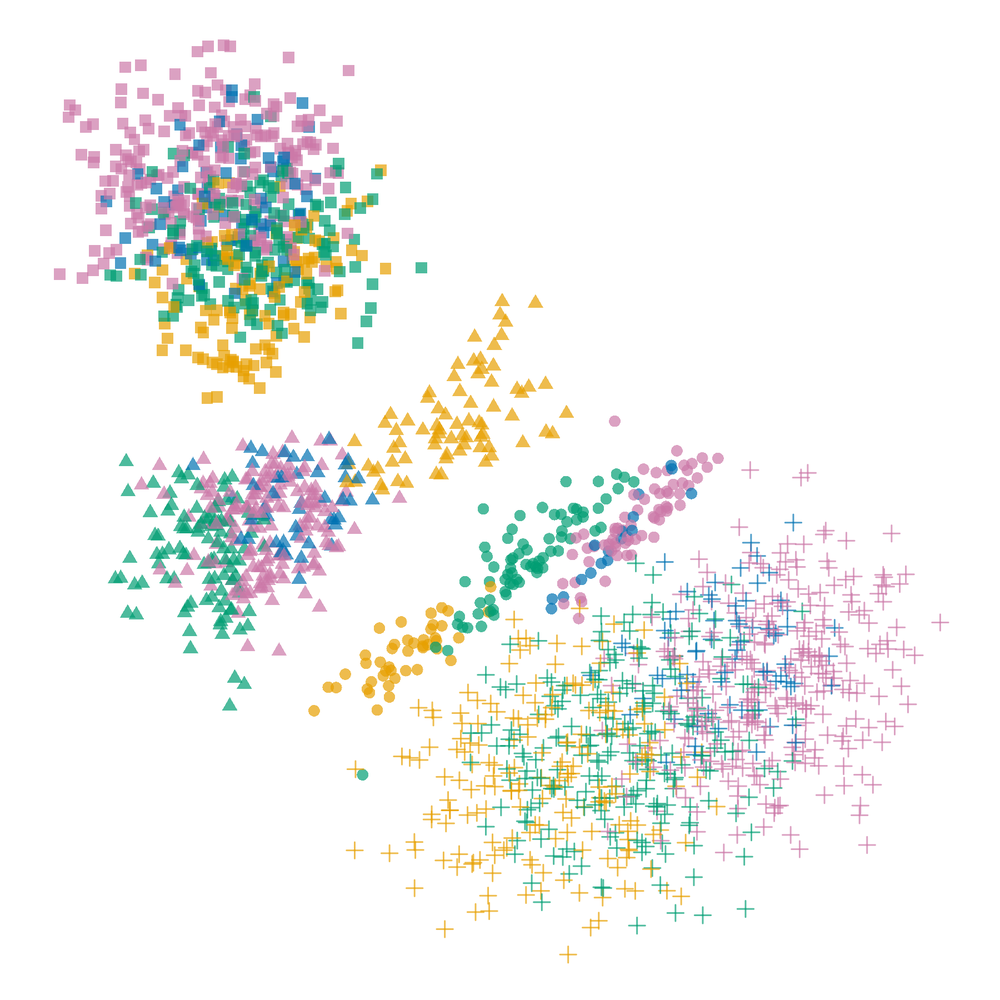}
            \caption{\jedi.}
        \end{subfigure}
 \caption{\textit{Synthetic data.} Shown are conditional embeddings of synthetic data given resp. ground truth labels for \ctsne and \slle (a,b),
 	and euclidean distance for \confetti and \jedi (c,d). Colors correspond to cluster assignment over dimensions 1--8, shape (circle, square, triangle, and cross) to cluster assignment over dimensions 9--12. In b),c), and d), each of the four clusters consists of samples of one shape.}
 \label{fig:synth}
\end{figure}

We first consider synthetic data with known ground truth. In particular, we consider synthetic data of $n=2\,000$ samples over 14 dimensions, where dimensions 1-8 and 9-12 both exhibit 4 clusters of different sizes, while dimensions 13 and 14 are Gaussian noise. We give more details, as well as a tSNE plot in App.~\ref{app:synth}. The cluster structure over the first 8 dimensions dominates the \tsne embedding. To discover information beyond these clusters, we provide \jedi and \confetti the euclidean distances over these 8 dimensions. To enable a fair comparison to \ctsne and \slle, which are limited to label priors, we provide the \emph{ground truth} cluster assignment as background knowledge. All methods finished within minutes, and we plot the results in Fig.~\ref{fig:synth}. Although given the true labelling, \ctsne fails to satisfactorily factor out the prior knowledge, whereas our methods yield the 4 distinct clusters from dimension 9-12. Notably, when we provide \jedi with the ground truth label assignment, it yields similarly sharp clusters as \slle (see App.~\ref{app:synth_label_fig}).

To objectively quantify how well prior knowledge is factored out from an embedding, we propose to measure the similarity over neighborhoods.
For two distance matrices $D,D'$ and neighborhood size $k$, we define the neighbourhood overlap score (\nos) as
$\nos(D,D',k)= \frac{1}{n}\frac{1}{k}\sum_{i=1}^n|\{\mathit{kNN} ~\text{of}~ i ~\text{in}~ D\} \cap \{\mathit{kNN} ~\text{of}~ i ~\text{in}~ D'\} |$.
Correspondingly, for a distance matrix $D$ and label distribution $L$,
we get $\nos(D,L,k)= \frac{1}{n}\frac{1}{k}\sum_{i=1}^n \frac{1}{|L_i|}|\{\mathit{kNN} ~\text{of}~ i ~\text{in}~ D\} \cap \{L_i\} |$.
While this score lends itself for evaluation, it is hard to directly optimize (see App.~\ref{app:synth}).
Plotting $\nos(D^X,D^Y,)$ for all neighborhood sizes $k=1\ldots n$ allows us to asses how well we preserve information of the original data, whereas plotting $\nos(D^X,D^Z)$ allows us to asses how well we factor out prior knowledge from an embedding.
As the id-line corresponds to a random neighbor encounter, we can measure the area between the \nos curve and id-line as a proxy for how well we preserve information, respectively how well we factor out prior knowledge.

For the synthetic data, the area between the curves and the id-line for \ctsne, \slle, \jedi, and \confetti are $.237, .344, .340, .344$ when we compute the \nos between embedding and input data without prior. For the \nos between embedding and prior labels we have $.037, .005, .003, .022$, respectively (plots are given in App. Fig.~\ref{app:synth_nos_fig}). We see that \jedi factors out the prior almost ideally, as well as \slle, and that \confetti performs slightly worse especially in small neighborhoods.  
When evaluating the \nos with regard to the euclidean distance prior, \confetti beats all other methods with an area between curves of only $0.002$. Overall, \ctsne shows the worst \nos performance, which is also evident in the embedding (see Fig.~\ref{fig:synth}).
As for information captured in the embedding from the non-prior dimensions, \jedi, \confetti, and \slle do equally well, putting \ctsne at a distance. 
Overall, \jedi and \confetti are perform on par with the state of the art given label priors, but perform at least as well given continuous priors,
for which \ctsne and \slle are not applicable.

\subsection{Recovering flower geometry}

\begin{figure}
 \centering
        \begin{subfigure}[b]{0.32\textwidth}
            \centering
            \includegraphics[width=\textwidth]{./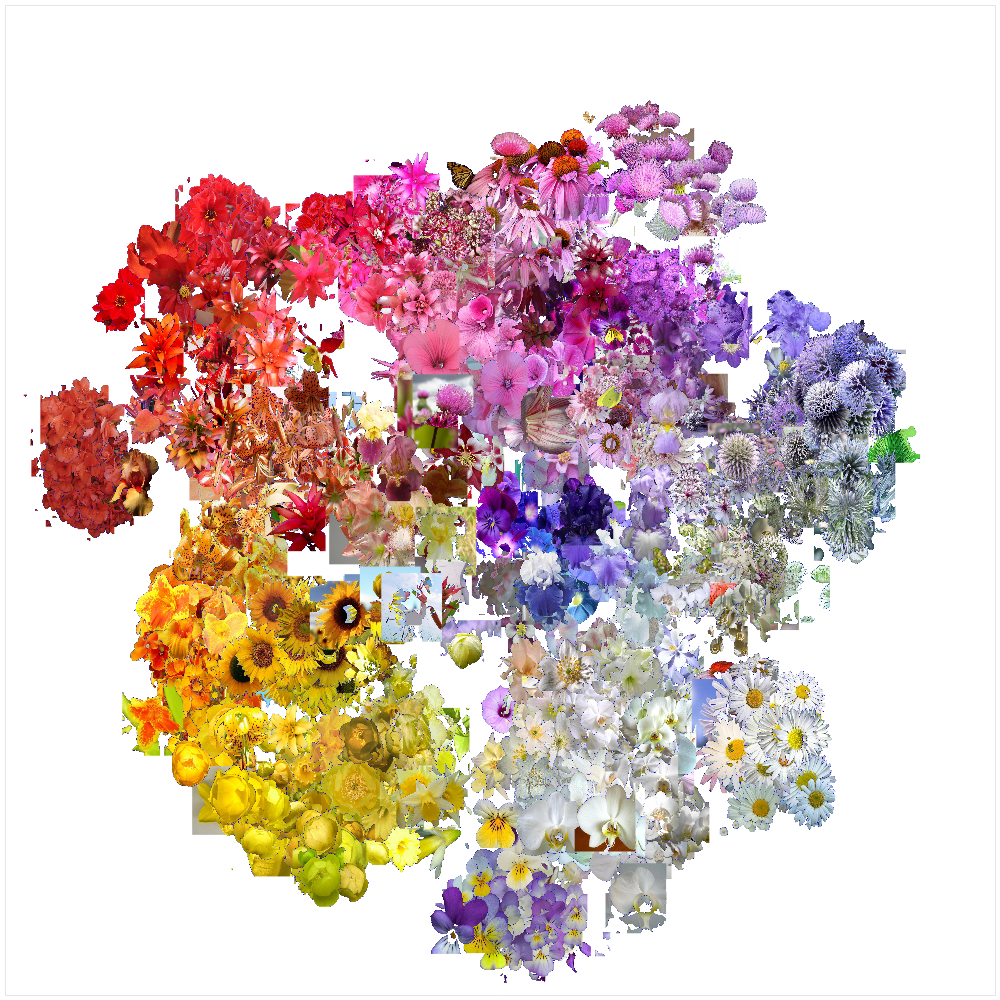}
            \caption{\tsne embedding.}    
            \label{fig:flowertsne}
        \end{subfigure}
        \hfill
        \begin{subfigure}[b]{0.32\textwidth}   
            \centering 
            \includegraphics[width=\textwidth]{./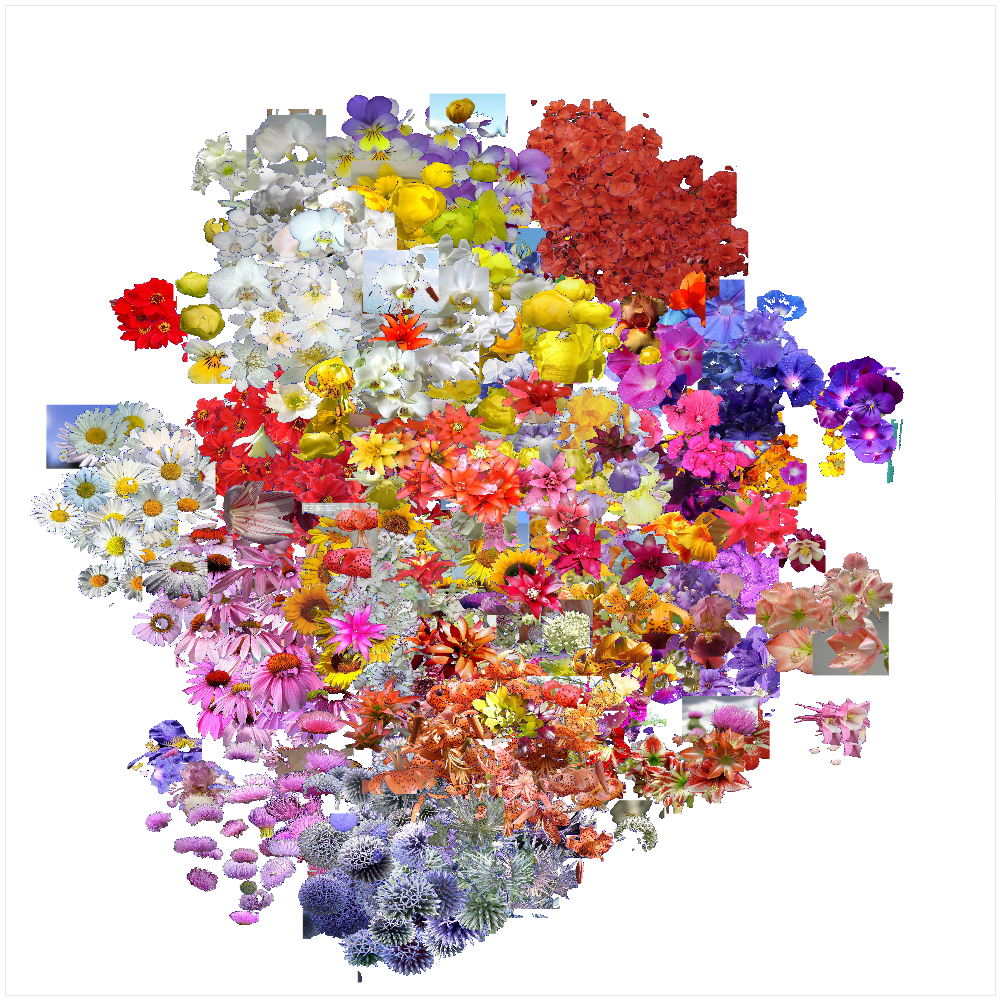}
            \caption{\jedi embedding.}    
            \label{fig:flowerjedi}
        \end{subfigure}
        \hfill
        \begin{subfigure}[b]{0.32\textwidth}   
            \centering 
            \includegraphics[width=\textwidth]{./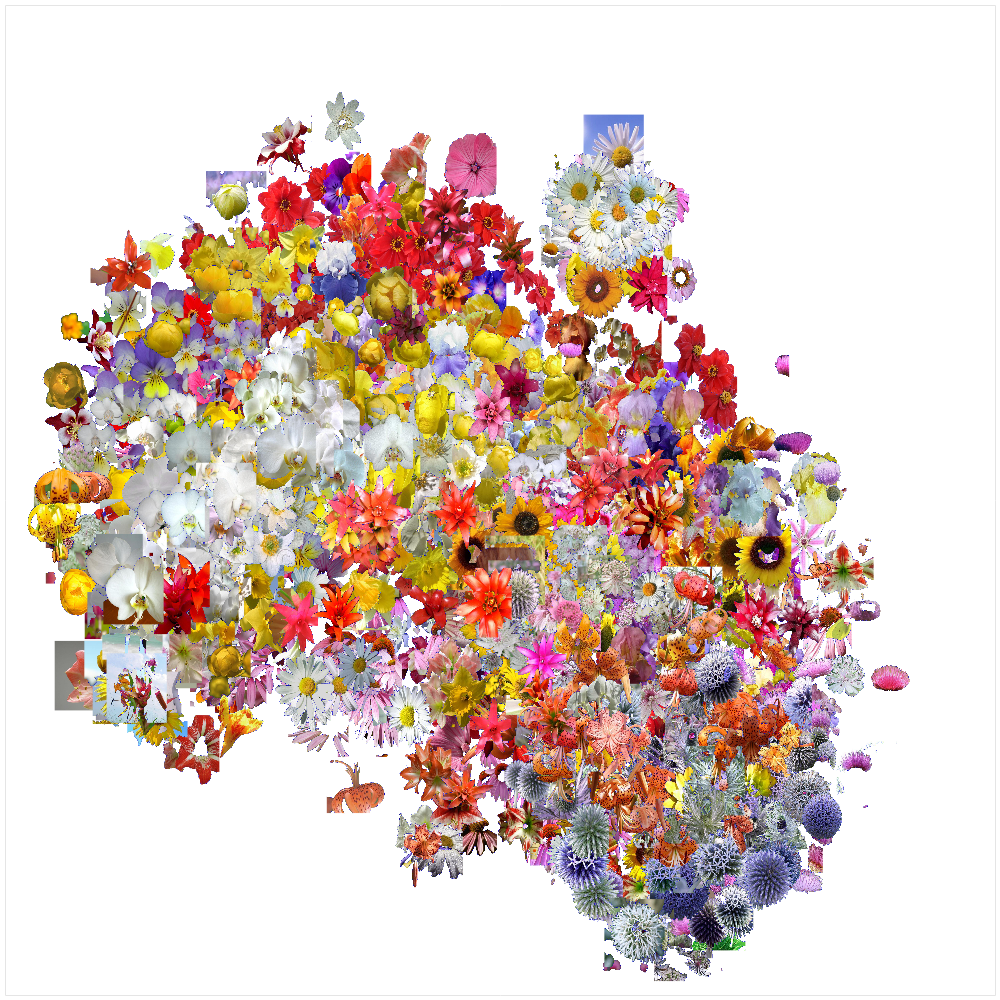}
            \caption{\confetti  + \tsne.}
            \label{fig:flowerconfetti}
        \end{subfigure}
 \caption{\textit{Real world data}. Embedded are the sum of chi-square matrices for color, local shape and texture, boundary shape, and spatial petal distribution of the Oxford flower data. Shown are vanilla \tsne (left), and \jedi (middle) and \confetti (right) with HSV color distances as prior knowledge.}
 \label{fig:flower}
\end{figure}

To evaluate on real data, we consider the
Oxford flower dataset from \cite{nilsback2008automated}, which consists of over 8000 images of flowers of 102 different classes. 
The data is given as a set of four pairwise distance matrices which are the Chi-squared distances of the color (HSV), the local shape and texture, the shape of the boundary, and the spatial distribution of petals of the flower, all computed on segmented flower images. The available implementations for \ctsne and \slle are not applicable to distance matrices as input $X$, and hence we could not compare to these methods. We will use the sum of all four matrices as high dimensional input. To keep the results interpretable, we subsample 40 images from 25 different classes each, by which we have $n=1000$ samples. We are interested whether our algorithms is able to factor out a known prior that dominates an embedding in a real world setting, which is why we specify the HSV color matrix as background knowledge. While other priors could be specified, we would not know if that prior would be present in the input data and thus would not be able to evaluate success. \confetti and \jedi terminate in seconds, respectively two minutes.

We give the vanilla \tsne embedding in Fig.~\ref{fig:flowertsne}, which besides a clustering according to colour conveys little other information. When we factor out the color information with \jedi and \confetti, we see that colors mix and new clusters form according to other features. For example, spiky petals arrange at the one side (Fig.~\ref{fig:flowerjedi} bottom, Fig.~\ref{fig:flowerconfetti} bottom right), whereas rounded petals assemble on the respective other side of the space. Similarly, flowers with few but large petals gather on one side (Fig.~\ref{fig:flowerjedi} right, Fig.~\ref{fig:flowerconfetti} top) and flowers with many thin petals on the respective other side. Apart from these visual changes, we can evaluate based on the \nos plot, which shows that also for this metric prior our methods are able to factor out the background knowledge (see App. Fig.~\ref{app:flower_nos}).

\subsection{Batch effects in single cell sequencing}

\begin{figure}
 \centering
        \begin{subfigure}[b]{0.24\textwidth}
            \centering
            \includegraphics[width=1.0\textwidth]{./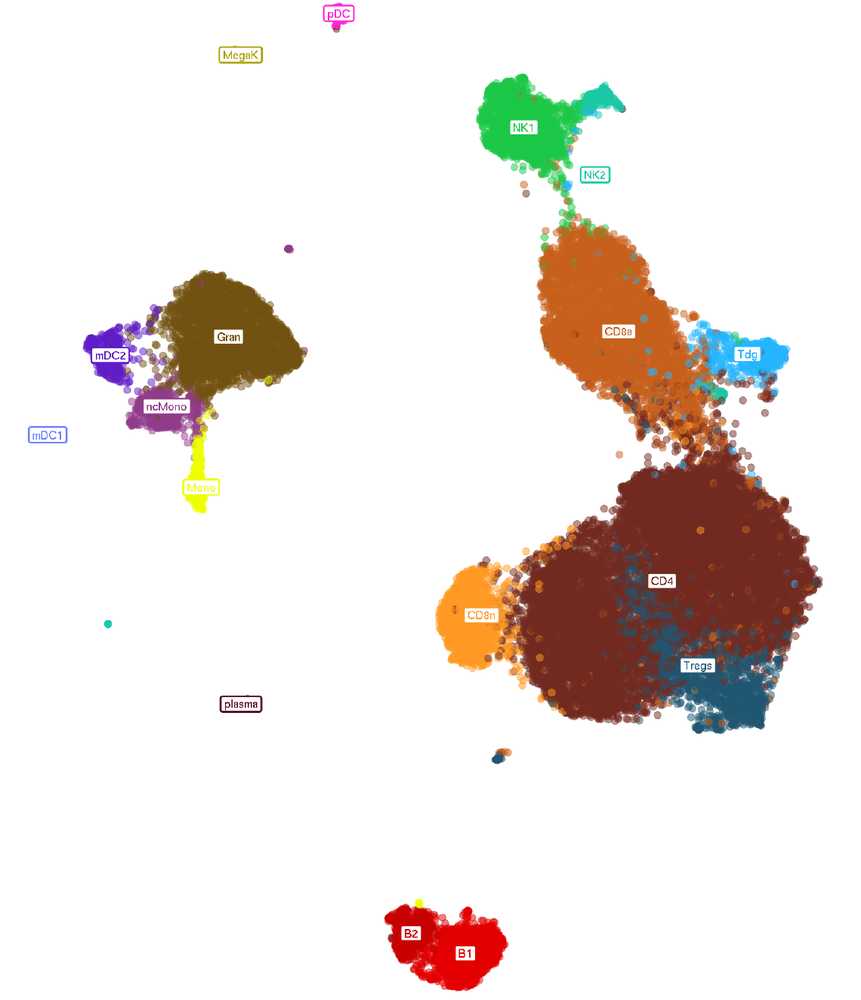}
            \caption{\umap embedding, coloring according to cell type.}    
            \label{fig:scumap}
        \end{subfigure}
        \hfill
        \begin{subfigure}[b]{0.24\textwidth}   
            \centering 
            \includegraphics[width=1.0\textwidth]{./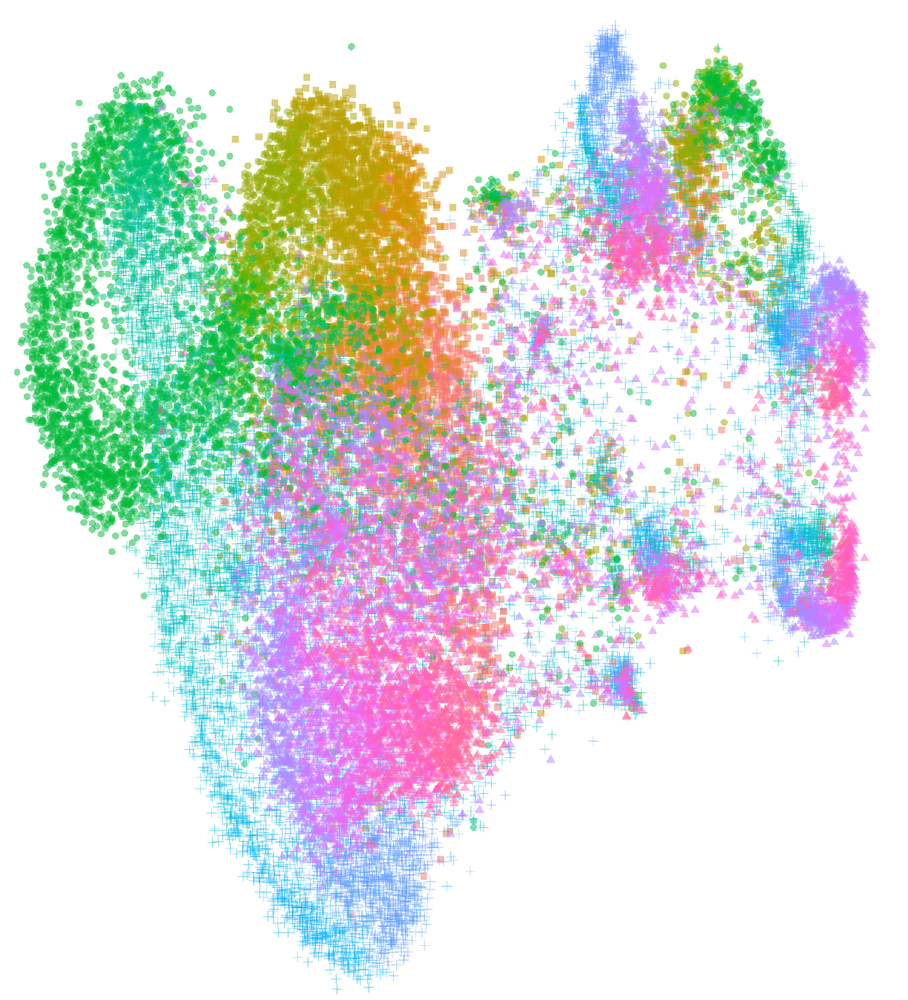}
            \caption{\confetti + \umap with marker gene expression as prior.}
            \label{fig:scumapconfetti}
        \end{subfigure}
        \hfill
        \begin{subfigure}[b]{0.24\textwidth}   
            \centering 
            \includegraphics[width=1.0\textwidth]{./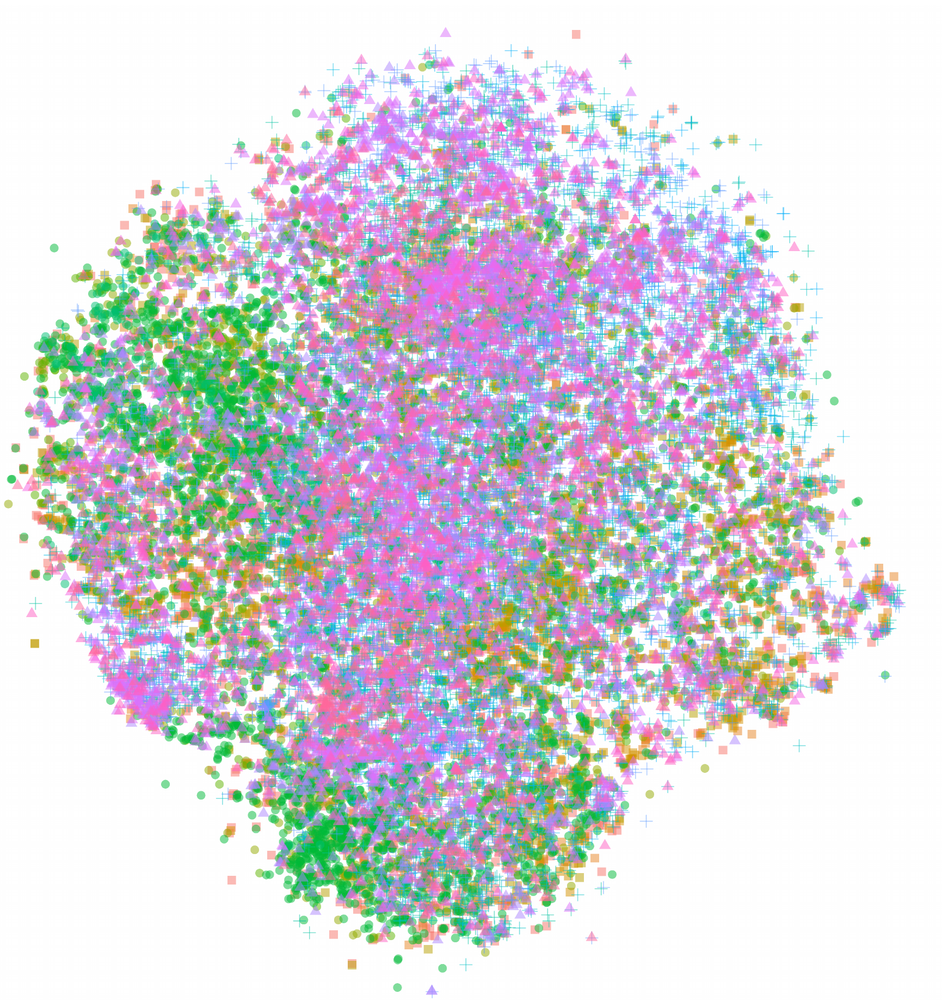}
            \caption{\ctsne with labels of clustering marker gene expression as prior.}    
            \label{fig:scctsne}
        \end{subfigure}
        \hfill
        \begin{subfigure}[b]{0.24\textwidth}   
            \centering 
            \includegraphics[width=1.0\textwidth]{./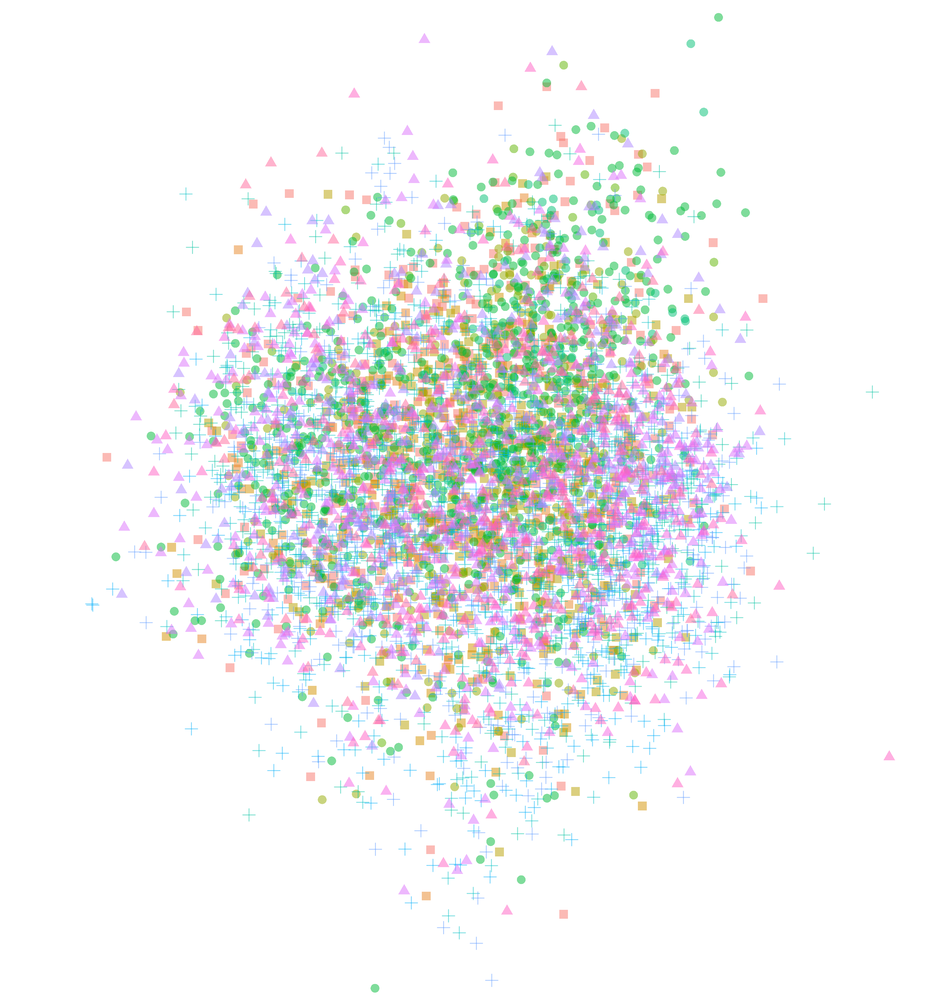}
            \caption{\slle with labels of clustering marker gene expression as prior.}
            \label{fig:scslle}
        \end{subfigure}
 \caption{\textit{Real World Data} Visualizations of the embedding of single cell sequencing data of \umap, \confetti + \umap, \ctsne, and \slle. \slle failed to produce an embedding for the full data, results are obtained for a random subset of the data. Coloring in b), c), and d) according to batch ID, shape according to sample type (case vs control, and blood vs CSF). 
 }
 \label{fig:sc}
\end{figure}

One of the major applications of embeddings such as \tsne and \umap is in single cell sequencing, where it is a standard routine to visualize the sequencing data, allowing e.g. to assess the quality of sequencing, to easily remove outliers from the data, or highlight differences between cohorts.
To test the methods on such data, we use a recent single cell data set of cerebrospinal fluid (CSF) and whole blood of multiple sclerosis patients and a control group \citep{schafflik20blood_sc_data}.
We generate a vanilla \umap embedding using their original jupyter notebooks, and plot it as Fig.~\ref{fig:scumap}. It shows the typical clustering according to the cell types in blood and CSF, when coloring the samples according to gene expression of standard marker genes (for more information refer to the original paper).
Here, we are interested whether our methods can reveal information beyond cell type. We thus provide again the gene expression as input, and additionally take the euclidean distance between the marker gene expression levels of each sample as prior, corresponding to how the coloring was obtained.
As \ctsne cannot deal with continuous priors, we instead provide it the cluster labels from an agglomerative clustering of the marker gene expression levels, which yielded the same number of clusters as the original paper (see App. Fig.~\ref{app:sc_dendogram}).
\slle requires a matrix inversion, and for this data gives an error hinting that the data matrix is too large.
Hence, we provide \slle with only a random subset of \raisebox{-0.9ex}{\~{}}$6000$ samples to obtain an embedding.
While this is not practical, as important tasks such as outlier detection or assignment of cell types or states can only be done for that particular subset, it allows us to compare the obtained embeddings.
We provide \slle with the same cluster labels as for \ctsne.
\ctsne terminated within 23 minutes, \slle in 32 minutes, \confetti in 10 minutes, and \jedi in 100h, the resulting embeddings are given in Figure \ref{fig:sc}, a vanilla \umap plot with coloring according to batch ID is given in App. Fig. \ref{app:umap_batchcolor}.

The results of \confetti show a surprising separation of samples from different batches and accordingly separation of the different tissue (blood and CSF), and of case and control along a manifold.
Encouraged by these results, we also apply \confetti with tissue information as prior, and then observe that the previously separated clusters for actually equal cell types (B1,B2, NK1,NK2, mDC1,mDC2) are now merged (see App. Fig.~\ref{app:scumapconfetti_tissue}), while all other information is kept.
For \ctsne, there are no clusters or manifolds directly visible and the result is a rather homogeneous ball, which show only a clear separation of CD4 cells, which make up the largest proportion of cells (compare App. Fig.~\ref{app:sc_ctsne_labels}).
Similar to \ctsne, \jedi shows no clear separation of cell types (see App. Fig.~\ref{app:sc_jedi}), but there is no bias towards CD4 cells as for \ctsne.
\slle does not show any discernable structure beyond the provided cell types.

	\section{Discussion and Conclusion}
\label{sec:discussion}

In the experiments, we show that both \jedi and \confetti correctly factor out prior knowledge and result embeddings that reveal previously hidden structure.
On synthetic data, when provided with euclidean distances capturing the structure which dominates the vanilla \tsne embedding, both our methods reveal the clustering
of the input data that is independent of the background knowledge. Our closest competitor \ctsne, is not able to factor out the background knowledge well, and its embeddings still show structure of the prior even when given the ground truth cluster labels. When we provide our methods the same information, they do recover the correct clustering, demonstrating their ability to reliably factor out both distances as well as label priors.

On real world data of flower images, where only distance matrices between attributes of the images are available, both \jedi and \confetti are able to factor out the property dominating the vanilla \tsne embedding, and organizing flowers according to number and shape of petals rather than by color. This shows that both are able applicable to real world settings, and unlike their competitors, can consider both input data and background knowledge in the form of different distance metrics. This provides us with the advantage to be applicable in domains where the input is only available as distance matrices, e.g. kernel-derived distances for unstructured data such as graphs or strings.

On the perhaps most widespread application of low dimensional embeddings, single cell gene expression data, we confirm that \tsne based approaches are a bad fit; both \ctsne and \jedi get lost in local detail, fail to factor out the prior knowledge, and are slow. When we combine our algorithm-independent approach, \confetti, with \umap we do arrive at meaningful embeddings from which the background knowledge has been factored out. In particular, when we provide it with prior knowledge of marker gene expression -- which is used to determine cell type -- we arrive at an embedding that organizes samples according to batch id and tissue type.
Conversely, if provided with tissue type as prior, we observe that previously separate clusters, that actually contain cells of the same type, are merged.

In practice, a distance based prior is usually the most natural representation of the background knowledge that cannot be cast easily into class labels. With binning or clustering using distances, information is lost for differences \emph{within} a class, and the relative information is lost \emph{between} classes. For example, effects based on geographic locations, age differences, or income that should be factored out from an embedding should be considered as is rather than cast into labels, which is not possible with current approaches. Our approaches do take into account these relative relationships by considering distances between pairs of samples, such as the geodesic distance between locations of two individuals or their difference in age, which would be lost when encoded by labels.

Overall, both our proposals solve the problem of factoring prior knowledge from low dimensional embeddings, and by allowing the prior knowledge to be specified as distances they are both applicable to a large number of data types and domains. While \jedi has a theoretically appealing objective, it inherits all strength and weaknesses of \tsne by which especially its runtime is an open problem. For \tsne, several methods have been proposed that significantly speed up the optimization, yet none of them seem directly applicable to \jedi. We leave their adaptation for future work.
Furthermore, although yielding overall good results, we would like to more closely investigate the parameters of \tsne and \jedi in a more principled way, as for new domains parameter settings for e.g. \tsne are mostly found by trial and error. Here, we tested our methods on different types distance metrics that cover different applications, and with single cell gene expressions one of the most important domains for low dimensional embeddings. In the future, we would also like to investigate how our methods perform when provided with more exotic types of background knowledge.

	\clearpage

	\bibliographystyle{plainnat}
	\bibliography{confetti_arXiv}

	\appendix
	\section{Method Appendix}
\label{sec:apx}

\subsection{Theoretical results}

\subsubsection{Working title: The high ground}
\label{app:grad}

In the following, we derive the gradient of the conditional cost function
\begin{equation}
C(Y) = \KL(P ~||~ Q) - \JS_{\beta}^{\alpha}(P' ~||~ Q).
\end{equation}
We split the gradient into two parts 
\begin{align} 
\frac{\delta C}{\delta y_i} = \frac{\delta \KL(P ~||~ Q) }{\delta y_i} - \frac{\delta \JS_{\beta}^{\alpha}(P' ~||~ Q)}{\delta y_i},
\end{align}
as the first is given by the \tsne~gradient with
\begin{align}
\frac{\delta \KL(P ~||~ Q) }{\delta y_i} = 4\sum_{j} (p_{ij}-q_{ij})\left(1+\norm{y_i - y_j}^2\right)^{-1}(y_i - y_j).
\end{align}
First, we write out the full parametrized Jensen-Shannon Divergence:
\begin{align}
\JS_{\beta}^{\alpha}(P' ~||~ Q) &=  \alpha \KL(P' ~||~ \beta Q + (1-\beta) P') + (1-\alpha)\KL(Q ~||~ \beta P' + (1-\beta) Q)\\
&= \alpha \sum_{i,j} p'_{ij} \log{\frac{p'_{ij}}{\beta q_{ij} + (1-\beta)p_{ij}}} + 
(1-\alpha) \sum_{i,j} q_{ij}\log{\frac{q_{ij}}{\beta p'_{ij} + (1-\beta)q_{ij}}}\\
&= \alpha \sum_{i,j} p'_{ij} \log{p'_{ij}} - p'_{ij} \log{\left(\beta q_{ij} + (1-\beta)p_{ij}\right)} \\
&\quad + (1-\alpha) \sum_{i,j} q_{ij}\log{q_{ij}} - q_{ij}\log{\left(\beta p'_{ij} + (1-\beta)q_{ij}\right)}
\end{align}
The expected similarities $p'_{ij}$ are symmetrized probabilities
\begin{equation}
p'_{ij} = \frac{p'_{j|i} + p'_{i|j}}{2n} \qquad \text{with } \qquad 
p'_{j|i} = \frac{\exp(-(D^Z_{ij})^2/2(\sigma'_i)^2)}{\sum_{k\neq i}\exp(-(D^Z_{ik})^2/2(\sigma'_i)^2)}
\end{equation}
and the low-dimensional similarities $q_{ij}$ are defined as
\begin{equation}
q_{ij} = \frac{(1+\norm{y_i - y_j}^2)^{-1}}{\sum_{k\neq l}(1+\norm{y_k - y_l}^2)^{-1}},
\end{equation}
where we abbreviate the distances by $d_{ij} = \norm{y_i - y_j}$ and the normalization term by $Z = \sum_{k\neq l} (1+(D^Y_{kl})^2)^{-1}$, leading to the succinct notation $q_{ij} = (1+(D^Y_{ij})^2)^{-1}Z^{-1}$.

As $y_i$ only affects the distances $D^Y_{ij}$ and $D^Y_{ji}$ by
\begin{align}
\frac{\delta d_{ij}}{\delta y_i} &= \frac{\delta \norm{y_i - y_j}}{\delta y_i} = \frac{y_i - y_j}{D^Y_{ij}},
\end{align}
we will derive the gradient with respect to $D^Y_{ij}$ 
\begin{align}
\frac{\delta C}{\delta y_i} 
&= \sum_j \left(\frac{\delta C}{\delta D^Y_{ij}} \frac{\delta D^Y_{ij}}{y_i} + \frac{\delta C}{\delta D^Y_{ji}} \frac{\delta D^Y_{ji}}{\delta y_i}\right)\\
&= 2\sum_j \frac{\delta C}{\delta D^Y_{ij}} \frac{y_i - y_j}{D^Y_{ij}}.
\end{align}
Note that the first term of the Jensen-Shannon cost function is constant with respect to $D^Y_{ij}$. Thus, we split the gradient 
\begin{align}
\frac{\delta \JS_\beta^{\alpha}(P' ~||~ Q)}{\delta D^Y_{ij}} 
&= - \alpha \sum_{k \neq l} \frac{\delta p'_{kl}\log{\left(\beta q_{kl} + (1-\beta)p_{kl}\right)}}{\delta D^Y_{ij}} \\
&\quad + (1-\alpha) \sum_{k \neq l} \frac{\delta q_{kl}\log{q_{kl}}}{\delta D^Y_{ij}} \\
&\quad - (1-\alpha) \sum_{k \neq l} \frac{\delta q_{kl}\log{\left(\beta p'_{kl} + (1-\beta)q_{kl}\right)}}{\delta D^Y_{ij}},
\end{align}
and compute the three parts separately. The building block are the derivatives of $Z$  and $q_{kl}$ with respect to $D^Y_{ij}$:
\begin{align}
\frac{\delta Z}{\delta D^Y_{ij}} &=  \sum_{k\neq l} \frac{\delta(1+(D^Y_{kl})^2)^{-1}}{\delta D^Y_{ij}}\\
&= \frac{\delta (1+(D^Y_{ij})^2)^{-1}}{\delta D^Y_{ij}}\\
&= (-1) (1+(D^Y_{ij})^2)^{-2}2D^Y_{ij}\\
&= -2D^Y_{ij}(1+(D^Y_{ij})^2)^{-2}\\
&= -2D^Y_{ij}q_{ij}Z(1+(D^Y_{ij})^2)^{-1}\\[1cm]
\frac{\delta q_{kl}}{\delta D^Y_{ij}} 
&= \frac{\delta \frac{(1+(D^Y_{ij})^2)^{-1}}{Z}}{\delta D^Y_{ij}}\\
&= \frac{1}{Z^2}\left(\frac{\delta (1 + (D^Y_{kl})^2)^{-1}}{\delta D^Y_{ij}} Z - (1 + (D^Y_{kl})^2)^{-1} \frac{\delta Z}{\delta D^Y_{ij}}\right)\\
&= \frac{1}{Z^2}\left(\frac{\delta (1 + (D^Y_{kl})^2)^{-1}}{\delta D^Y_{ij}} Z - (1 + (D^Y_{kl})^2)^{-1}\left(-2D^Y_{ij}(1+(D^Y_{ij})^2)^{-2}\right)\right)\\
&= \frac{1}{Z^2}\left(\frac{\delta (1 + (D^Y_{kl})^2)^{-1}}{\delta D^Y_{ij}} Z + (1 + (D^Y_{kl})^2)^{-1}2D^Y_{ij}(1+(D^Y_{ij})^2)^{-2}\right)\\
&= \frac{1}{Z}\frac{\delta ((1 + (D^Y_{kl})^2)^{-1})}{\delta D^Y_{ij}} + (1 + (D^Y_{kl})^2)^{-1}2D^Y_{ij} \frac{(1+d^2_{ij})^{-2}}{Z^2}\\
&= \frac{\delta ((1 + (D^Y_{kl})^2)^{-1})}{\delta D^Y_{ij}}\frac{1}{Z} + (1 + (D^Y_{kl})^2)^{-1}2D^Y_{ij} q_{ij}^2 \\
&= \begin{cases}
(1 + (D^Y_{ij})^2)^{-1}2D^Y_{ij}q_{ij} (q_{ij}-1)& \text{if }kl = ij\\
(1 + (D^Y_{kl})^2)^{-1} 2D^Y_{ij}q_{ij}^2 & \text{if }kl \neq ij.
\end{cases}
\end{align}

Next we derive the gradients of the three parts:
\begin{align}
\sum_{k\neq l} \frac{\delta p'_{kl} \log{(\beta q_{kl} + (1-\beta) p'_{kl})}}{\delta d_{ij}} 
& = \sum_{k \neq l} p'_{kl} \frac{\delta \log{(\beta q_{kl} + (1-\beta) p'_{kl})}}{\delta d_{ij}}\\
& = \sum_{k \neq l} \frac{ \beta p'_{kl}}{\beta q_{kl} + (1-\beta) p'_{kl}} \frac{\delta q_{kl}}{\delta d_{ij}}\\
& =  - \frac{\beta p'_{ij} (1+d_{ij}^2)^{-1} 2d_{ij}q_{ij}}{(\beta q_{ij} + (1-\beta) p'_{ij})} + \sum_{k \neq l} \frac{\beta p'_{kl}(1+d_{kl}^2)^{-1} 2d_{ij}q_{ij}^2}{\beta q_{kl} + (1-\beta) p'_{kl}}\\
& = 2d_{ij} \left(- \frac{\beta p'_{ij} q_{ij} (1+d_{ij}^2)^{-1}}{\beta q_{ij} + (1-\beta) p'_{ij}} +  q_{ij}^2\sum_{k \neq l} \frac{\beta p'_{kl} (1+d_{kl}^2)^{-1}}{\beta q_{kl} + (1-\beta) p'_{kl}} \right)\\
& = 2d_{ij} \left(-\frac{\beta p'_{ij} q_{ij}(1+d_{ij}^2)^{-1}}{\beta q_{ij} + (1-\beta) p'_{ij}} +  q_{ij}^2\sum_{k \neq l} \frac{\beta p'_{kl} q_{kl}Z}{\beta q_{kl} + (1-\beta) p'_{kl}} \right)\\
& = 2d_{ij} q_{ij}(1+d_{ij}^2)^{-1}\left(-\frac{\beta p'_{ij} }{\beta q_{ij} + (1-\beta) p'_{ij}} +  \sum_{k \neq l} \frac{\beta p'_{kl} q_{kl}}{\beta q_{kl} + (1-\beta) p'_{kl}} \right)
\end{align}

\begin{align}
\sum_{k \neq l}\frac{\delta q_{kl} \log{q_{kl}}}{\delta D^Y_{ij}} 
&= \sum_{k \neq l} q_{kl} \frac{\delta (\log{q_{kl}})}{\delta D^Y_{ij}} + \frac{\delta q_{kl}}{\delta D^Y_{ij}} \log{q_{kl}}\\
&= \sum_{k \neq l} q_{kl} \frac{1}{q_{kl}}\frac{\delta q_{kl}}{\delta D^Y_{ij}} + \frac{\delta q_{kl}}{\delta D^Y_{ij}} \log{q_{kl}}\\
&= \sum_{k \neq l} \frac{\delta q_{kl}}{\delta D^Y_{ij}} (1+ \log{q_{kl}})\\
&= -(1 + (D^Y_{ij})^2)^{-1}2D^Y_{ij} q_{ij}(1 + \log{q_{ij}})  + \sum_{k \neq l} (1 + (D^Y_{kl})^2)^{-1} 2D^Y_{ij}q_{ij}^2(1 + \log{q_{kl}})\\
&= 2D^Y_{ij}\left( -(1 + (D^Y_{ij})^2)^{-1} q_{ij}(1 + \log{q_{ij}})  + q_{ij}^2\sum_{k \neq l} (1 + (D^Y_{kl})^2)^{-1}(1 + \log{q_{kl}}) \right)\\
&= 2D^Y_{ij}q_{ij}(1 + (D^Y_{ij})^2)^{-1}\left( -(1 + \log{q_{ij}})  + \sum_{k \neq l} q_{kl}(1 + \log{q_{kl}}) \right)
\end{align}

\begin{align}
\sum_{k \neq l} &\frac{\delta q_{kl} \log{(\beta p'_{kl} + (1-\beta) q_{kl})}}{\delta D^Y_{ij}}\\
&= \sum_{k \neq l} q_{kl} \frac{\delta \log{(\beta p'_{kl} + (1-\beta) q_{kl})}}{\delta D^Y_{ij}} + \frac{\delta q_{kl}}{\delta D^Y_{ij}}\log{(\beta p'_{kl} + (1-\beta) q_{kl})}\\
&= \sum_{k \neq l} \frac{(1-\beta)q_{kl}}{\beta p'_{kl} + (1-\beta) q_{kl}} \frac{\delta q_{kl}}{\delta D^Y_{ij}} + \frac{\delta q_{kl}}{\delta D^Y_{ij}}\log{(\beta p'_{kl} + (1-\beta) q_{kl})}\\
&= \sum_{k \neq l} \frac{\delta q_{kl}}{\delta D^Y_{ij}} \left( \frac{(1-\beta)q_{kl}}{\beta p'_{kl} + (1-\beta) q_{kl}} + \log{(\beta p'_{kl} + (1-\beta) q_{kl})} \right)\\
&= - (1 + (D^Y_{ij})^2)^{-1}2D^Y_{ij}q_{ij} \left( \frac{(1-\beta)q_{ij}}{\beta p'_{ij} + (1-\beta) q_{ij}} + \log{(\beta p'_{ij} + (1-\beta) q_{ij})} \right) \\
&\quad + 
\sum_{k \neq l} (1 + (D^Y_{kl})^2)^{-1} 2D^Y_{ij}q_{ij}^2 \left( \frac{(1-\beta)q_{kl}}{\beta p'_{kl} + (1-\beta) q_{kl}} + \log{(\beta p'_{kl} + (1-\beta) q_{kl})} \right)\\
&= 2D^Y_{ij}\left(- (1 + (D^Y_{ij})^2)^{-1}q_{ij} \left( \frac{(1-\beta)q_{ij}}{\beta p'_{ij} + (1-\beta) q_{ij}} + \log{(\beta p'_{ij} + (1-\beta) q_{ij})} \right)\right) \\
&\quad + 
2D^Y_{ij}q_{ij}^2 \sum_{k \neq l} (1 + (D^Y_{kl})^2)^{-1}  \left( \frac{(1-\beta)q_{kl}}{\beta p'_{kl} + (1-\beta) q_{kl}} + \log{(\beta p'_{kl} + (1-\beta) q_{kl})} \right)\\
&= 2D^Y_{ij}q_{ij} (1 + (D^Y_{ij})^2)^{-1}\Bigg(- \frac{(1-\beta)q_{ij}}{\beta p'_{ij} + (1-\beta) q_{ij}} - \log{(\beta p'_{ij} + (1-\beta) q_{ij})}\\
&\quad + 
\sum_{k \neq l} q_{kl}  \left( \frac{(1-\beta)q_{kl}}{\beta p'_{kl} + (1-\beta) q_{kl}} + \log{(\beta p'_{kl} + (1-\beta) q_{kl})} \right) \Bigg)
\end{align}

Finally we combine them into the derivative of $ \JS_\beta^{\alpha}$ with respect to $D^Y_{ij}$
\begin{align}
\frac{(1 + (D^Y_{ij})^2)}{2D^Y_{ij}q_{ij}}\frac{\delta \JS_\beta^{\alpha}(P' ~||~ Q)}{\delta D^Y_{ij}} 
&= -\alpha \left(- \frac{\beta p'_{ij}}{\beta q_{ij} + (1-\beta) p'_{ij}} +  \sum_{k \neq l} \frac{\beta p'_{kl} q_{kl}}{\beta q_{kl} + (1-\beta) p'_{kl}} \right)\\
&\quad + (1-\alpha) \left( -(1 + \log{q_{ij}})  + \sum_{k \neq l}q_{kl}(1 + \log{q_{kl}}) \right)\\
&\quad - (1-\alpha) \left(-\frac{(1-\beta)q_{ij}}{\beta p'_{ij} + (1-\beta) q_{ij}} - \log{(\beta p'_{ij} + (1-\beta) q_{ij})} \right) \\
& \quad - (1-\alpha) \sum_{k \neq l} q_{kl} \left( \frac{(1-\beta)q_{kl}}{\beta p'_{kl} + (1-\beta) q_{kl}} + \log{(\beta p'_{kl} + (1-\beta) q_{kl})} \right)\\
&= \frac{\alpha\beta p'_{ij}}{\beta q_{ij} + (1-\beta) p'_{ij}} - \sum_{k \neq l} \frac{\alpha\beta p'_{kl}q_{kl}}{\beta q_{kl} + (1-\beta) p'_{kl}}\\
&\quad + (1-\alpha) \Bigg(- (1 + \log{q_{ij}}) + \frac{(1-\beta)q_{ij}}{\beta p'_{ij} + (1-\beta) q_{ij}} + \log{(\beta p'_{ij} + (1-\beta) q_{ij})}
\Bigg)\\
&\quad + (1-\alpha) \sum_{k \neq l} q_{kl}\Bigg(1 + \log{q_{kl}}- \frac{(1-\beta)q_{kl}}{\beta p'_{kl} + (1-\beta) q_{kl}} - \log{(\beta p'_{kl} + (1-\beta) q_{kl})}\Bigg)
\end{align}

and substitute this result in the gradient with respect to $y_i$ 
\begin{align}
\frac{\delta \JS_{\beta}^{\alpha}(P' ~||~ Q)}{\delta y_i} &= 2 
\sum_{j} \frac{\delta \JS_\beta^{\alpha}(P' ~||~ Q)}{\delta D^Y_{ij}} \frac{(y_i - y_j)}{D^Y_{ij}}\\
&= 4 \sum_{j\neq i} (y_i - y_j) (1 + (D^Y_{ij})^2)^{-1} q_{ij} \Bigg(
\frac{\alpha\beta p'_{ij}}{\beta q_{ij} + (1-\beta) p'_{ij}} -\sum_{k \neq l} \frac{\alpha\beta p'_{kl}q_{kl}}{\beta q_{kl} + (1-\beta) p'_{kl}}\\
&\quad + (1-\alpha) \Bigg(- (1 + \log{q_{ij}}) + \frac{(1-\beta)q_{ij}}{\beta p'_{ij} + (1-\beta) q_{ij}} + \log{(\beta p'_{ij} + (1-\beta) q_{ij})}
\Bigg)\\
&\quad + (1-\alpha)\sum_{k \neq l} q_{kl}\Bigg(1 + \log{q_{kl}}- \frac{(1-\beta)q_{kl}}{\beta p'_{kl} + (1-\beta) q_{kl}} - \log{(\beta p'_{kl} + (1-\beta) q_{kl})}\Bigg)
\Bigg).
\end{align}

\subsubsection{Bounding pJSD}
\label{app:pJSD}

Here, we will provide the proof for the upper bound of the pJSD following the idea of \cite{yamano2019some}.

\textbf{Theorem} (Upper bound on parametrized JS divergence)
 \textit{For $0 \leq \alpha \leq 1$ and $0 < \beta < 1$ the parametrized JS divergence is bounded from above by}
 \[
  \text{JS}^{\alpha}_{\beta} \leq -\log(1-\beta).
 \]

\begin{proof}
Let us first define a known bound for general f-divergences \cite{yamano2019some}.

\begin{lemma}
 Let $f^*(x)$ be the conjugate function of $f(x)$ defined as $f^*(x) = x f\big(\frac{1}{x}\big)$.
 The f-divergence satisfies the upper bound
 \[
  D_f(P~||~Q) \leq \lim_{x \rightarrow 0} \big(f(x) + f^*(x)\big).
 \]
 \label{lem:fdiv}
\end{lemma}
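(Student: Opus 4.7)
The plan is to exploit convexity of $f$---the standard hypothesis for $f$-divergences---to bound each summand of $D_f(P\,||\,Q) = \sum_i q_i f(p_i/q_i)$ by an affine function of $p_i/q_i$ whose intercept and slope are exactly $f(0) := \lim_{x \to 0^+} f(x)$ and $f^*(0) := \lim_{x \to 0^+} x\, f(1/x)$, respectively. Summing then produces $f(0) + f^*(0)$, which is precisely $\lim_{x\to 0^+}(f(x) + f^*(x))$.

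First I would reinterpret the conjugate's limit via the substitution $y = 1/x$:
\[
\lim_{x \to 0^+} f^*(x) \;=\; \lim_{y \to \infty} \frac{f(y)}{y},
\]
a limit that exists in $\R \cup \{+\infty\}$ by convexity of $f$ (monotonicity of secant slopes). The main ingredient is then a one-line pointwise affine bound: for any $t \geq 0$ and any $y > t$, convexity applied to the decomposition $t = (t/y)\cdot y + (1 - t/y)\cdot 0$ gives
\[
f(t) \;\leq\; \tfrac{t}{y}\, f(y) + \bigl(1 - \tfrac{t}{y}\bigr) f(0),
\]
and letting $y \to \infty$ collapses this to $f(t) \leq f(0) + t\, f^*(0)$.

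Substituting $t = p_i/q_i$ and multiplying by $q_i$ turns the slope term into $p_i f^*(0)$. Summing over $i$ and using $\sum_i p_i = \sum_i q_i = 1$ yields
\[
D_f(P\,||\,Q) \;=\; \sum_i q_i f(p_i/q_i) \;\leq\; f(0)\sum_i q_i + f^*(0)\sum_i p_i \;=\; f(0) + f^*(0),
\]
which is the claimed upper bound.

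The main obstacle, to state the argument cleanly, is boundary behaviour: I need to verify that $f(0)$ and $f^*(0)$ exist as extended reals (convexity handles both), to treat the atoms where $q_i = 0$ via the standard extended convention $0 \cdot f(p_i/0) := p_i\, f^*(0)$ (which is exactly compatible with the affine bound), and to interpret $0 \cdot \infty = 0$ so that the case $t = 0$ with $f^*(0) = +\infty$ goes through. When $f^*(0) = +\infty$ (as for vanilla KL) the bound is trivially satisfied, so the content of the lemma lies in the case where both limits are finite, as is the case for the skewed divergences relevant to the paper.
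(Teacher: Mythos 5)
Your proof is correct. Note first that the paper does not actually prove this lemma: it is imported as a known bound from Yamano (2019) with a citation and no argument, so there is no in-paper proof to compare against. What you supply is the classical Csisz\'ar--Vajda ``range of values'' argument: convexity yields the affine majorant $f(t) \le f(0) + t\,f^{*}(0)$ on $[0,\infty)$ (writing $t$ as a convex combination of $0$ and $y$ and letting $y\to\infty$), and summing $q_i f(p_i/q_i) \le q_i f(0) + p_i f^{*}(0)$ over $i$ uses only $\sum_i p_i=\sum_i q_i=1$. The identification $\lim_{x\to 0^{+}} x f(1/x) = \lim_{y\to\infty} f(y)/y$ and the existence of both limits in $(-\infty,+\infty]$ via monotonicity of secant slopes are exactly the right technical points, and your boundary conventions ($0\cdot f(p/0):=p\,f^{*}(0)$ and $0\cdot\infty:=0$) are the standard ones and are consistent with the affine bound. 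The only step worth spelling out in a full write-up is that ``convexity applied at $0$'' really means taking $t=\lambda y+(1-\lambda)\epsilon$ and letting $\epsilon\to 0^{+}$ first, since $f$ need only be defined on $(0,\infty)$; this is routine. For the two generators $f_1$ and $f_2$ actually used in the paper both limits are finite, so your bound is non-vacuous precisely where the paper needs it, and your self-contained derivation could replace the external citation.
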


We will use the definition of our parametrized Jensen Shannon Divergence
\[
\text{JS}^{\alpha}_{\beta} = \alpha \text{KL}(P~||~\beta Q + (1-\beta)P) + (1-\alpha)\text{KL}(Q~||~ P + (1-\beta)Q)
\]
to bound the individual terms, which are essentially skewed KL divergences, to then obtain the upper bound for pJSD.
The skewed $\beta$-KL divergence is given by 
\[
\text{KL}_{\beta} (P~||~Q) = \sum_{x \in X} p(x) \log\frac{p(x)}{(1-\beta)p(x) + \beta q(x)}.
\]
Choosing $f_1(x) = x \log\frac{x}{(1-\beta)x + \beta}$, we can see that the skewed $\beta$-KL divergence belongs to the family of f-divergences.
Similarly, choosing $f_2(x) = -\beta\log\big(\beta x + 1 - \beta\big)$, we get the reverse $\beta$-KL divergence $\text{KL}_{\beta} (Q~||~P)$.
We thus get

\begin{align}
 \text{JS}^{\alpha}_{\beta} =& ~\alpha \text{KL}_{\beta}(P~||~Q) + (1-\alpha)\text{KL}_{\beta}(Q~||~ P) \\
 \leq&~  \alpha \lim_{x \rightarrow 0} \big(f_1(x) + f_1^*(x)\big)   + (1-\alpha) \lim_{x \rightarrow 0} \big(f_2(x) + f_2^*(x)\big) \\
 =& ~\alpha \lim_{x \rightarrow 0} \bigg(x \log\frac{x}{(1-\beta)x + \beta} + \log\frac{1}{1-\beta + x\beta}\bigg) \\
 &+~ (1-\alpha) \lim_{x \rightarrow 0} \bigg(-\log\big(\beta x + 1 - \beta\big) - x\log\big(\frac{\beta}{x} + 1 - \beta \big)\bigg) \\
 =&~ \alpha \log\frac{1}{1-\beta} + (1-\alpha) -\log(1-\beta) \\
 =&~ -\log(1-\beta),
\end{align}

where the first inequality is achieved by bounding the individual skewed $\beta$-KL divergences using Lemma \ref{lem:fdiv}, the following equality is obtained by plugging in the definitions
of the corresponding f-divergences $f_1$ and $f_2$ with their respective conjugate, and the remainder is obtained by plugging in $x=0$ and using basic math.
This completes the proof.
\end{proof}

\subsubsection{Confetti maintains metric properties}
\label{app:metric}

\begin{theorem}
 Assuming that $D^X$ and $D^Z$ are based on valid metrics, for any $\lambda > 0$, $(D^X \ominus_{\lambda} D^Z)$ fulfills the metric axioms of non-negativity, symmetry, identity, and the triangle inequality.
\end{theorem}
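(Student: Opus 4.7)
The plan is to verify the four axioms directly from the definition of $\ominus_\lambda$, leveraging two facts: (i) $D^X$ and $D^Z$ are metrics, so each satisfies non-negativity, symmetry, identity, and the triangle inequality; and (ii) after the normalization announced just before the statement, every entry of $D^X$ and $D^Z$ lies in $[0,1]$. I will denote $F = D^X \ominus_\lambda D^Z$ throughout.

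Non-negativity, identity, and symmetry are essentially immediate. For $i=j$, $F_{ij}=D^X_{ii}=0$ by identity of $D^X$. For $i\neq j$, the bound $D^Z_{ij}\le 1$ gives $F_{ij}\ge D^X_{ij}+\tfrac{1}{2}\lambda\ge \tfrac{1}{2}\lambda>0$, which simultaneously delivers non-negativity and shows that $F_{ij}=0$ only when $i=j$. Symmetry of $F$ is inherited from the symmetry of $D^X$ and $D^Z$, since the off-diagonal rule is a symmetric combination of their entries and the diagonal rule is trivially symmetric.

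The main work lies in the triangle inequality $F_{ik}\le F_{ij}+F_{jk}$. When any two of $i,j,k$ coincide, one of the three terms collapses to $0$ and the inequality reduces to $F_{ab}\le F_{ab}$ or $0\le F_{ab}+F_{bc}$, which are immediate from non-negativity. The non-trivial case is when $i,j,k$ are pairwise distinct. Plugging in the definition and cancelling the $D^X$ terms via the triangle inequality for $D^X$, the required bound reduces to
\[
\tfrac{1}{2}\bigl(D^Z_{ij}+D^Z_{jk}-D^Z_{ik}\bigr)\;\le\;1.
\]
The hard part is precisely this inequality: the triangle inequality for $D^Z$ only bounds this quantity from below (by $0$), so the upper bound must come from the normalization. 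Since $D^Z_{ij},D^Z_{jk}\in[0,1]$ and $D^Z_{ik}\ge 0$, the left-hand side is at most $\tfrac{1}{2}(1+1-0)=1$, which closes the argument. Collecting the four verifications gives the theorem.
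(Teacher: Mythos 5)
Your proof is correct and follows essentially the same route as the paper: the paper isolates the adjustment term $f_{\lambda}(i,j) = -\tfrac{1}{2}\lambda D^Z_{ij} + \lambda$ (for $i\neq j$), shows it is itself a metric using $f_{\lambda}(i,j)\in[\tfrac{1}{2}\lambda,\lambda]$, and invokes the fact that a sum of metrics is a metric, whereas you cancel the $D^X$ terms inline --- but your key inequality $\tfrac{1}{2}(D^Z_{ij}+D^Z_{jk}-D^Z_{ik})\le 1$ is exactly the paper's observation that $\max f_{\lambda} = \lambda = 2\min f_{\lambda}$. If anything, your explicit handling of the degenerate cases where two of $i,j,k$ coincide is slightly more careful than the paper's.
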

\begin{proof}
 We will prove that
 \[            
f_{\lambda}(i,j) = \begin{cases} -\frac{1}{2}\lambda D^Z_{ij} + \lambda&~~~i\neq j, \\
                    0&~~~i=j,
                   \end{cases}
\]
fulfills the metric axioms and use the fact that the sum of two metrics is a metric.
 
 \begin{enumerate}
  \item Non-negativity: $\forall i,j. f_{\lambda}(i,j)\geq 0.$\\
  Using the definition of $f_{\lambda}(i,j)$, we get $\lambda \geq \frac{1}{2}\lambda D^Z_{ij}$.
  Since $D^Z_{ij}\in (0,1)$, we know that the inequality always holds.
  \item Symmetry: $\forall i,j. f_{\lambda}(i,j) = f_{\lambda}(j,i).$\\
  Follows directly from the symmetry property of $D^Z$.
  \item Identity:  $\forall i,j. f_{\lambda}(i,j) = 0 \iff i=j.$ \\
  We know that for $i\neq j$, $f_{\lambda}(i,j) \in [-\frac{1}{2}\lambda D^Z_{max} + \lambda,-\frac{1}{2}\lambda D^Z_{min} +  \lambda]$.
  Using $D^Z_{ij}\in (0,1)$, we get $f_{\lambda}(i,j) \in [\frac{1}{2}\lambda,\lambda]$. The other direction follows directly from the definition.
  \item Triangle Inequality: $\forall i,j,k. f_{\lambda}(i,j) \leq f_{\lambda}(i,k) + f_{\lambda}(k,j).$\\
  Using the insight $f_{\lambda}(i,j) \in [\frac{1}{2}\lambda,\lambda]$ from before, we get
  $f_{\lambda}(i,j) \leq \max_{x,y}(f_{\lambda}(x,y)) = \lambda = 2\cdot \frac{1}{2}\lambda = 2 \cdot \min_{x,y}(f_{\lambda}(x,y))\leq f_{\lambda}(i,k) + f_{\lambda}(k,j)$.
  \end{enumerate}
  Thus, $(D^X \ominus_{\lambda} D^Z)$ implements a metric.
\end{proof}

\subsubsection{Confetti is robust against uninformative priors}
\label{app:uniprior}

\begin{theorem}
 Assume the prior is uninformative, that is $D^Z \independent D^X$.
 Furthermore, the distances are normalized to $D^Z_{ij}\in [0,1]$, which is ensured by our method.
 For fixed $\lambda > 0$ let $(F_{\lambda})_{ij} = D^X \ominus_{\lambda} D^Z$ be the high dimensional distances with factored out prior.
 On expectation, the neighbourhoods of $X$ and $X$ with factored out prior are the same, that is $N^{F_{\lambda}}_k(i) =_{E[.]} N^{D^X}_k(i)$, for any $i$ and $k$.
\end{theorem}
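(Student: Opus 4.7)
The plan is to exploit linearity of the operator $\ominus_\lambda$ together with the independence $D^Z \independent D^X$ to show that, in expectation, each off-diagonal entry of $F_\lambda$ is $D^X_{ij}$ shifted by a quantity that does not depend on the pair $(i,j)$. Because shifting all off-diagonal entries of a distance matrix by a common constant leaves every $k$-nearest-neighbor set intact, the theorem then follows purely by an ordering argument.

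First I would apply linearity of expectation to the definition $(F_\lambda)_{ij} = D^X_{ij} - \frac{1}{2}\lambda D^Z_{ij} + \lambda$ to obtain, for $i \neq j$,
\[
E[(F_\lambda)_{ij}] \;=\; D^X_{ij} - \frac{1}{2}\lambda\, E[D^Z_{ij}] + \lambda,
\]
where $D^X_{ij}$ is pulled outside the expectation since $D^X \independent D^Z$ so that, conditional on $D^X$, only the prior term carries randomness.

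The key step is then to argue that $E[D^Z_{ij}]$ does not depend on the particular pair $(i,j)$. Reading the uninformative assumption as saying that the off-diagonal entries of $D^Z$ share a common marginal distribution---in particular, a common mean $c$---this gives $E[(F_\lambda)_{ij}] = D^X_{ij} + \bigl(\lambda - \tfrac{1}{2}\lambda c\bigr)$, i.e.\ $D^X_{ij}$ plus an offset that is independent of both $i$ and $j$. Fixing any point $i$ and any neighborhood size $k$, the maps $j \mapsto E[(F_\lambda)_{ij}]$ and $j \mapsto D^X_{ij}$ therefore induce the same total order on $\{j : j \neq i\}$; the indices of the $k$ smallest values coincide, which is precisely $N^{F_\lambda}_k(i) =_{E[\cdot]} N^{D^X}_k(i)$.

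The main obstacle I anticipate is formalizing what ``on expectation'' means for a set-valued object such as a $k$-NN set. The natural reading, and the one the argument above uses, is to take expectations of the distances first and then extract nearest neighbors; an alternative interpretation via $E[\mathbf{1}\{j \in N^{F_\lambda}_k(i)\}]$ would couple the entries of $D^Z$ through the highly nonlinear ranking operation and require stronger distributional assumptions such as exchangeability of the off-diagonal entries of $D^Z$. Once this semantic point is pinned down, the algebraic content reduces to linearity of expectation together with the invariance of $k$-nearest-neighbor rankings under a uniform off-diagonal additive shift, so no further technical machinery is needed.
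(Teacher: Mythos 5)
Your proof is correct and takes essentially the same route as the paper's: both reduce to the observation that, in expectation over $P_Z$, the operator $\ominus_\lambda$ shifts every off-diagonal entry of $D^X$ by a pair-independent constant, which leaves all $k$-nearest-neighbor orderings intact. The paper merely dresses this up as a contradiction argument over the largest element of $N^{F_\lambda}_k(i)$, and it silently relies on the common-mean reading of ``uninformative'' that you state explicitly (its step replacing $E_{P_Z}[D^Z_{il}]$ by $E_{P_Z}[D^Z_{ij}]$ requires exactly that), so your version is, if anything, slightly more careful about the needed assumption and about what ``on expectation'' means for a set-valued quantity.
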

\begin{proof}
 Let us first look at what on expectation the distance of some nearest point of $i$ is under $F_{\lambda}$.
 We now pick the $j \in N^{F_{\lambda}}_k(i)$ with the largest distance.
 Note that the distances $D$ define a fixed ordering and thus for each $j$ there is a $k'$ such that $j$ has the largest distance in $N^{F_{\lambda}}_{k'}(i)$.
 We prove by contradiction. Let $j \not\in N^{D^X}_k(i)$, hence
  there are $k$ indices given by $L=N^{D^X}_k(i)$ with $\forall l\in L.~D^X_{il} < D^X_{ij}$.
  Let $P_X$ and $P_Z$ be the generating distributions of $D_X$ and $D_Z$, respectively.
  We get
  \begin{align}
   D^X_{il} &< D^X_{ij} \\
   D^X_{il} - \frac{1}{2}\lambda E_{P_Z}[D^Z_{il}] + \lambda &< D^X_{ij} - \frac{1}{2}\lambda E_{P_Z}[D^Z_{il}] + \lambda \\
   E_{P_Z}[D^X_{il} - \frac{1}{2}\lambda D^Z_{il} + \lambda] &< E_{P_Z}[D^X_{ij} - \frac{1}{2}\lambda D^Z_{ij} + \lambda] \\
   E_{P_Z}[(F_{\lambda})_{il}] &< E_{P_Z}[(F_{\lambda})_{ij}],
  \end{align}
  where  line 1 to 2 is obtained using that $D^X \independent D^Z$ and $E_{P_Z}[D^Z_{il}]\in[0,1]$, and line 2 to 3 is obtained by using linearity of expectation and $D^X \independent D^Z$.
  In other words, for all $l$ on expectation the adapted distance to $i$, $(F_{\lambda})_{il}$, is smaller than the adapted distance between $i$ and $j$, $(F_{\lambda})_{ij}$.
  This means that $L$ contains $k$ indices that have smaller
  distances under $F_{\lambda}$ than $j$, hence $j\not\in N^{F_{\lambda}}_k(i)$.
  This is a contradiction.
\end{proof}

\subsubsection{Confetti pseudocode}

\begin{algorithm}
	\caption{\confetti}
	\label{alg:confetti}
	\KwData{Data distances $D^X$, prior distances $D^Z$, penalty $\lambda$}
	\KwResult{Adjusted distances $D^X \ominus_{\lambda} D^Z$}
	$D^X \gets \frac{D^X}{\max{D^X}}$\\
	$D^Z \gets \frac{D^Z}{\max{D^Z}}$\\
	\ForEach {$i \in [1,n]$} {
		\ForEach {$j \in [i+1,n]$}{
			$\quad (D^X \ominus_{\lambda} D^Z)_{ij} \gets D^X_{ij} - \frac{1}{2}\lambda D^Z_{ij} + \lambda$\\
			$\quad (D^X \ominus_{\lambda} D^Z)_{ji} \gets (D^X \ominus_{\lambda} D^Z)_{ij}$\\
		}
	} 
	\Return {$(D^X \ominus_{\lambda} D^Z)$}
\end{algorithm}

\section{Experiment Appendix}

\subsection{Inverse supervised LLE (\slle)}
\label{app:slle}

In SLLE \citep{de2003supervised}, class labels given as additional input are used to emphasize the structure within the classes. Thus, SLLE improves the separation between classes by homogenizing the local neighbourhood of each sample with respect to the class labels.
This is done by pushing away data points with different class labels, increasing the distance of such points by a fraction of the maximum distance observed in the data.
This can be summarized by the following update procedure for the sample distances, using the same notation as in the main paper
\[
 D' = D^X + \alpha \max(D^X) \hat{D},~~ 0\leq\alpha\leq 1,
\]
with $\hat{D}$ the binary matrix with $\hat{D}_{ij}=1$ if samples $x_i$ and $x_j$ are from different classes.

In our work, we are interested in factoring out the prior, which is essentially the opposite of SLLE.
By manipulating the objective from above, we can however achieve our goal of factoring our prior knowledge, by optimizing for
\[
 D'= D^X + \alpha \max(D^X)(1- \hat{D}),~~ 0\leq\alpha\leq 1.
\]
We call this method \slle.
This method can still only deal with labels as input, but for that it works surprisingly well in our experiments, yielding better results than the current state of the art \ctsne.

\subsection{Choosing the hyperparameters}
\label{app:hyper}

\begin{figure}
	\includegraphics[width=\textwidth]{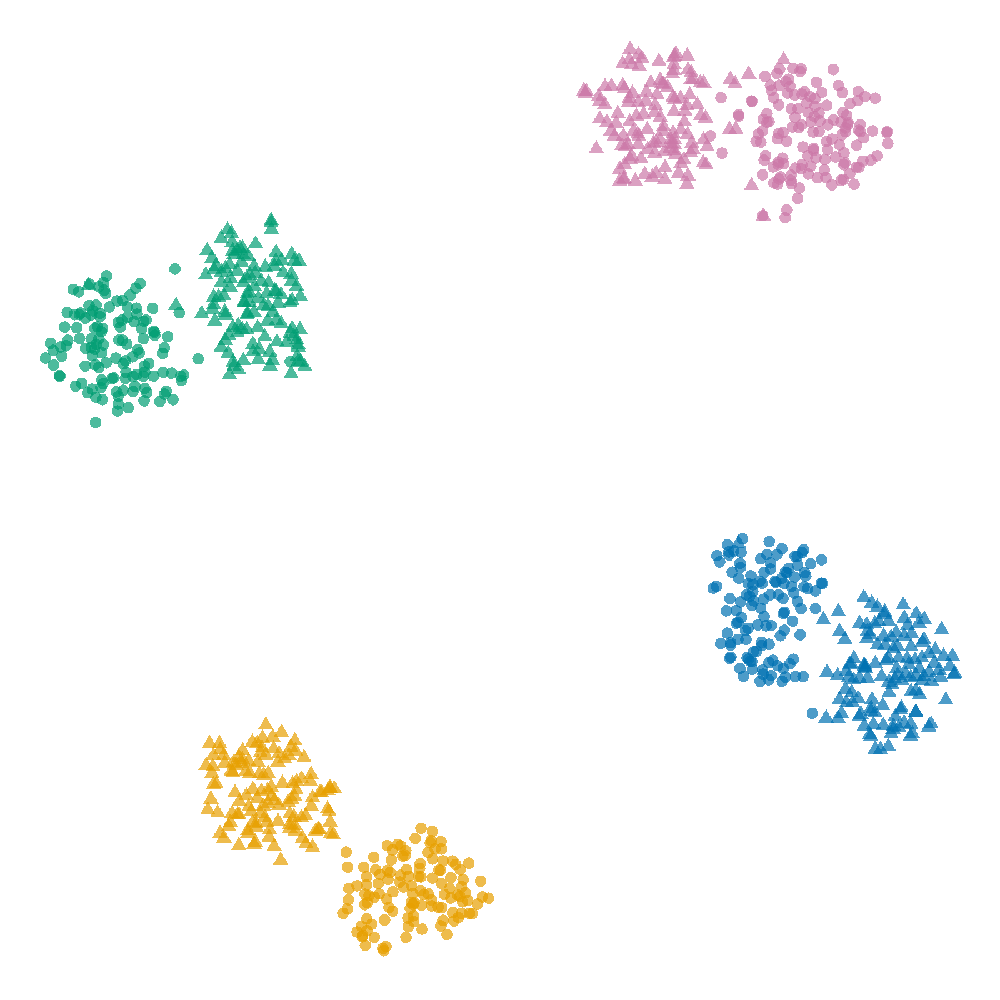}
	\caption{Visualization of the synthetic data set for parameter tuning based on \tsne. We see samples with the same colors clustered together (clustering planted in dimensions A) as well as samples with the same shape (clustering planted in dimensions B).}
	\label{app:hyper_tsne}
\end{figure}

All methods involved in the experiment section come with a set of hyperparameters that the user has to fix.
We decided to generate a small synthetic data set, generated differently than the one used for our experiments, to settle for a good set of parameters for each of the methods.
In particular, we explore for \confetti coupled with \tsne the parameter $\lambda\in\{0.2,0.4,0.5,0.6,0.8,1,1.2,1.4,1.5,1.6,1.8,2,2.5,3,4\}$ and perplexity $30, 50, 200, 400$, for \jedi we search for $\alpha\in\{0,0.2,0.5,0.8,1\}$, $\beta\in\{0.8,0.99,1\}$, and perplexity and prior perplexity $50, 200, 400$ each. For \ctsne we search for $\beta\in\{1e^{-2},1e^{-3},...,1e^{-7}\}$, and for \slle the number of neighbours $k\in\{10,30,50,100,200,400,500,600\}$ and parameter $\alpha\in\{0.0001,0.001,0.1,0.2,...,1\}$.
We note that the authors of \ctsne suggest to use a small value $\beta=0.01$, but use $\beta=0.0001$ for their experiment, which led us to the search grid for their method.

\begin{figure}
	\begin{subfigure}[b]{0.48\linewidth}
		\includegraphics[width=\linewidth]{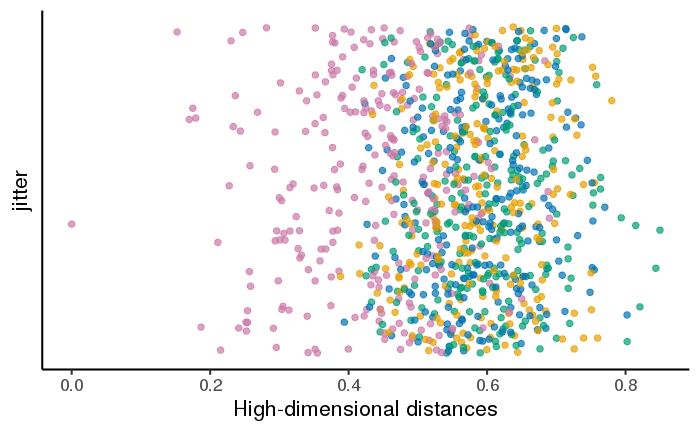}
		\caption{Euclidean distances to all other samples computed over all dimensions.}
		\label{fig:high-dimensional-euclidean-basic-data}
	\end{subfigure}\hfill
	\begin{subfigure}[b]{0.48\linewidth}
		\includegraphics[width=\linewidth]{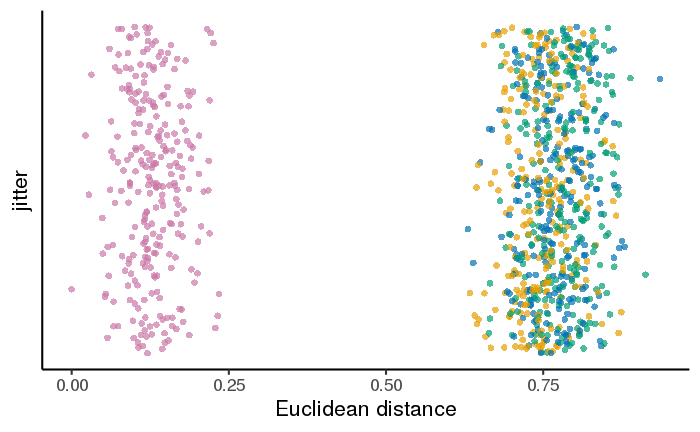}
		\caption{Euclidean distances to all other samples computed only over dimensions d1 to d4. We observe the clustering planted into those dimensions.}
		\label{fig:euclidean-basic-data}
	\end{subfigure}\hfill
	\begin{subfigure}[b]{0.48\linewidth}
		\includegraphics[width=\linewidth]{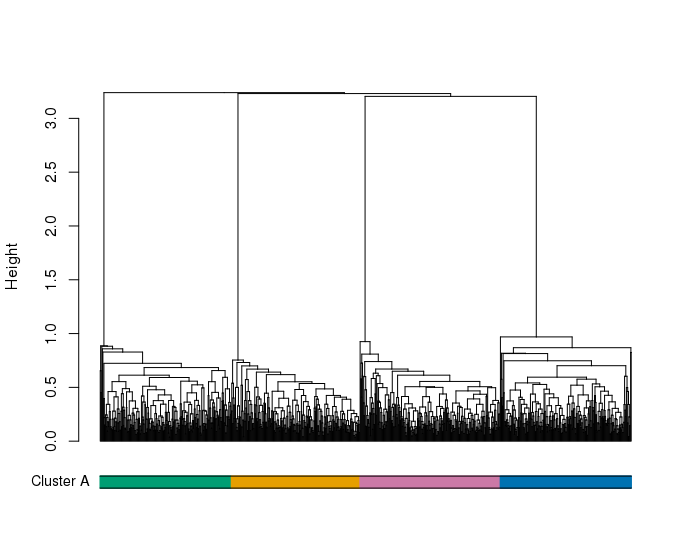}
		\caption{Hierarchical clustering with average linkage based on dimensions d1 to d4. The four different clusters are easily identifiable.}
		\label{fig:dendrogram-basic-data}
	\end{subfigure}\hfill
	\begin{subfigure}[b]{0.48\linewidth}
		\includegraphics[width=\linewidth]{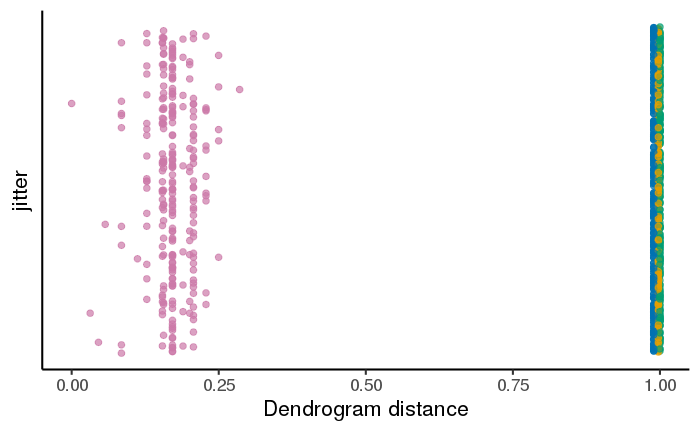}
		\caption{Scaled dendrogram distances from the pink reference sample to all other samples. It is defined as the height of their lowest common ancestor node in the dendrogram given in (c).}
		\label{fig:dendrogram-distance-basic-data}
	\end{subfigure}\hfill
	\caption{Visualization of different distances for one sample from the pink cluster to all other samples in the synthetic data set for parameter tuning.}
	\label{fig:dtrain-distances}
\end{figure}

\begin{figure}
	\begin{subfigure}[b]{0.3\linewidth}
		\includegraphics[width=\linewidth]{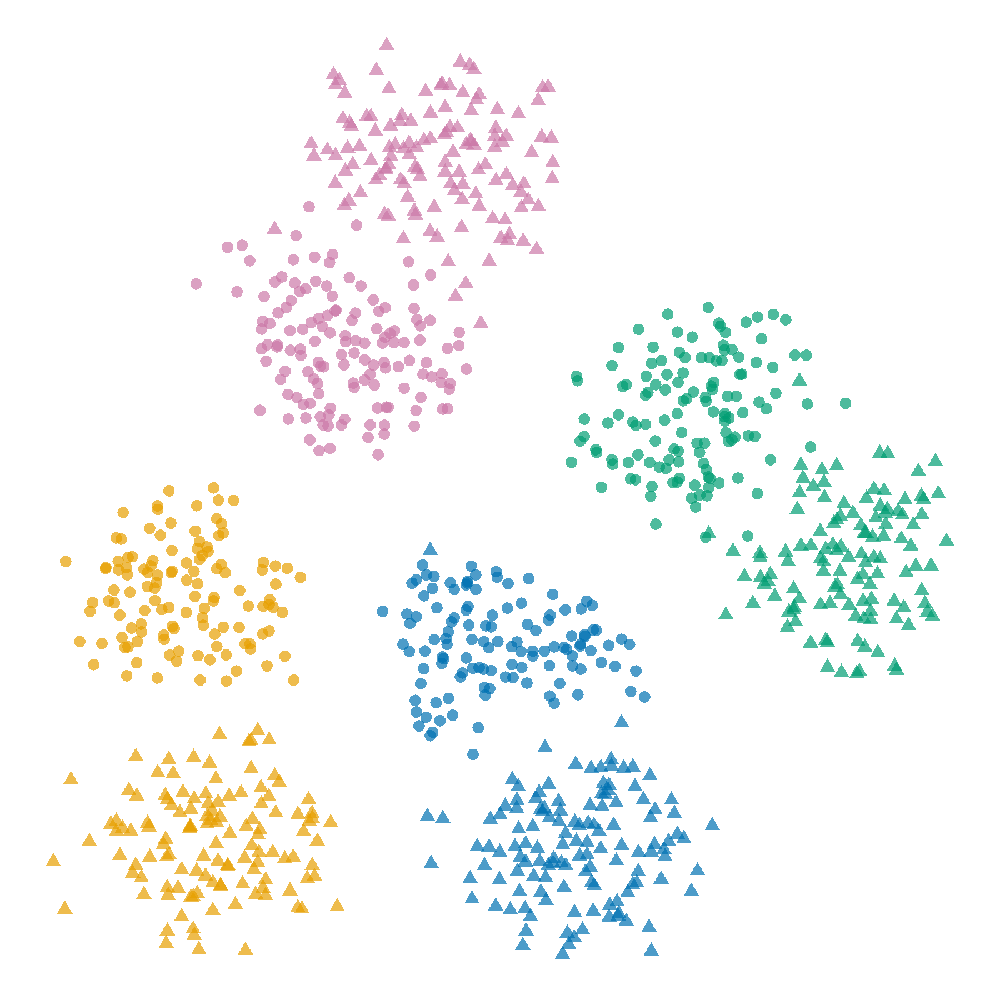}
		\caption{$\distmix = 0.8, \divmix = 0$}
	\end{subfigure}\hfill
	\begin{subfigure}[b]{0.3\linewidth}
		\includegraphics[width=\linewidth]{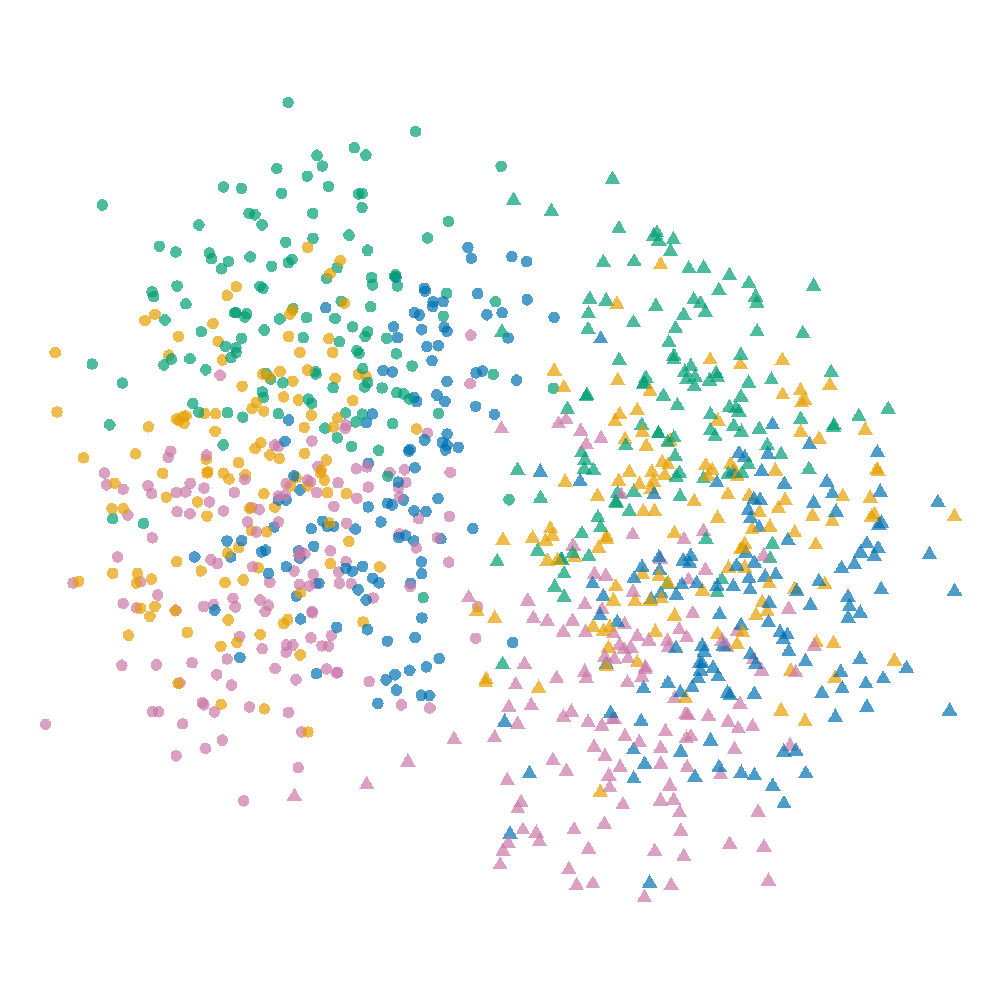}
		\caption{$\distmix = 0.99, \divmix = 0$}
	\end{subfigure}\hfill
	\begin{subfigure}[b]{0.3\linewidth}
		\includegraphics[width=\linewidth]{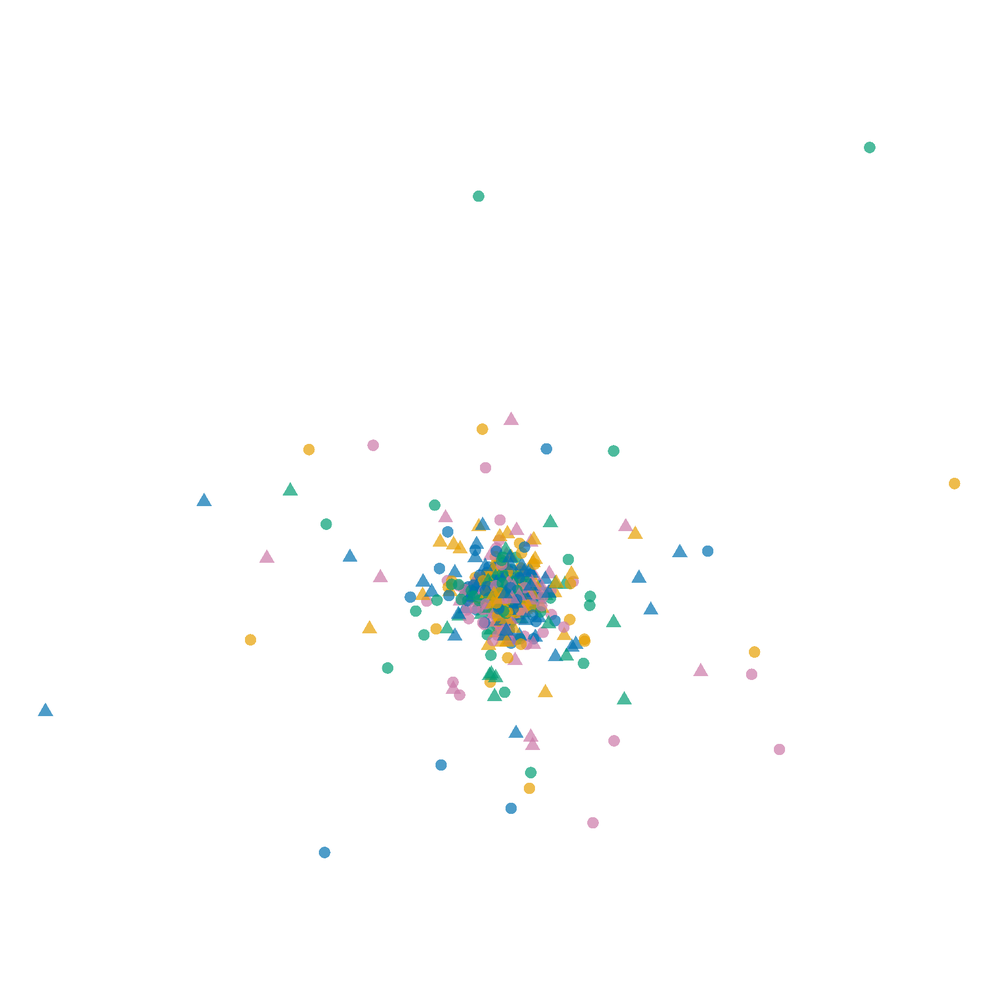}
		\caption{$\distmix = 1, \divmix = 0$}
	\end{subfigure}\hfill
	
	\begin{subfigure}[b]{0.3\linewidth}
		\includegraphics[width=\linewidth]{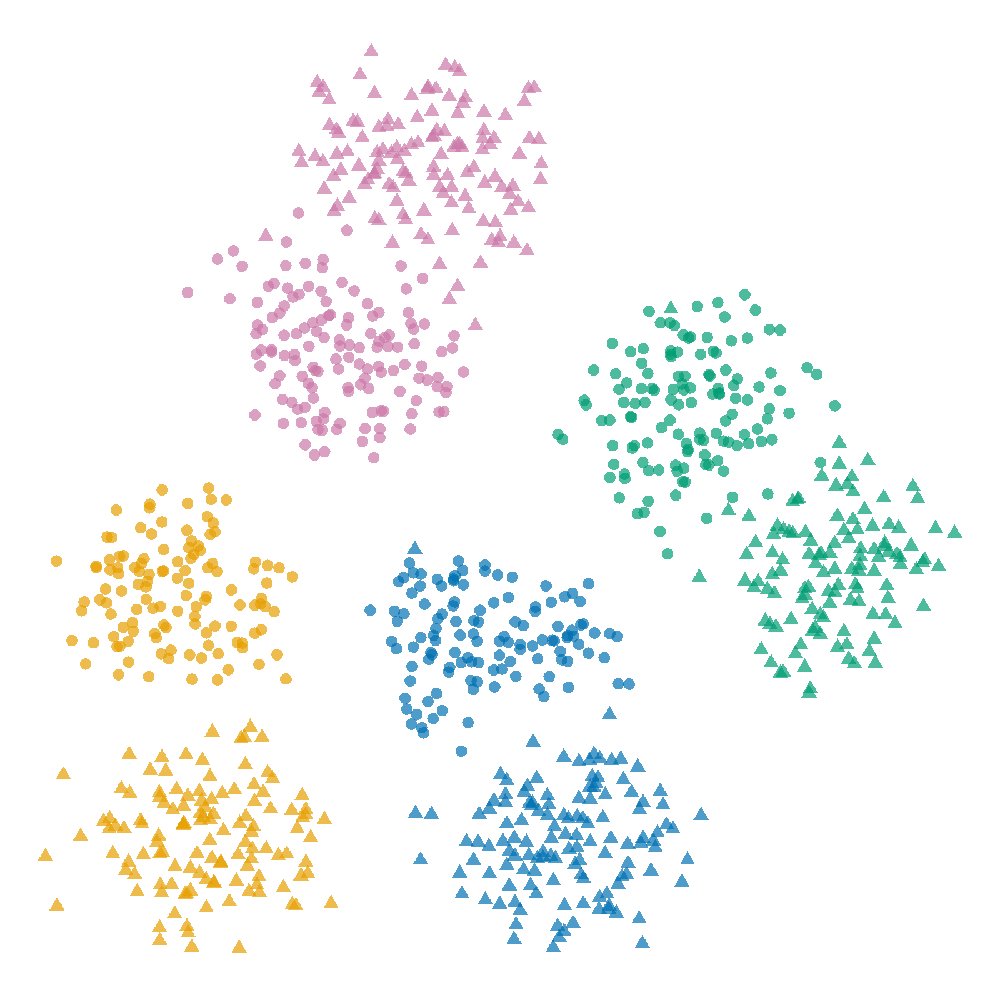}
		\caption{$\distmix = 0.8, \divmix = 1$}
	\end{subfigure}\hfill
	\begin{subfigure}[b]{0.3\linewidth}
		\includegraphics[width=\linewidth]{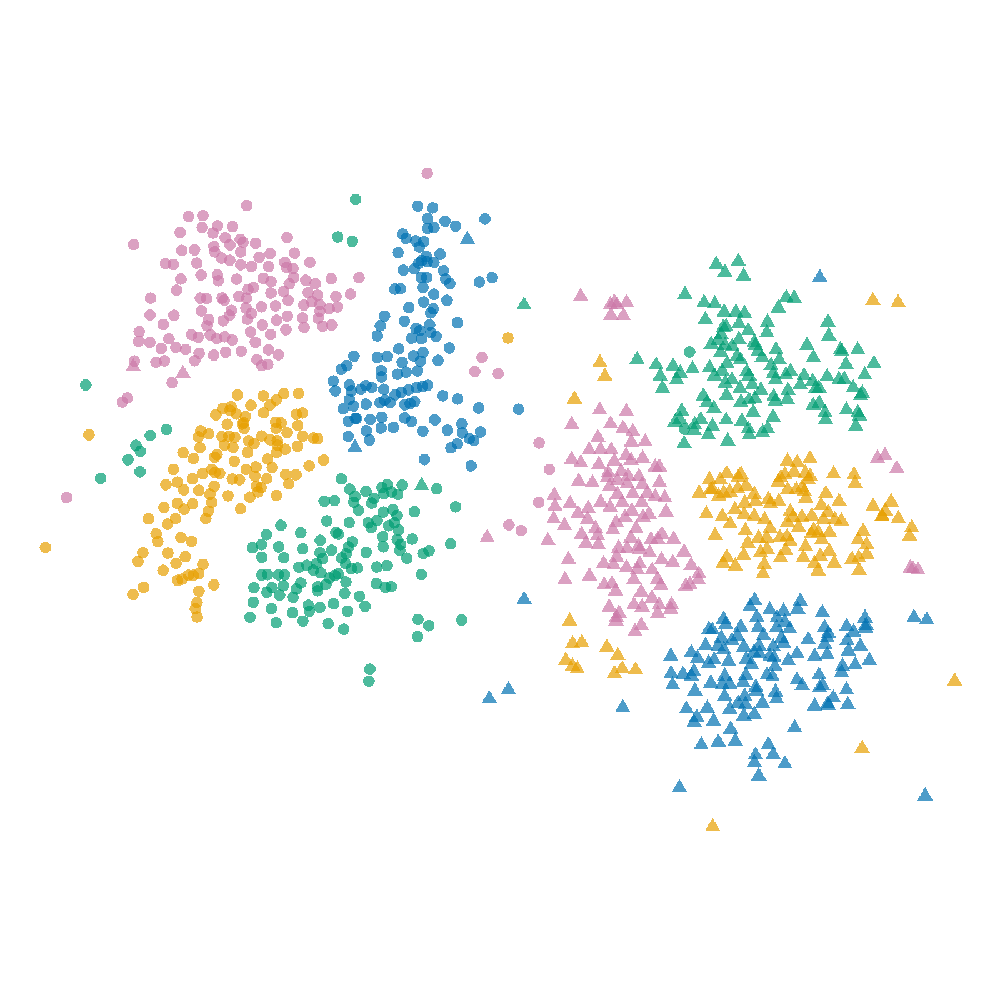}
		\caption{$\distmix = 0.99, \divmix = 1$}
	\end{subfigure}\hfill
	\begin{subfigure}[b]{0.3\linewidth}
		\includegraphics[width=\linewidth]{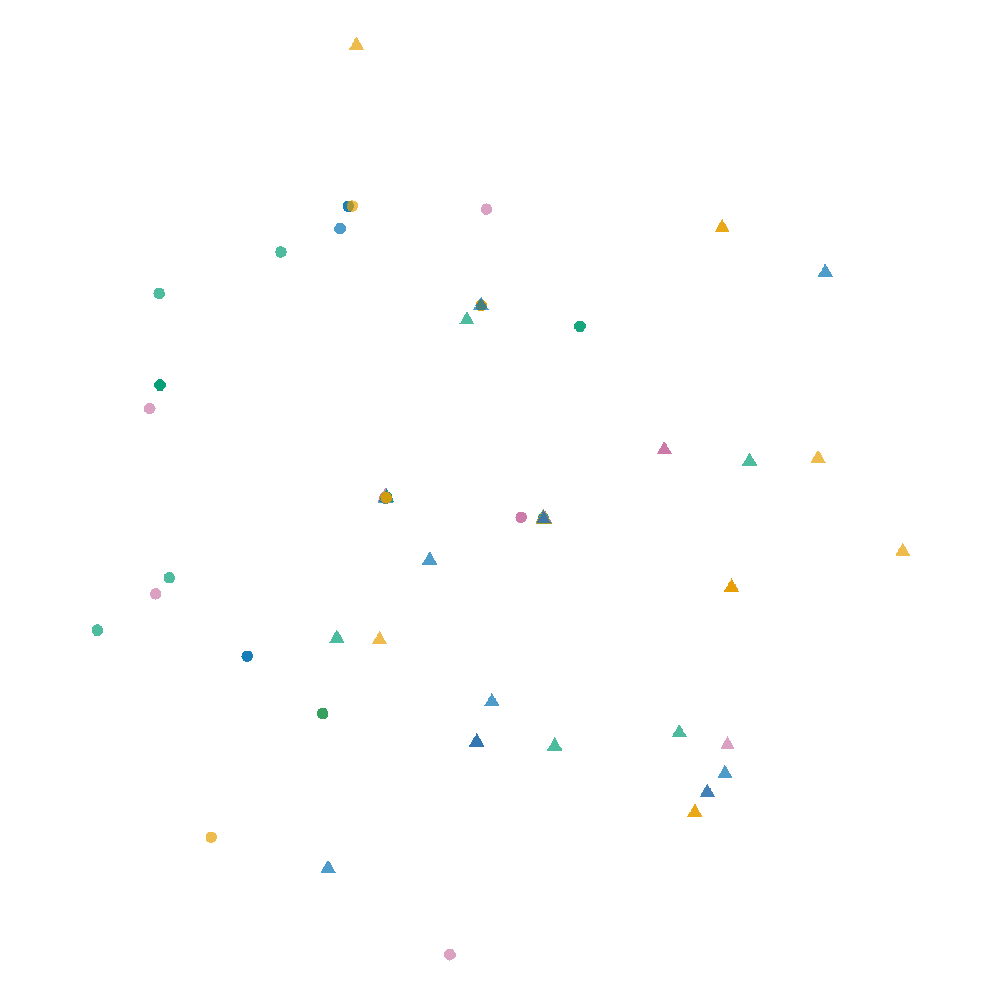}
		\caption{$\distmix = 1, \divmix = 1$}
	\end{subfigure}\hfill
	\caption{Visualizations of the synthetic data set for parameter tuning by \jedi with a Euclidean distance prior based on the dimensions in A (color clusters). Here we compare the influence of the smoothing parameters $\alpha$ and $\beta$.}
	\label{fig:js-grid-distmix}
\end{figure}

\begin{figure}
	\begin{subfigure}[b]{0.3\linewidth}
		\includegraphics[width=\linewidth]{expres/dtrain/jsmethod/euclidean/data_basic_jstsne_embedding_prior_A_dist_euclidean_iter_2000_perp_200_KLmix_0_distmix_0_99_priorPerp_50}
		\caption{$\text{prior perplexity} = 50, \divmix = 0$}
	\end{subfigure}\hfill
	\begin{subfigure}[b]{0.3\linewidth}
		\includegraphics[width=\linewidth]{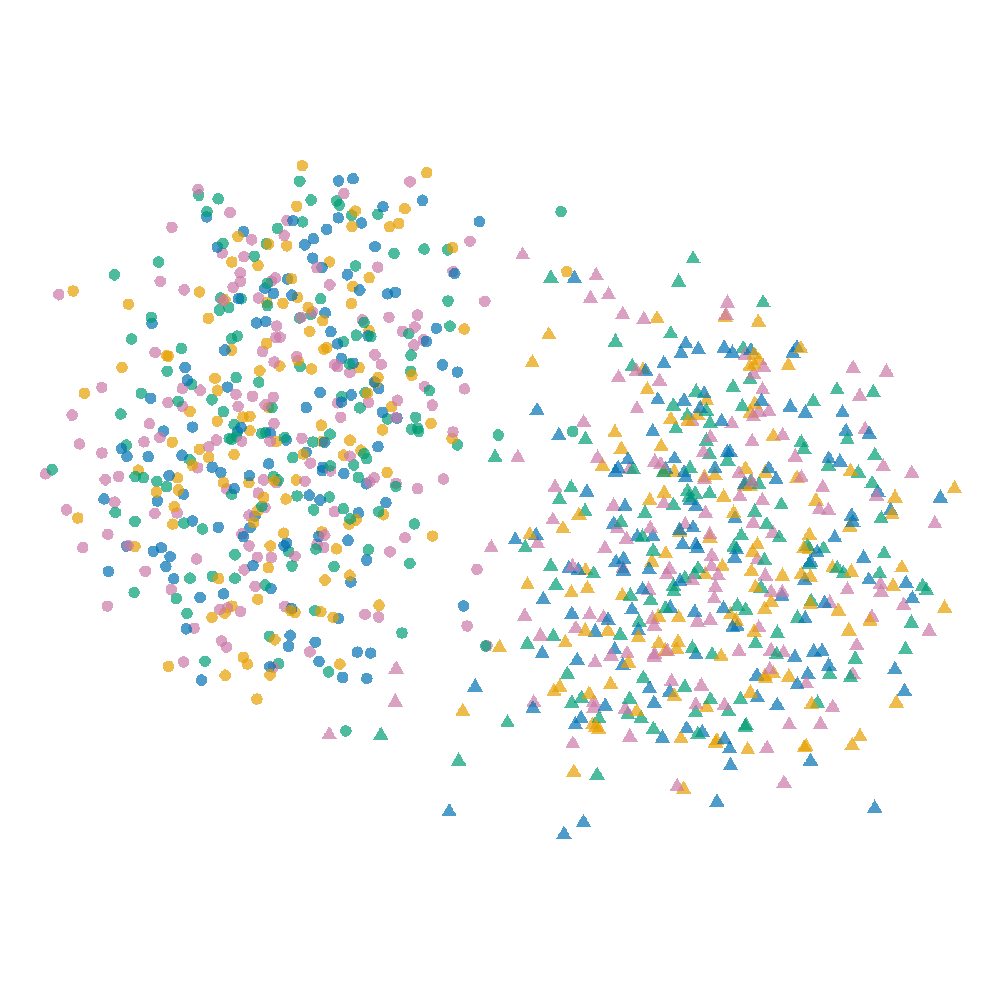}
		\caption{$\text{prior perplexity} = 200, \divmix = 0$}
	\end{subfigure}\hfill
	\begin{subfigure}[b]{0.3\linewidth}
		\includegraphics[width=\linewidth]{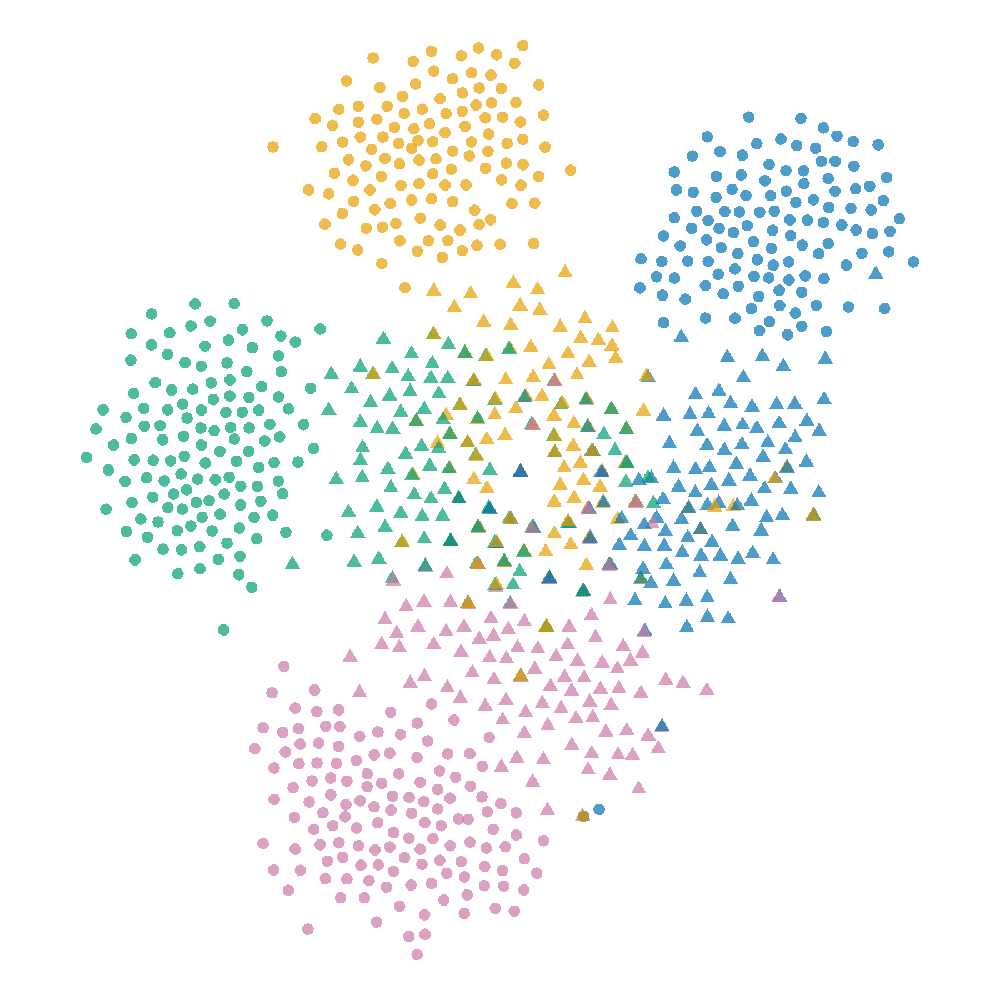}
		\caption{$\text{prior perplexity} = 400, \divmix = 0$}
		\label{fig:jstsne-prior-perplexity-artifact}
	\end{subfigure}\hfill
	\begin{subfigure}[b]{0.3\linewidth}
		\includegraphics[width=\linewidth]{expres/dtrain/jsmethod/euclidean/data_basic_jstsne_embedding_prior_A_dist_euclidean_iter_2000_perp_200_KLmix_1_distmix_0_99_priorPerp_50}
		\caption{$\text{prior perplexity} = 50, \divmix = 1$}
	\end{subfigure}\hfill
	\begin{subfigure}[b]{0.3\linewidth}
		\includegraphics[width=\linewidth]{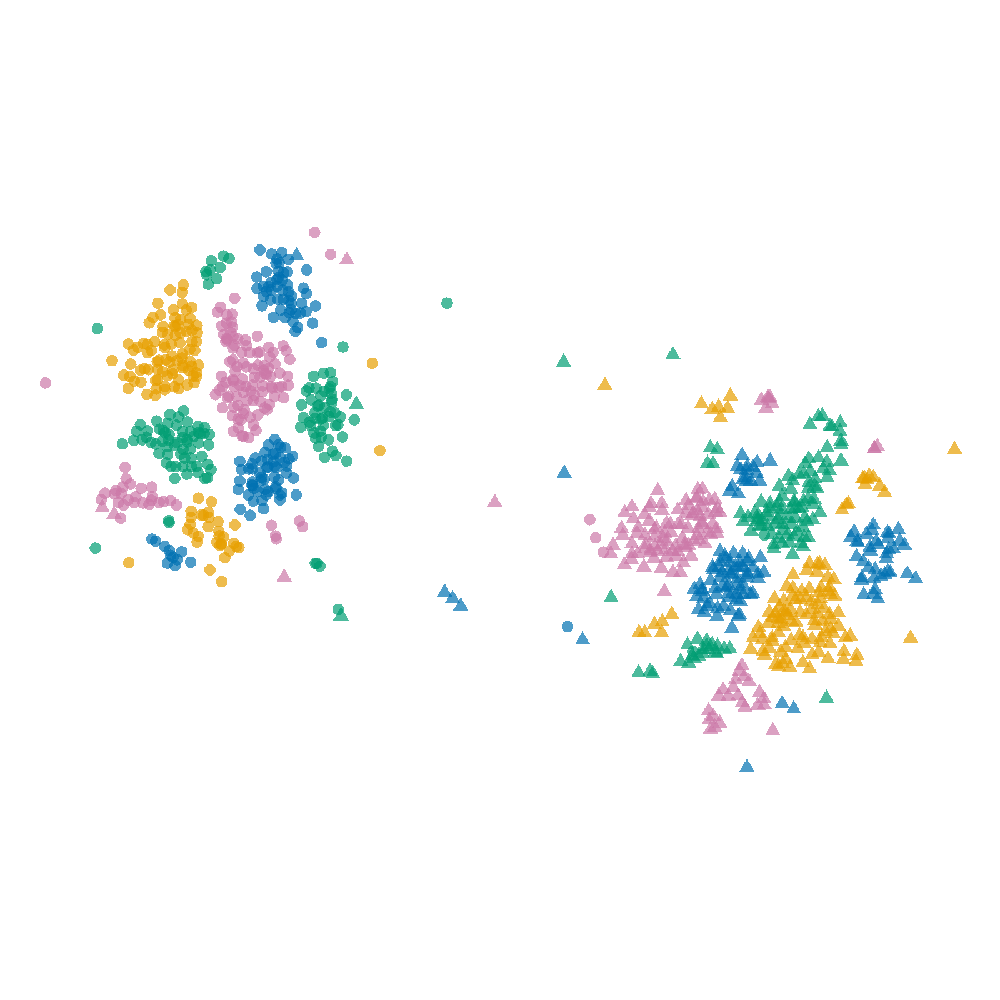}
		\caption{$\text{prior perplexity} = 200, \divmix = 1$}
	\end{subfigure}\hfill
	\begin{subfigure}[b]{0.3\linewidth}
		\includegraphics[width=\linewidth]{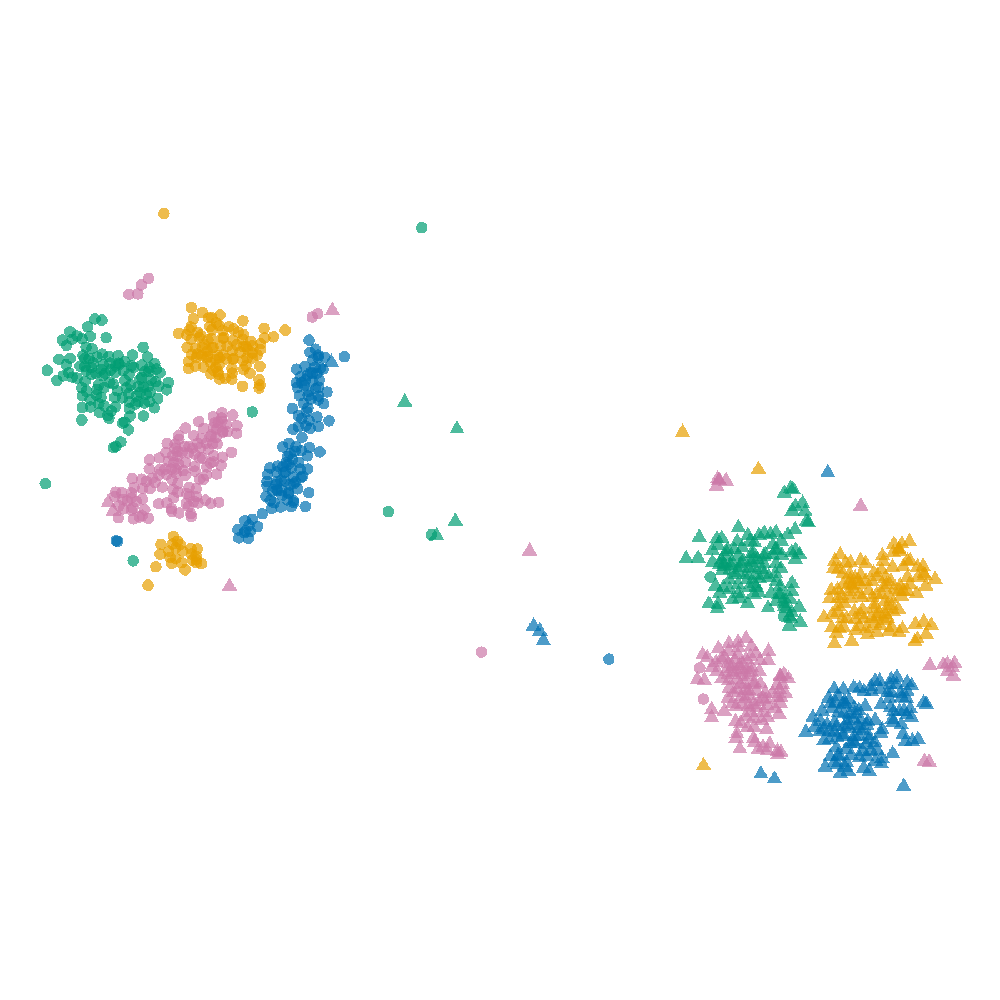}
		\caption{$\text{prior perplexity} = 400, \divmix = 1$}
	\end{subfigure}\hfill
	\caption{Visualizations of the synthetic data set for parameter tuning by \jedi with a Euclidean distance prior based on the dimensions in A (color clusters). We compare the effect of an increasing prior perplexity on both divergences determined by \divmix. }
	\label{fig:js-grid-prior-perplexity}
\end{figure}

\begin{figure}
	\begin{subfigure}[b]{0.3\linewidth}
		\includegraphics[width=\linewidth]{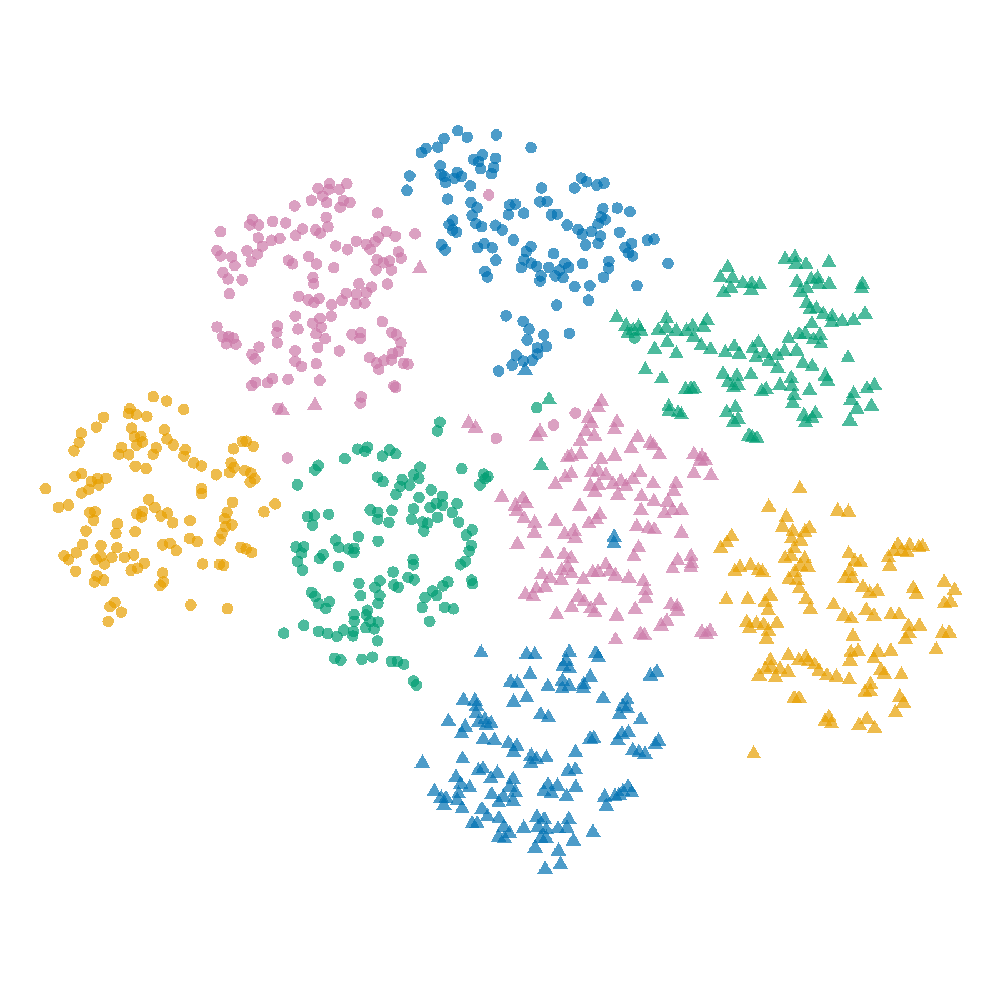}
		\caption{$\text{perplexity} = 50, \divmix = 0$}
	\end{subfigure}\hfill
	\begin{subfigure}[b]{0.3\linewidth}
		\includegraphics[width=\linewidth]{expres/dtrain/jsmethod/euclidean/data_basic_jstsne_embedding_prior_A_dist_euclidean_iter_2000_perp_200_KLmix_0_distmix_0_99_priorPerp_50}
		\caption{$\text{perplexity} = 200, \divmix = 0$}
	\end{subfigure}\hfill
	\begin{subfigure}[b]{0.3\linewidth}
		\includegraphics[width=\linewidth]{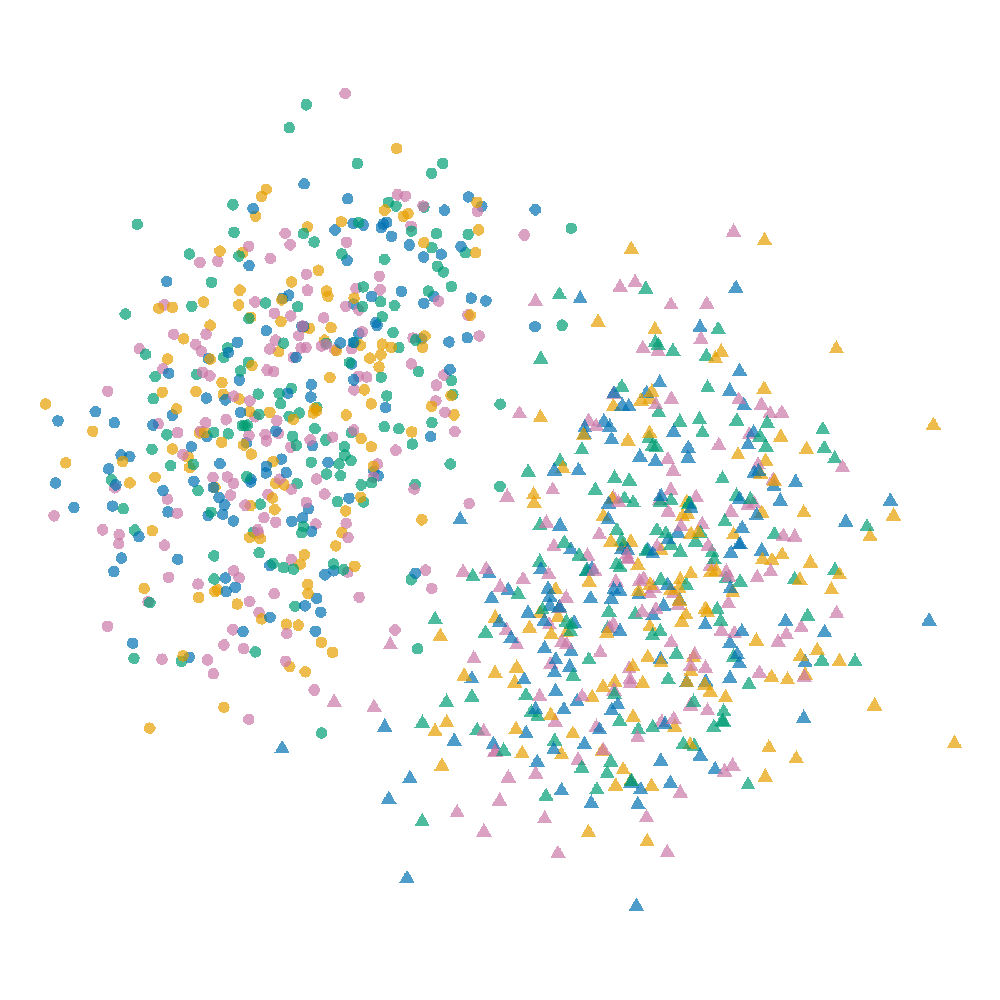}
		\caption{$\text{perplexity} = 400, \divmix = 0$}
	\end{subfigure}\hfill

	\begin{subfigure}[b]{0.3\linewidth}
		\includegraphics[width=\linewidth]{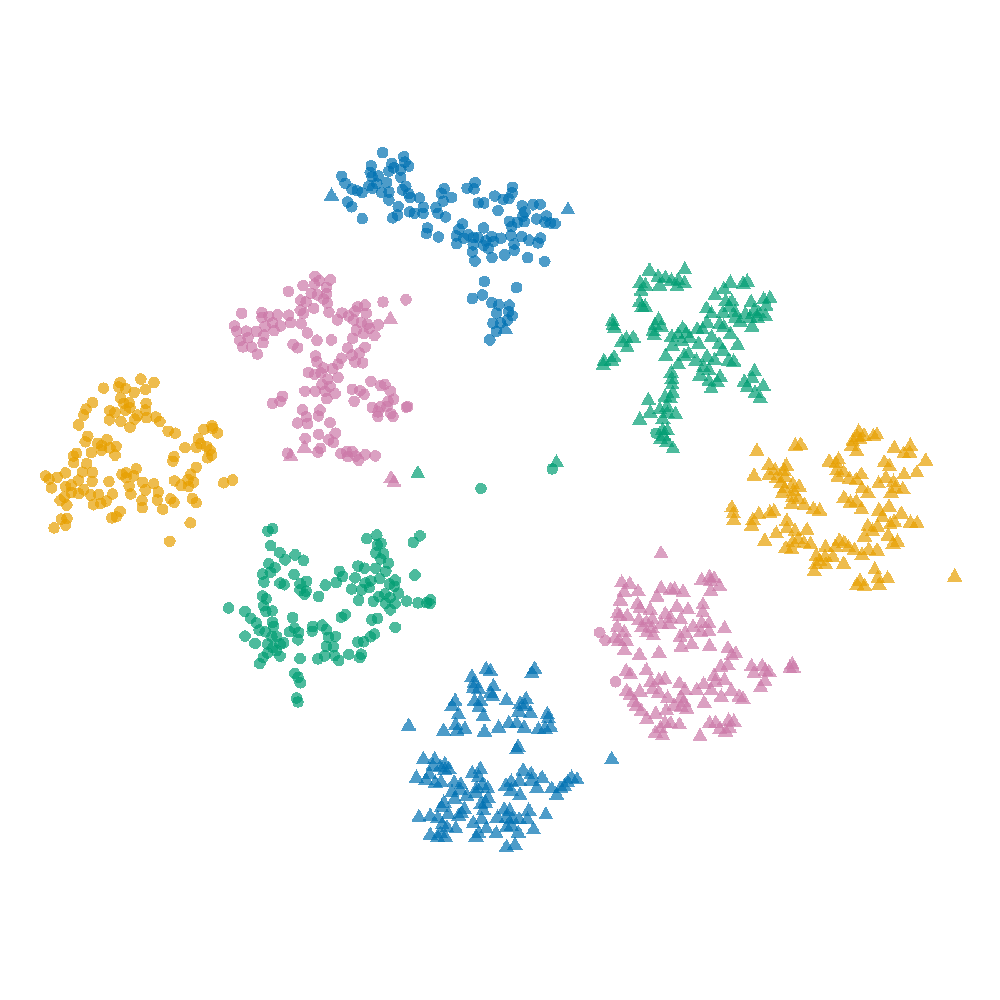}
		\caption{$\text{perplexity} = 50, \divmix = 0.5$}
	\end{subfigure}\hfill
	\begin{subfigure}[b]{0.3\linewidth}
		\includegraphics[width=\linewidth]{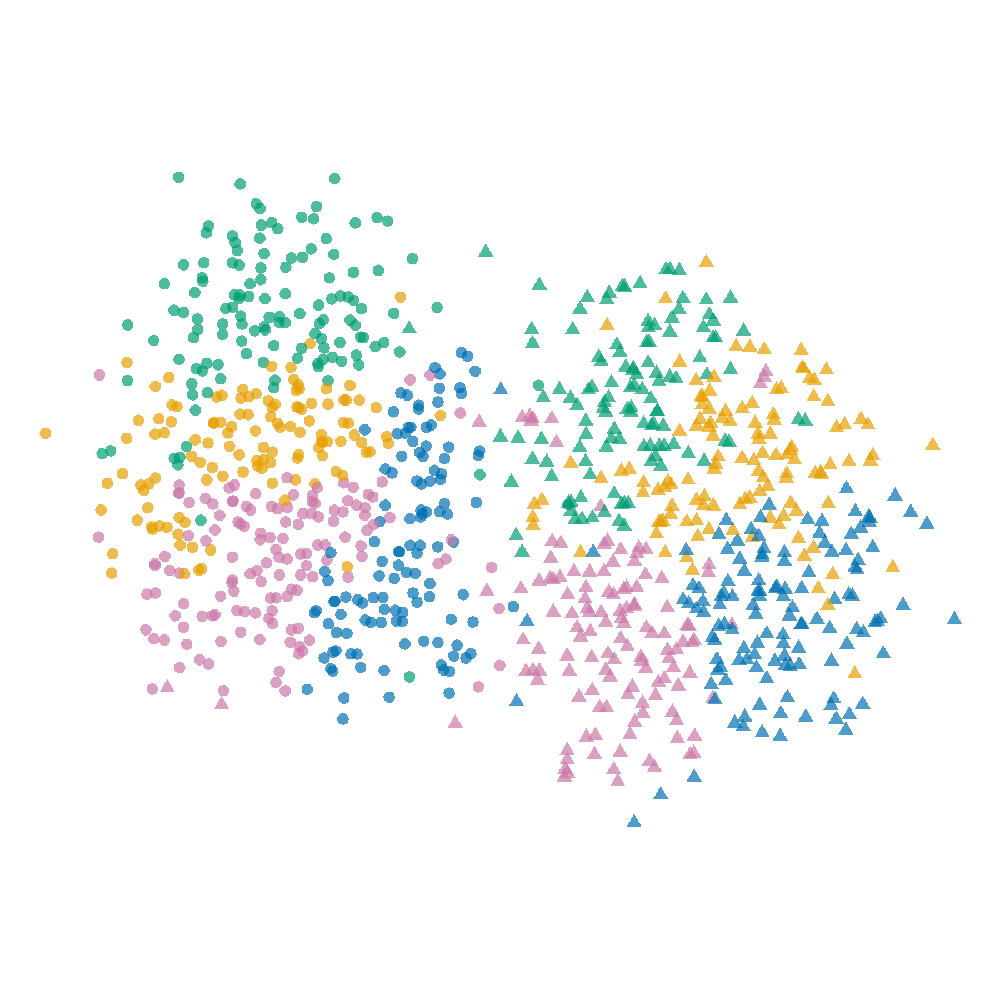}
		\caption{$\text{perplexity} = 200, \divmix = 0.5$}
	\end{subfigure}\hfill
	\begin{subfigure}[b]{0.3\linewidth}
		\includegraphics[width=\linewidth]{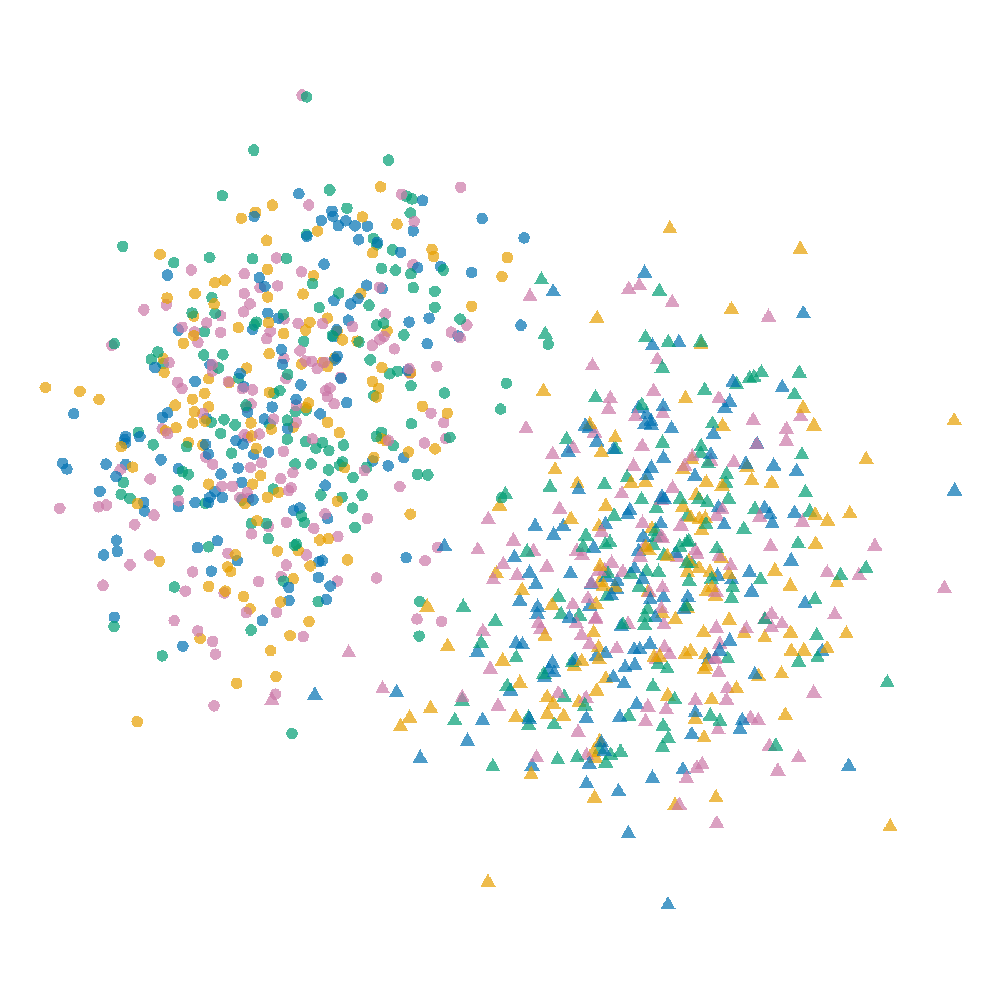}
		\caption{$\text{perplexity} = 400, \divmix = 0.5$}
	\end{subfigure}\hfill

	\begin{subfigure}[b]{0.3\linewidth}
		\includegraphics[width=\linewidth]{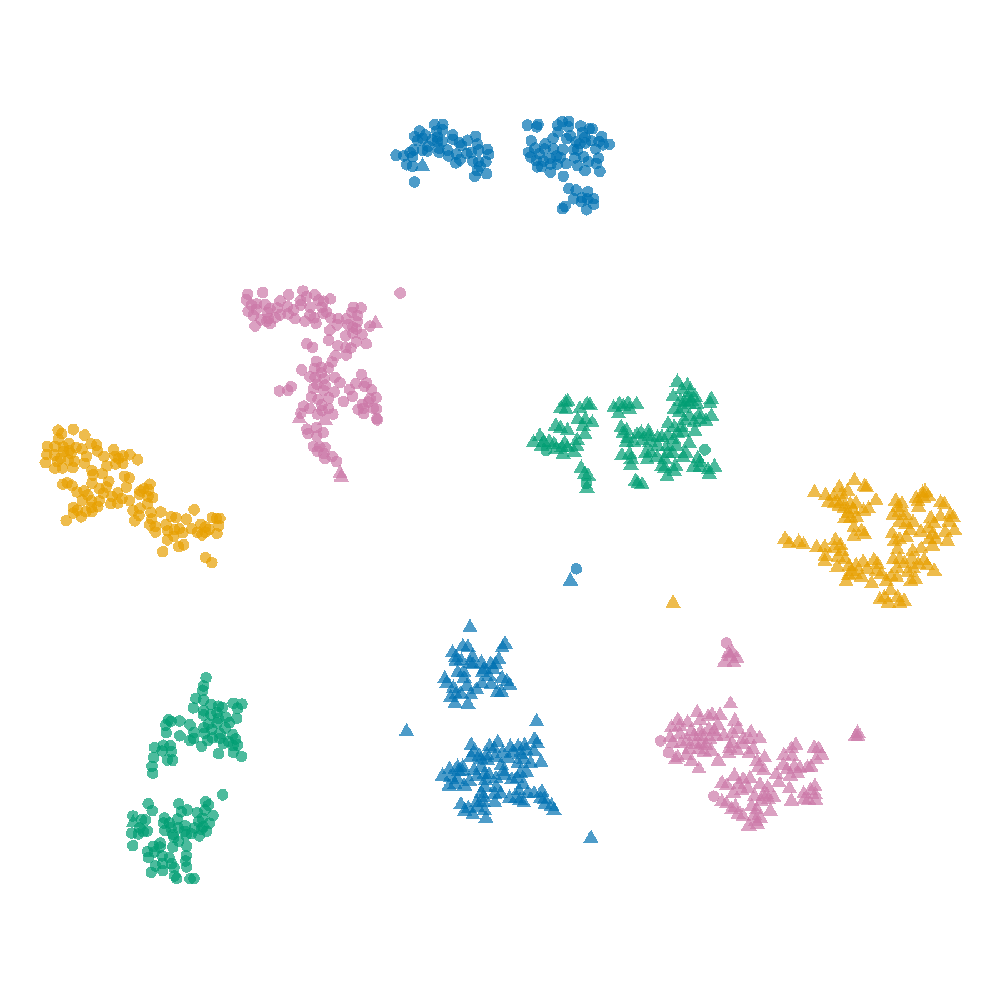}
		\caption{$\text{perplexity} = 50, \divmix = 1$}
	\end{subfigure}\hfill
	\begin{subfigure}[b]{0.3\linewidth}
		\includegraphics[width=\linewidth]{expres/dtrain/jsmethod/euclidean/data_basic_jstsne_embedding_prior_A_dist_euclidean_iter_2000_perp_200_KLmix_1_distmix_0_99_priorPerp_50}
		\caption{$\text{perplexity} = 200, \divmix = 1$}
	\end{subfigure}\hfill
	\begin{subfigure}[b]{0.3\linewidth}
		\includegraphics[width=\linewidth]{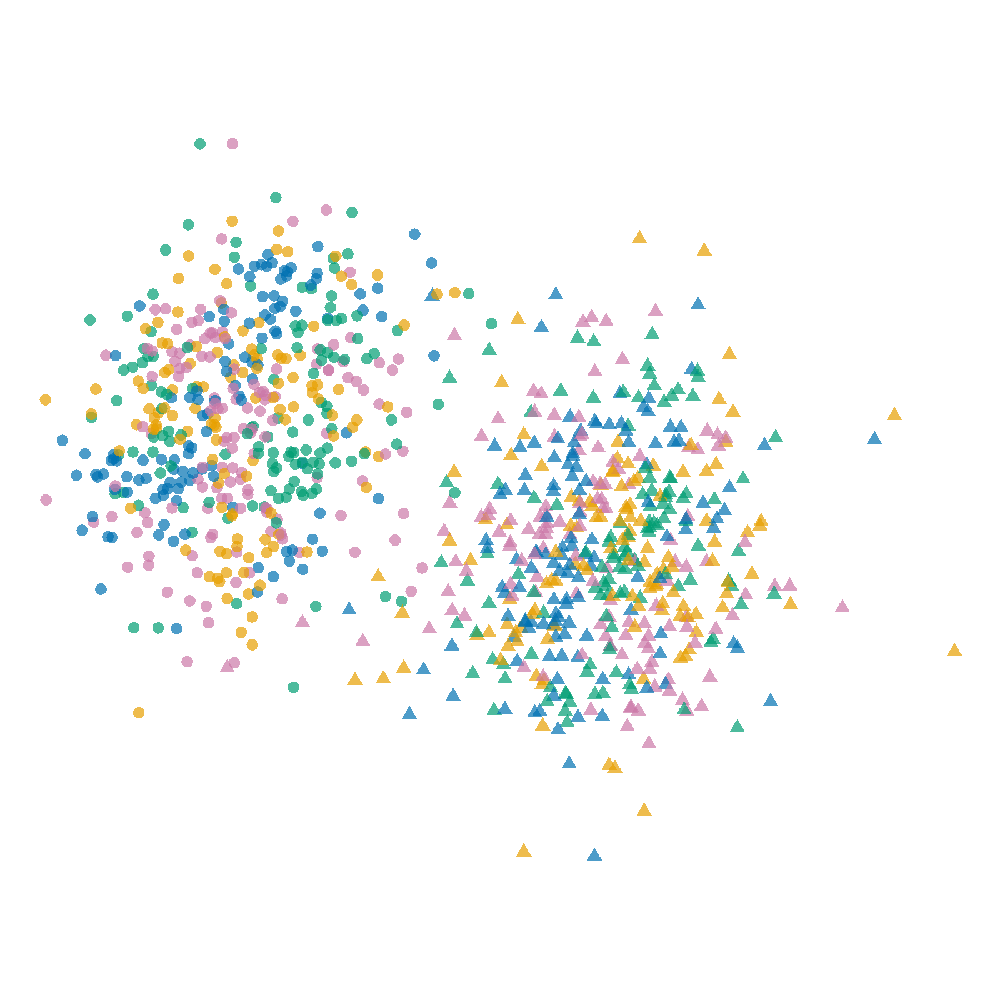}
		\caption{$\text{perplexity} = 400, \divmix = 1$}
	\end{subfigure}\hfill
	\caption{Visualizations of the synthetic data set for parameter tuning by \jedi with a Euclidean distance prior based on the dimensions in A (color clusters). We see the effect of different perplexities in combination with different values for the mixture of divergences \divmix.}
	\label{fig:js-grid-perplexity-divmix}
\end{figure}

\begin{figure}
	\begin{subfigure}[b]{0.35\linewidth}
		\includegraphics[width=\linewidth]{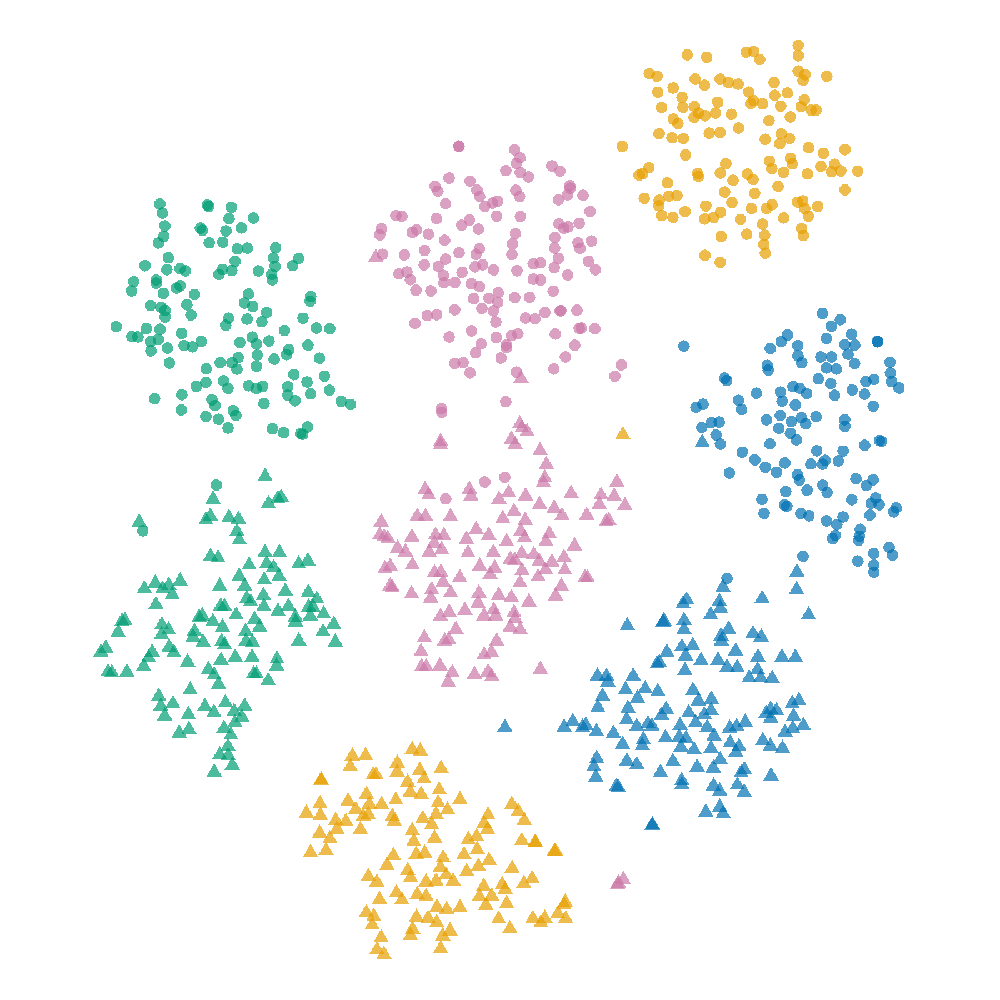}
		\caption{$\text{perplexity} = 50, \lambda = 0.4$}
	\end{subfigure}\hfill
	\begin{subfigure}[b]{0.35\linewidth}
		\includegraphics[width=\linewidth]{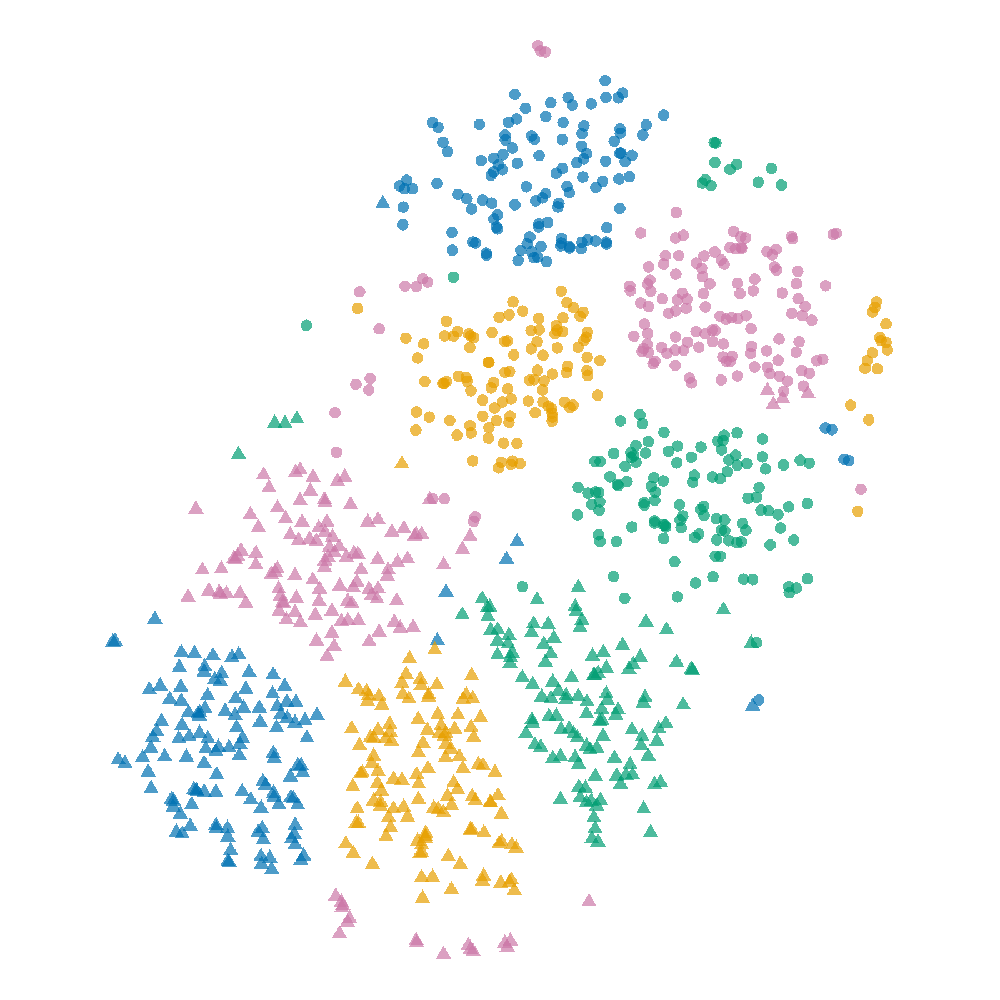}
		\caption{$\text{perplexity} = 200, \lambda = 0.4$}
	\end{subfigure}\hfill
	\begin{subfigure}[b]{0.35\linewidth}
		\includegraphics[width=\linewidth]{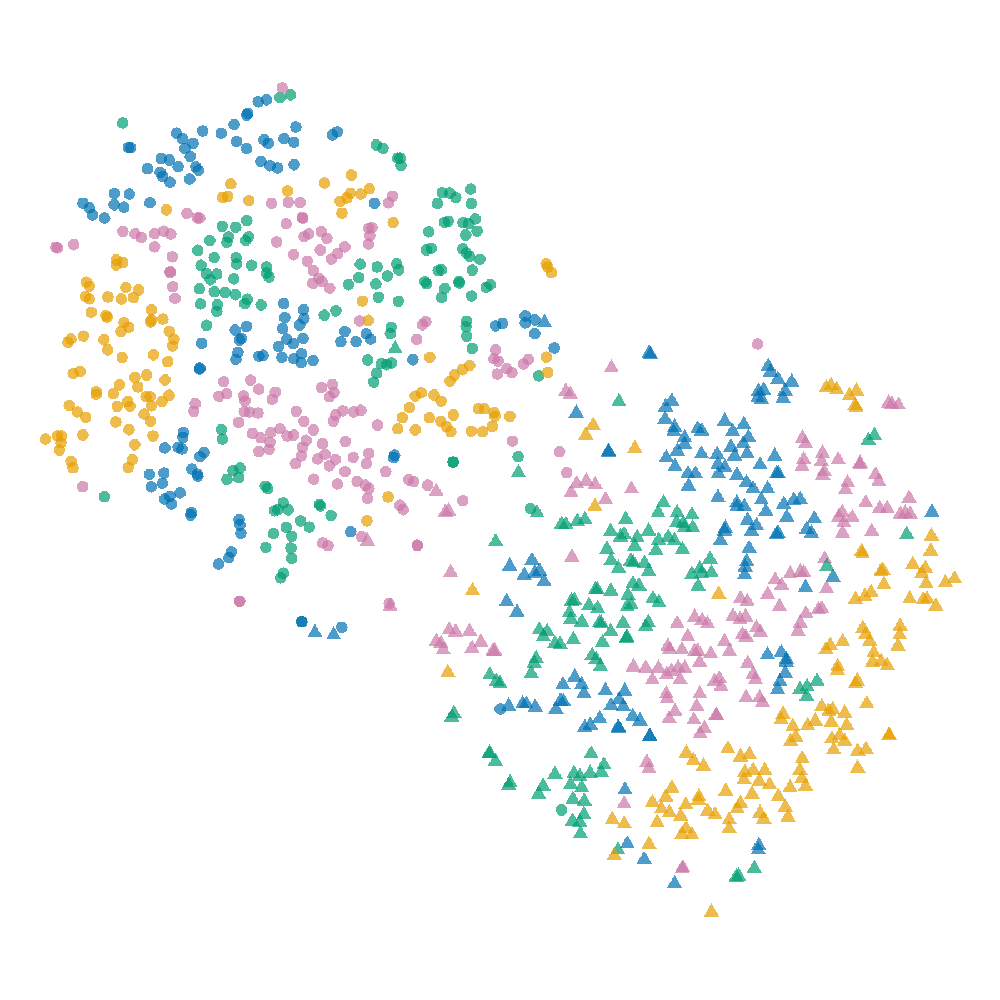}
		\caption{$\text{perplexity} = 50, \lambda = 0.6$}
	\end{subfigure}\hfill
	\begin{subfigure}[b]{0.35\linewidth}
		\includegraphics[width=\linewidth]{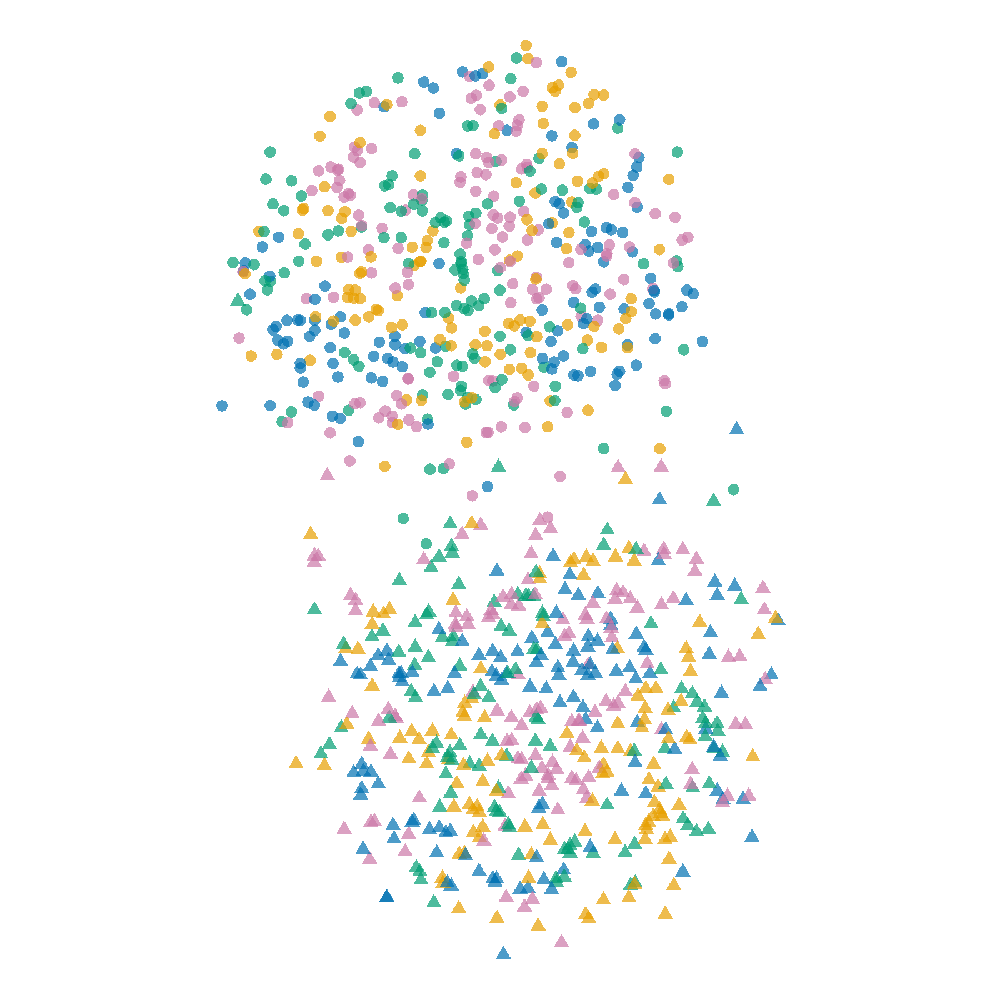}
		\caption{$\text{perplexity} = 200, \lambda = 0.6$}
	\end{subfigure}\hfill
	\begin{subfigure}[b]{0.35\linewidth}
		\includegraphics[width=\linewidth]{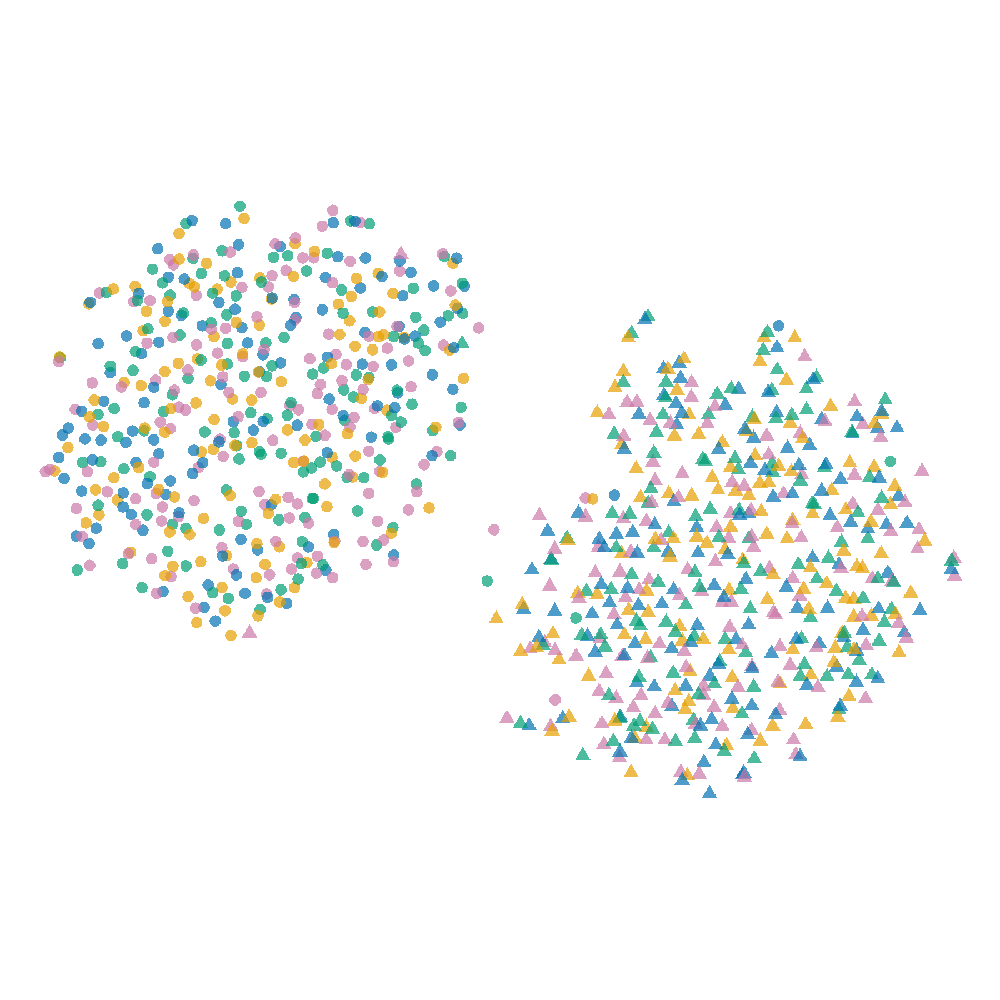}
		\caption{$\text{perplexity} = 50, \lambda = 1$}
	\end{subfigure}\hfill
	\begin{subfigure}[b]{0.35\linewidth}
		\includegraphics[width=\linewidth]{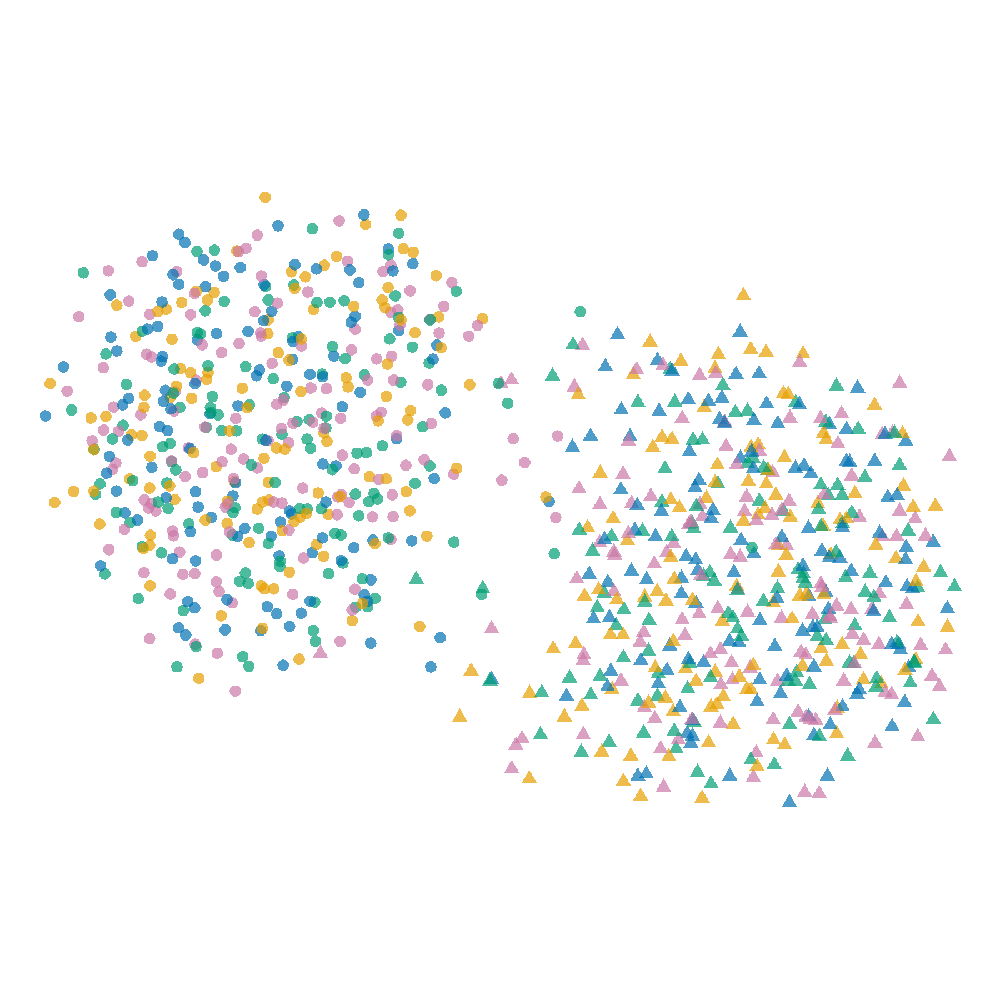}
		\caption{$\text{perplexity} = 200, \lambda = 1$}
	\end{subfigure}\hfill
	\begin{subfigure}[b]{0.35\linewidth}
		\includegraphics[width=\linewidth]{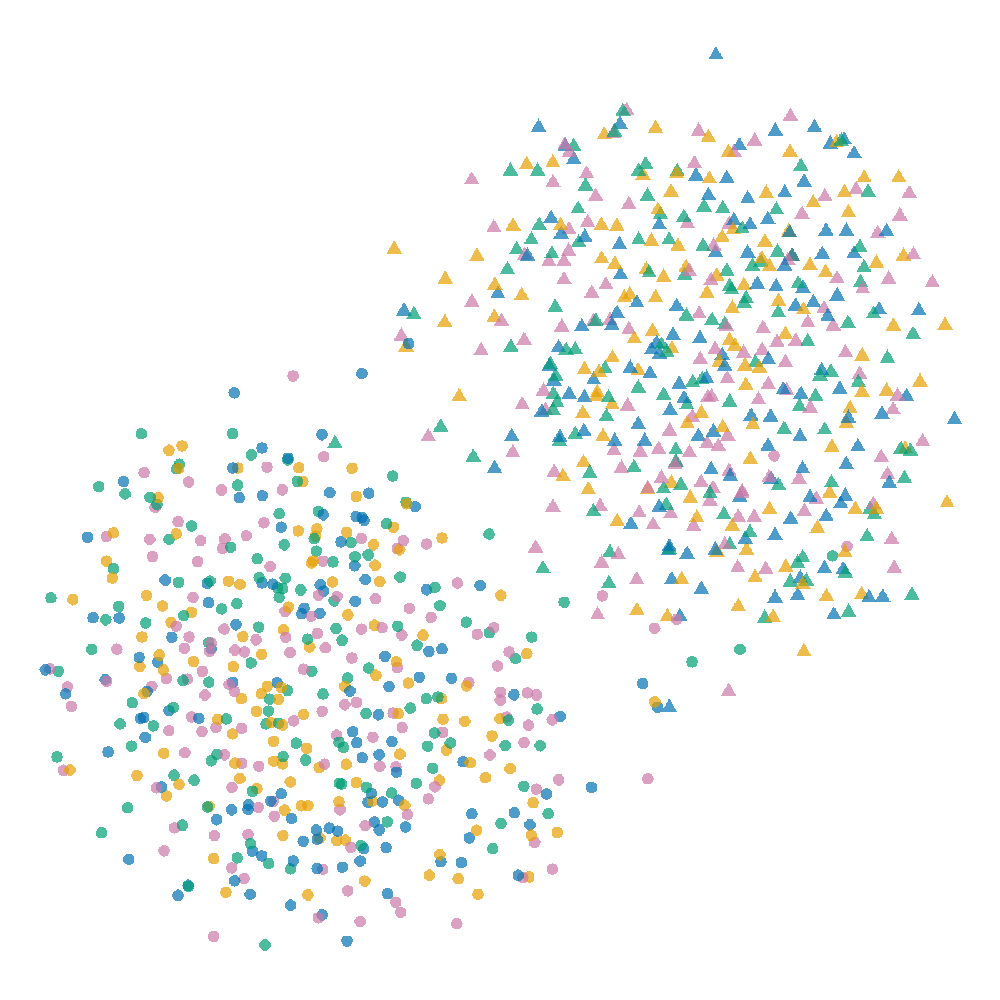}
		\caption{$\text{perplexity} = 50, \lambda = 2$}
	\end{subfigure}\hfill
	\begin{subfigure}[b]{0.35\linewidth}
		\includegraphics[width=\linewidth]{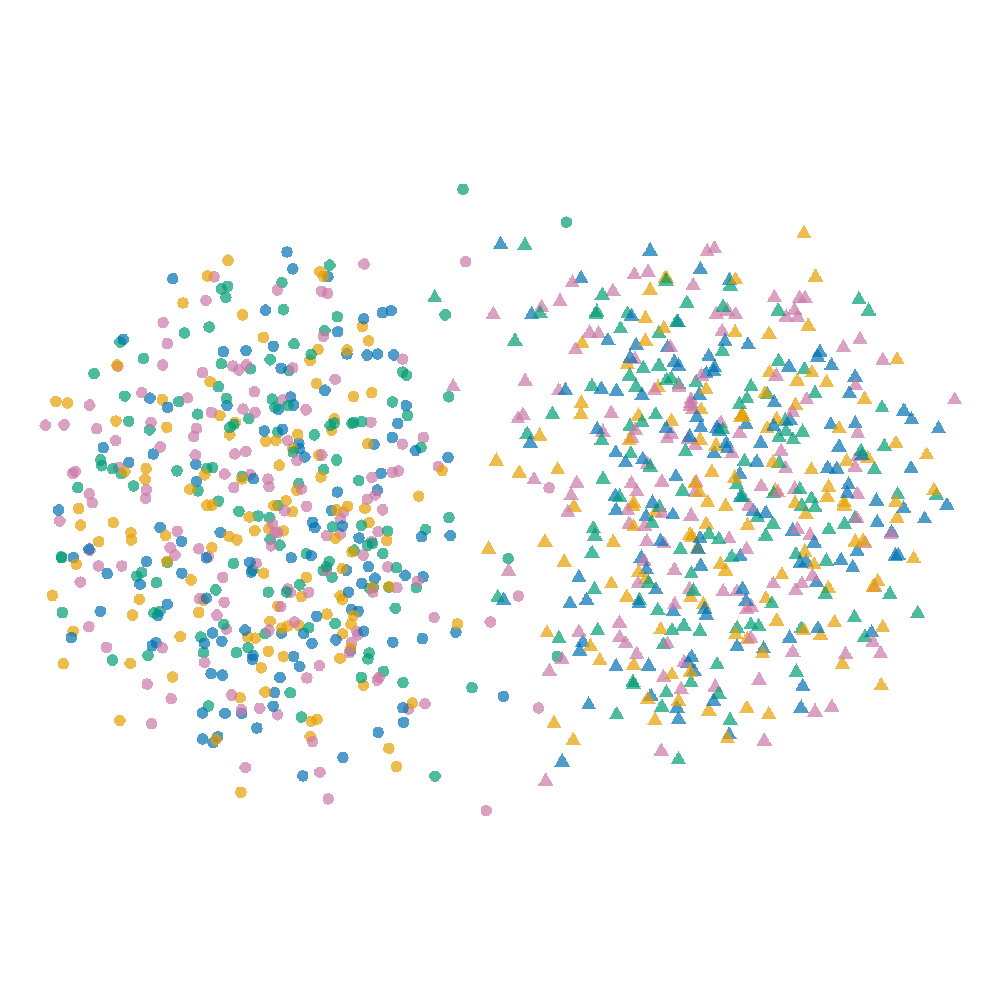}
		\caption{$\text{perplexity} = 200, \lambda = 2$}
	\end{subfigure}\hfill
	\caption{Visualizations of the synthetic data set for parameter tuning by \confetti with a Euclidean distance prior based on the dimensions in A (color clusters). For two different perplexity values we increase the maximum penalty $\lambda$~from 0.4 to 2.}
	\label{fig:basic-data-maxp-perp-grid}
\end{figure}

\begin{figure}
	\begin{subfigure}[b]{0.3\linewidth}
		\includegraphics[width=\linewidth]{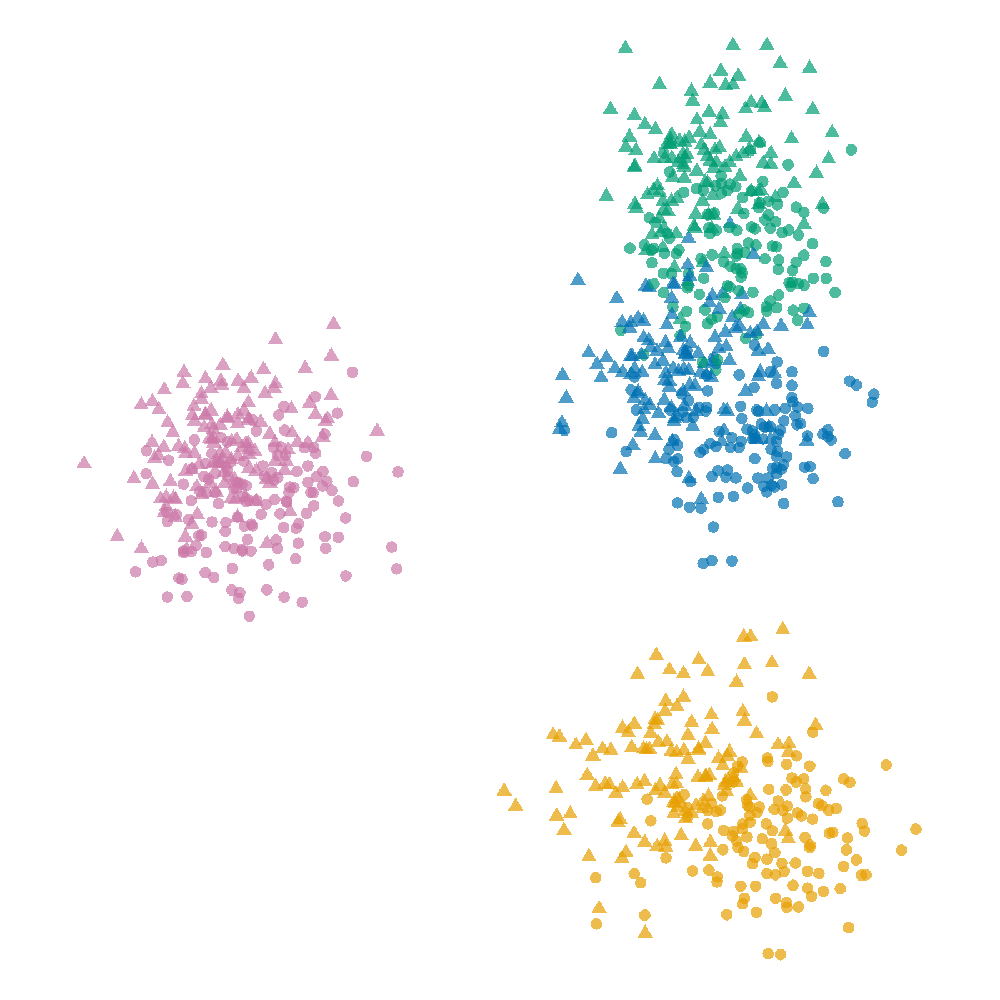}
		\caption{$\alpha = 0.1, k = 50$}
	\end{subfigure}\hfill
	\begin{subfigure}[b]{0.3\linewidth}
		\includegraphics[width=\linewidth]{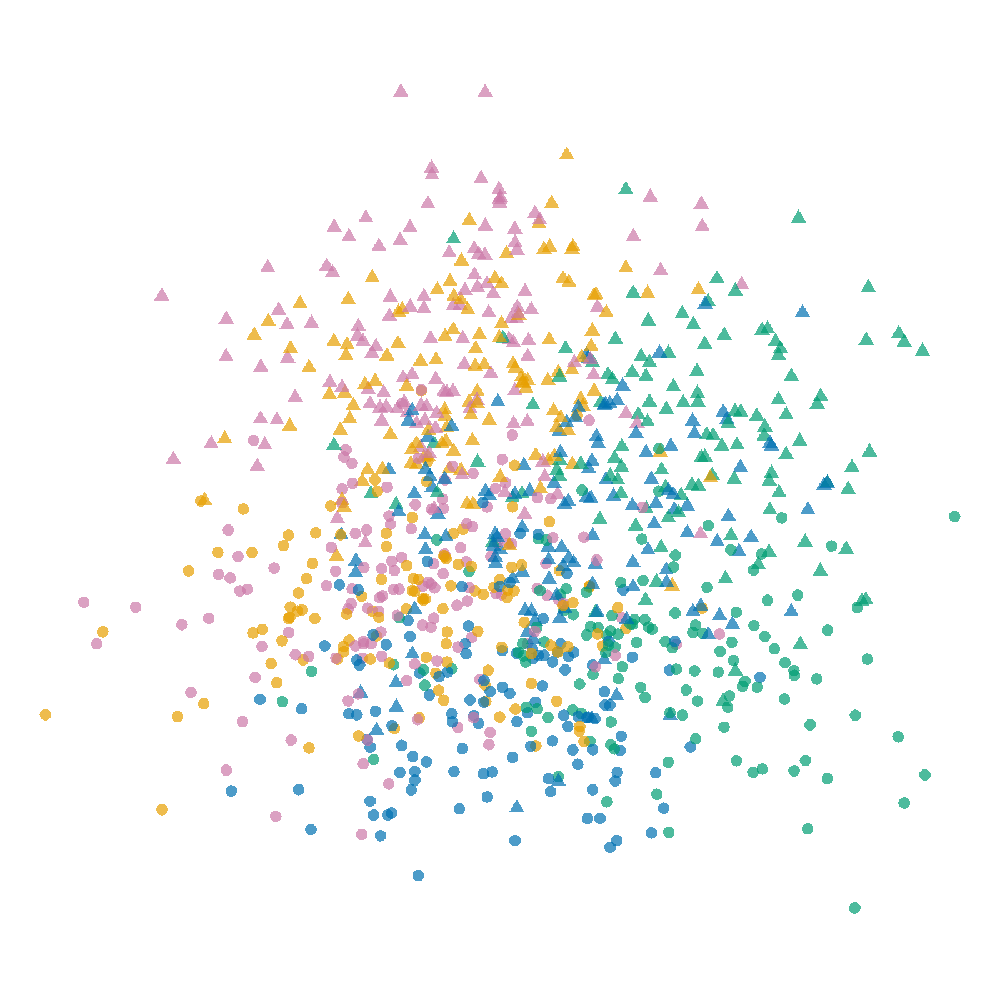}
		\caption{$\alpha = 0.5, k = 50$}
	\end{subfigure}\hfill
	`	\begin{subfigure}[b]{0.3\linewidth}
		\includegraphics[width=\linewidth]{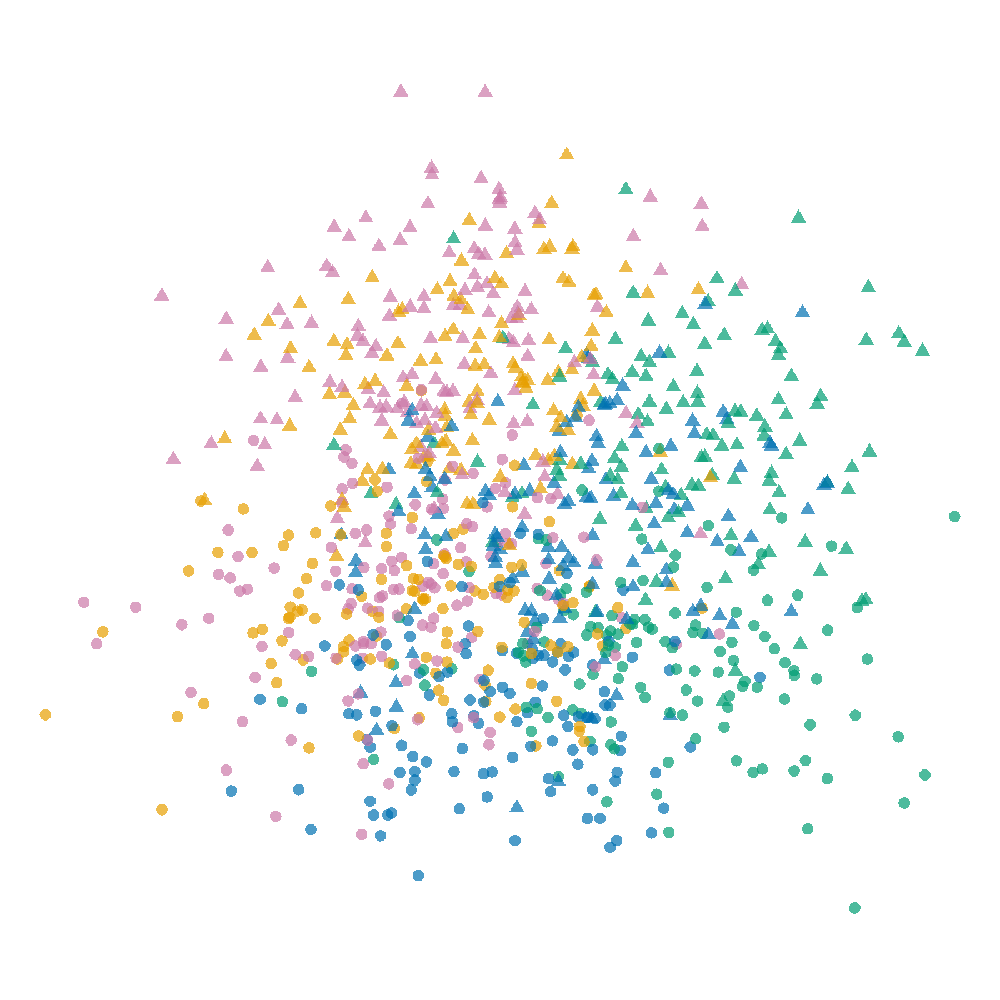}
		\caption{$\alpha = 1, k = 50$}
	\end{subfigure}\hfill
	\begin{subfigure}[b]{0.3\linewidth}
		\includegraphics[width=\linewidth]{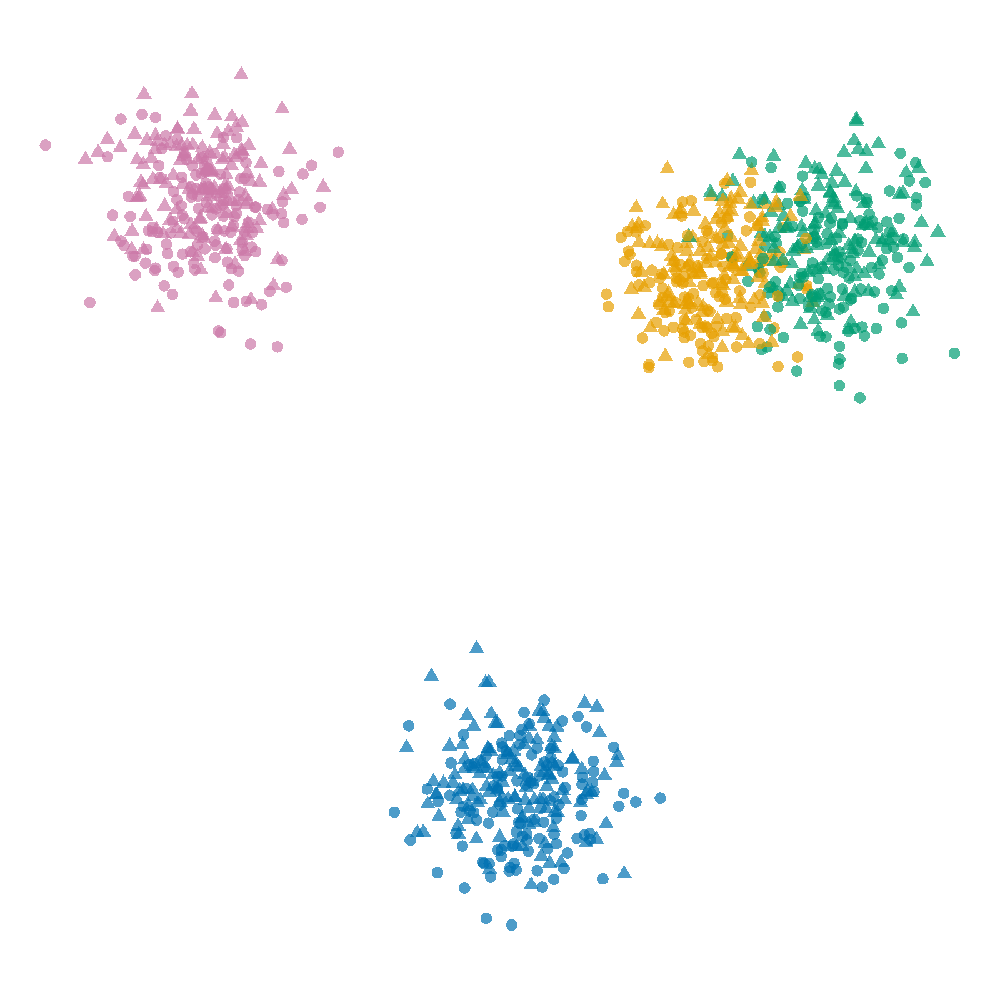}
		\caption{$\alpha = 0.1, k = 200$}
	\end{subfigure}\hfill
	\begin{subfigure}[b]{0.3\linewidth}
		\includegraphics[width=\linewidth]{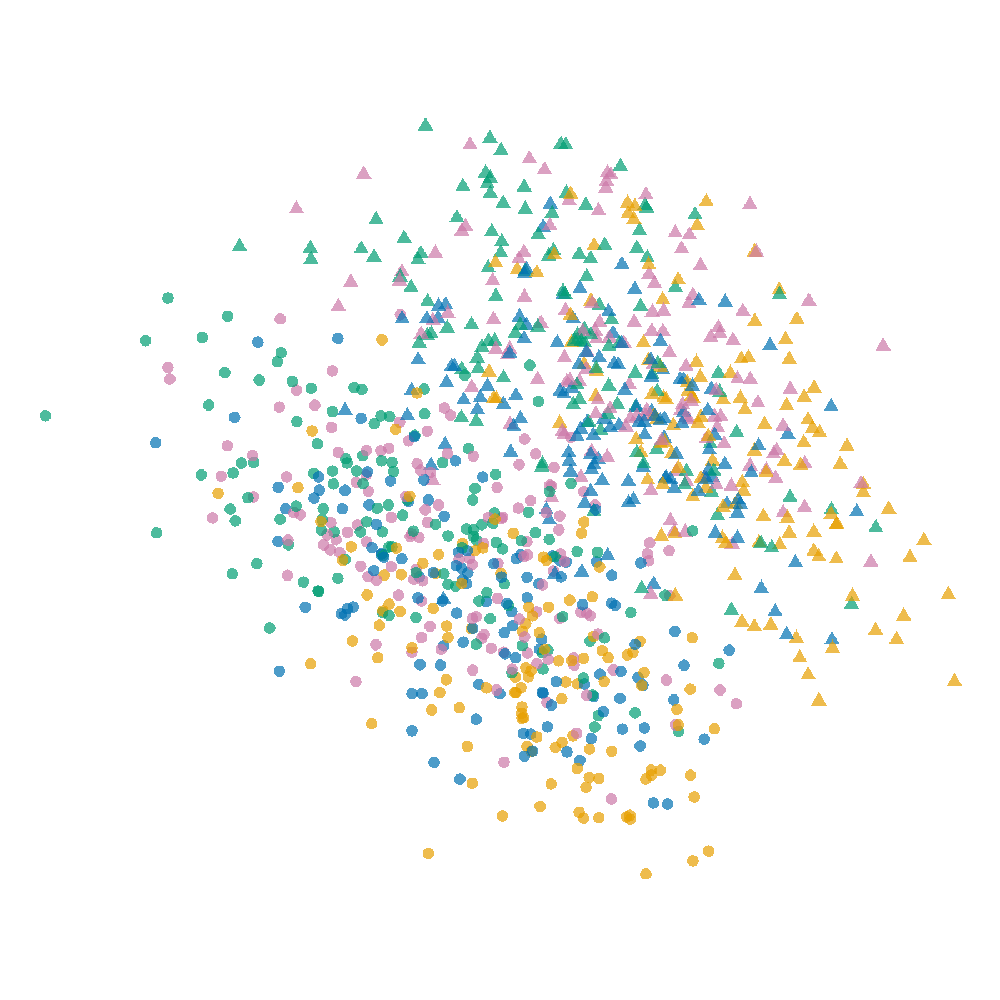}
		\caption{$\alpha = 0.5, k = 200$}
	\end{subfigure}\hfill
	\begin{subfigure}[b]{0.3\linewidth}
		\includegraphics[width=\linewidth]{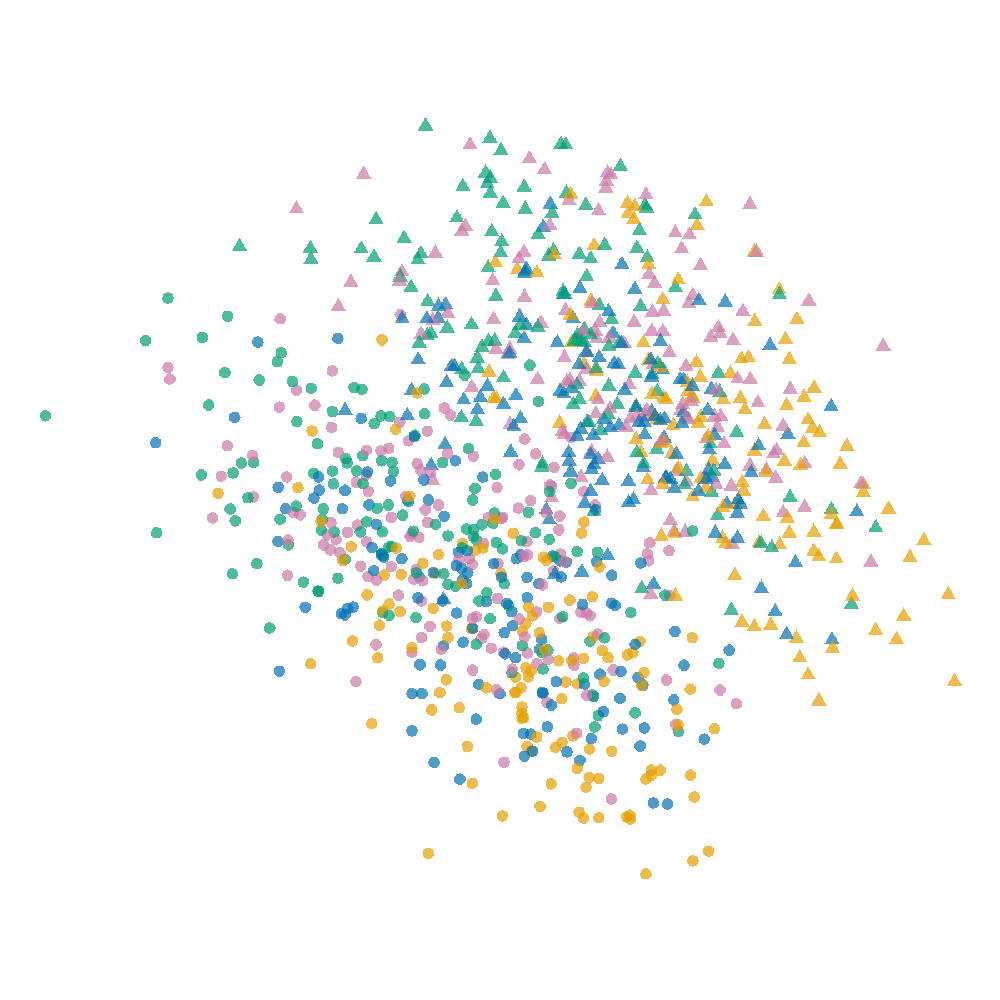}
		\caption{$\alpha = 1, k = 200$}
	\end{subfigure}\hfill
	\begin{subfigure}[b]{0.3\linewidth}
		\includegraphics[width=\linewidth]{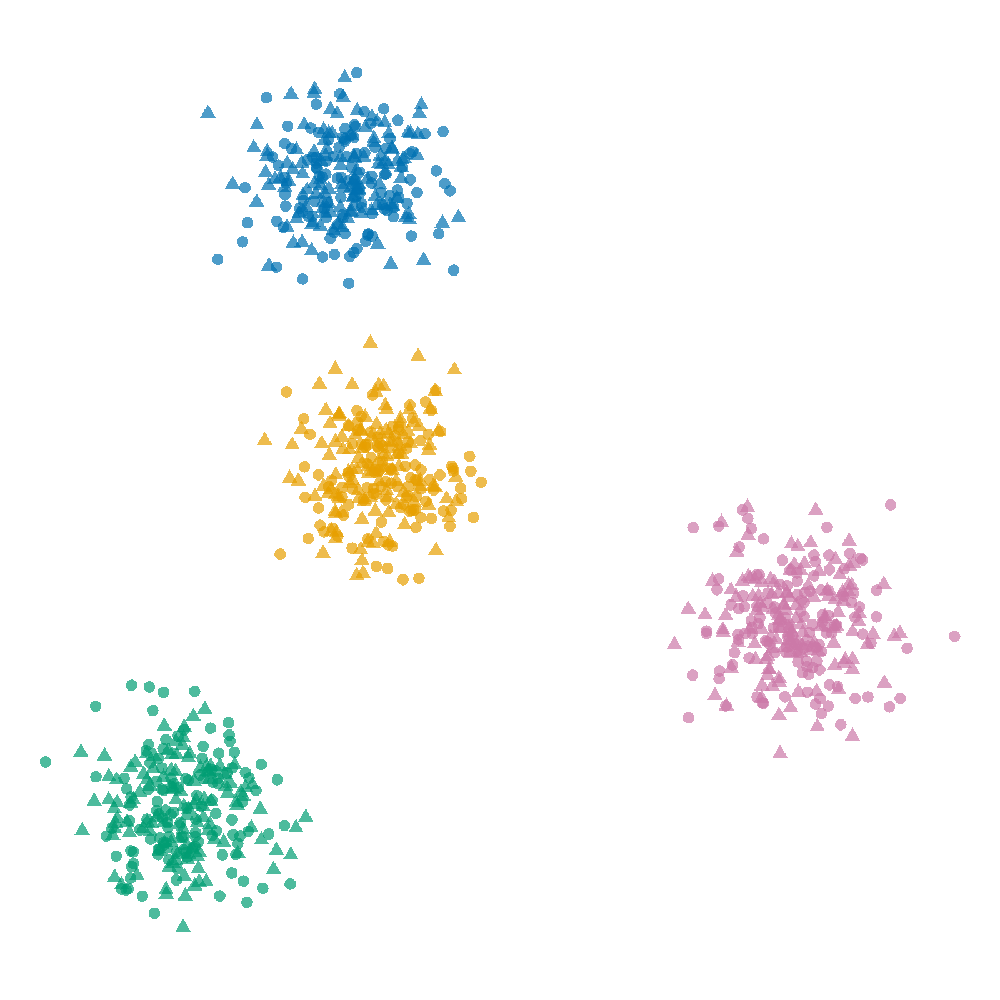}
		\caption{$\alpha = 0.1, k = 500$}
	\end{subfigure}\hfill
	\begin{subfigure}[b]{0.3\linewidth}
		\includegraphics[width=\linewidth]{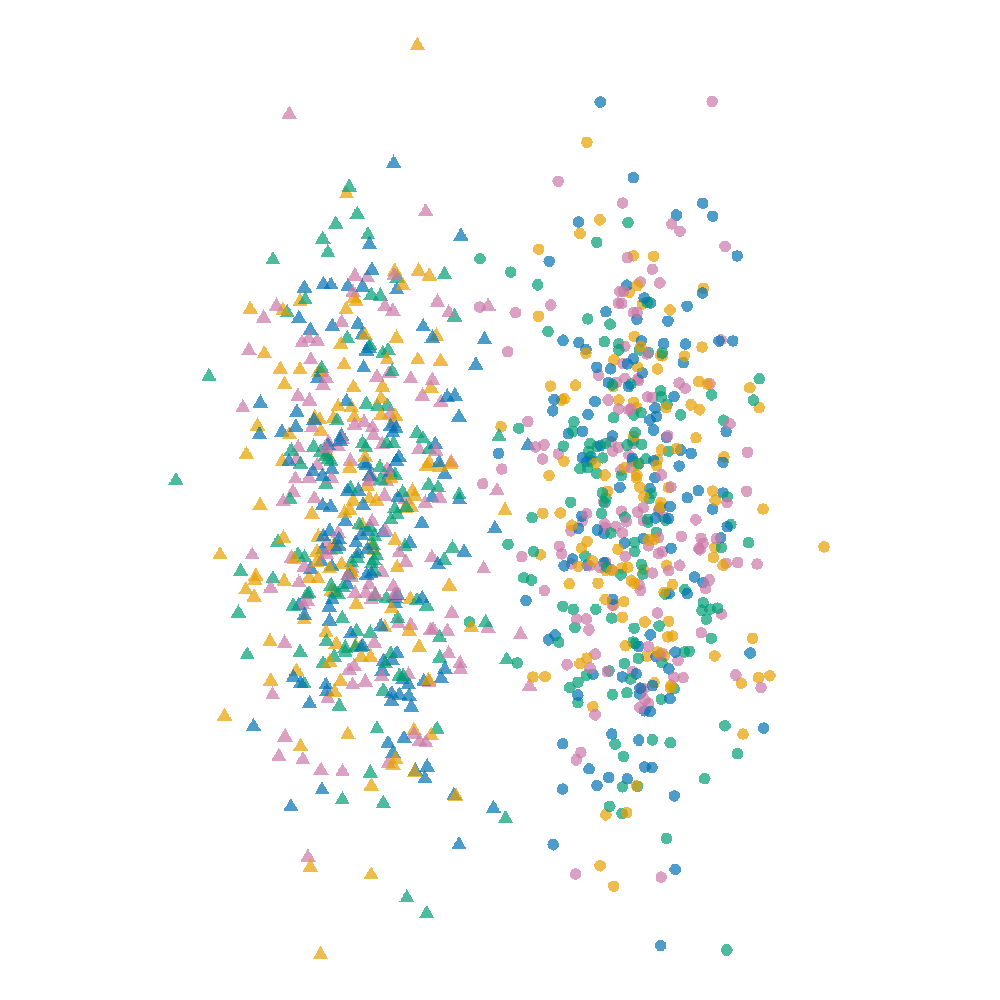}
		\caption{$\alpha = 0.5, k = 500$}
	\end{subfigure}\hfill
	\begin{subfigure}[b]{0.3\linewidth}
		\includegraphics[width=\linewidth]{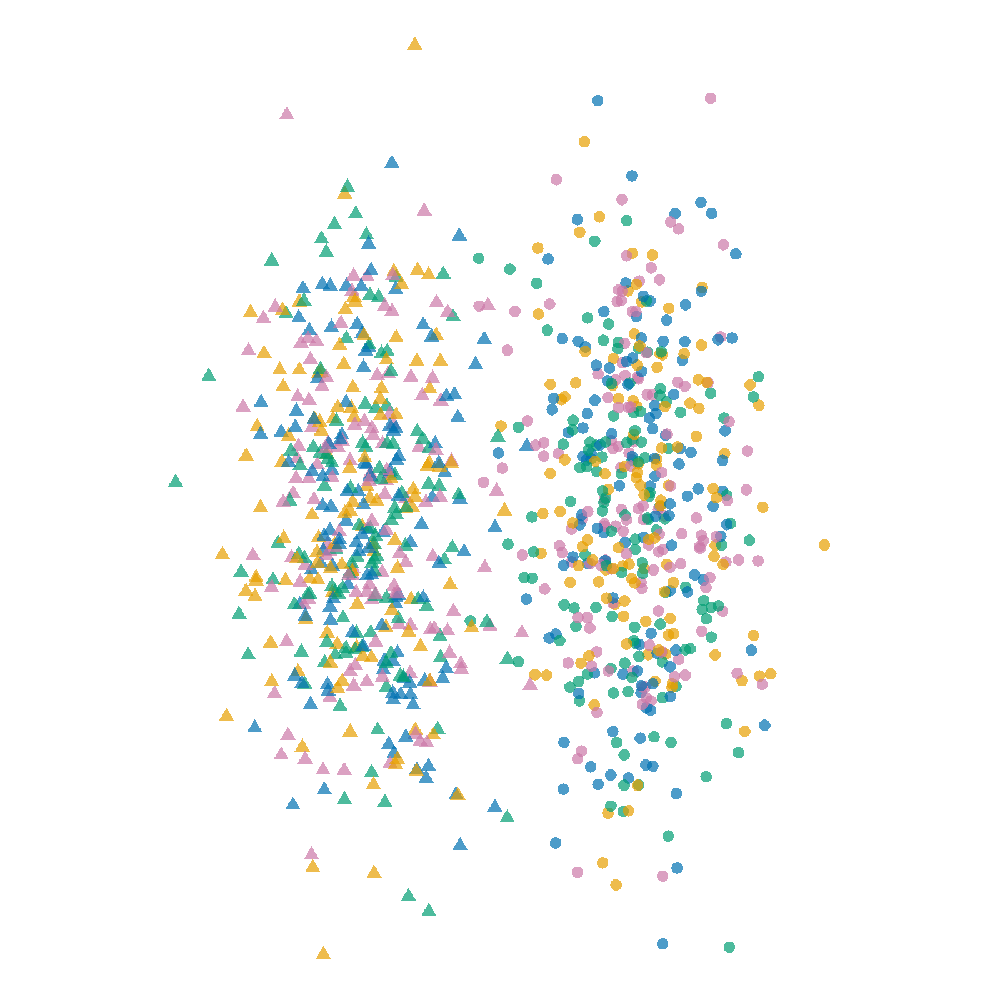}
		\caption{$\alpha = 1, k = 500$}
	\end{subfigure}\hfill
	\caption{Visualizations by \slle with labels prior based on the clusters in A (color). The amount how much the distances are adjusted is determined by $\alpha$ and the number of neighbors to consider is $k$.}
	\label{fig:dtrain-slle}
\end{figure}

\begin{figure}
	\begin{subfigure}[b]{0.3\linewidth}
		\includegraphics[width=\linewidth]{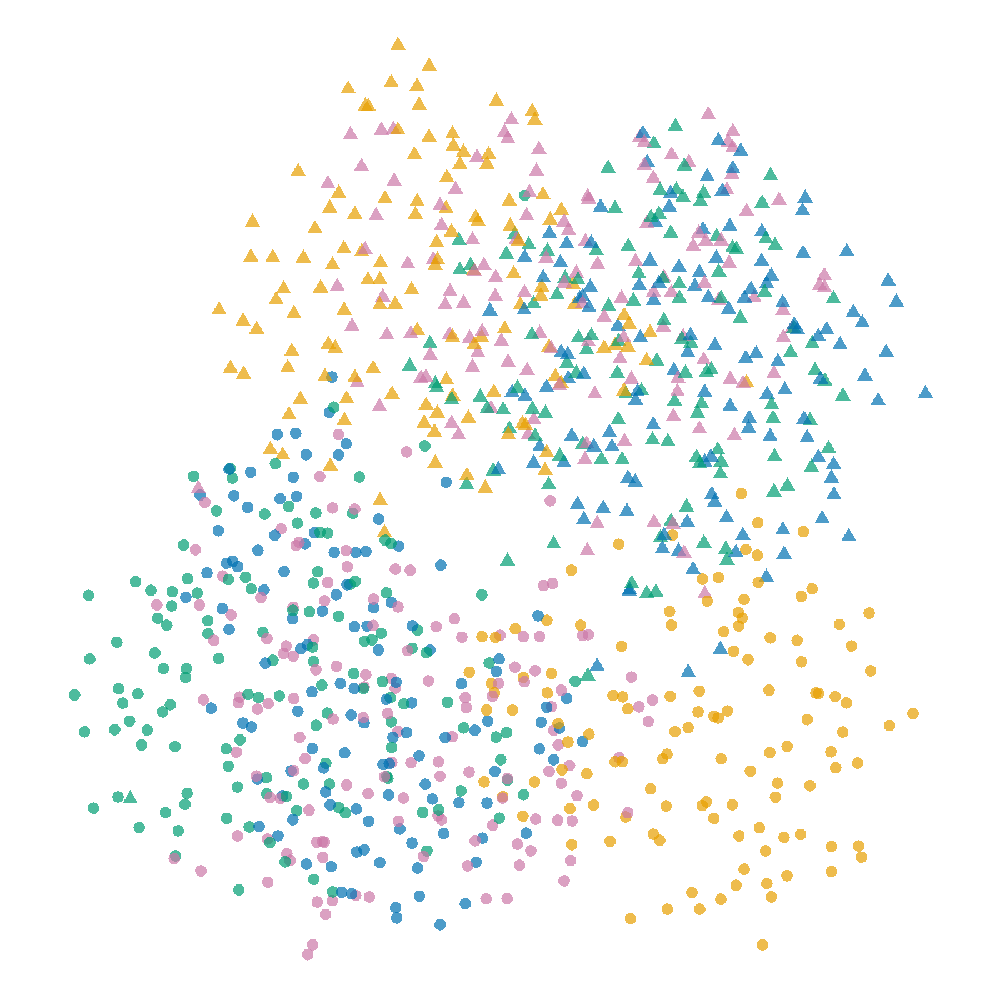}
		\caption{$\beta = 1e^{-3}$}
	\end{subfigure}\hfill
	\begin{subfigure}[b]{0.3\linewidth}
		\includegraphics[width=\linewidth]{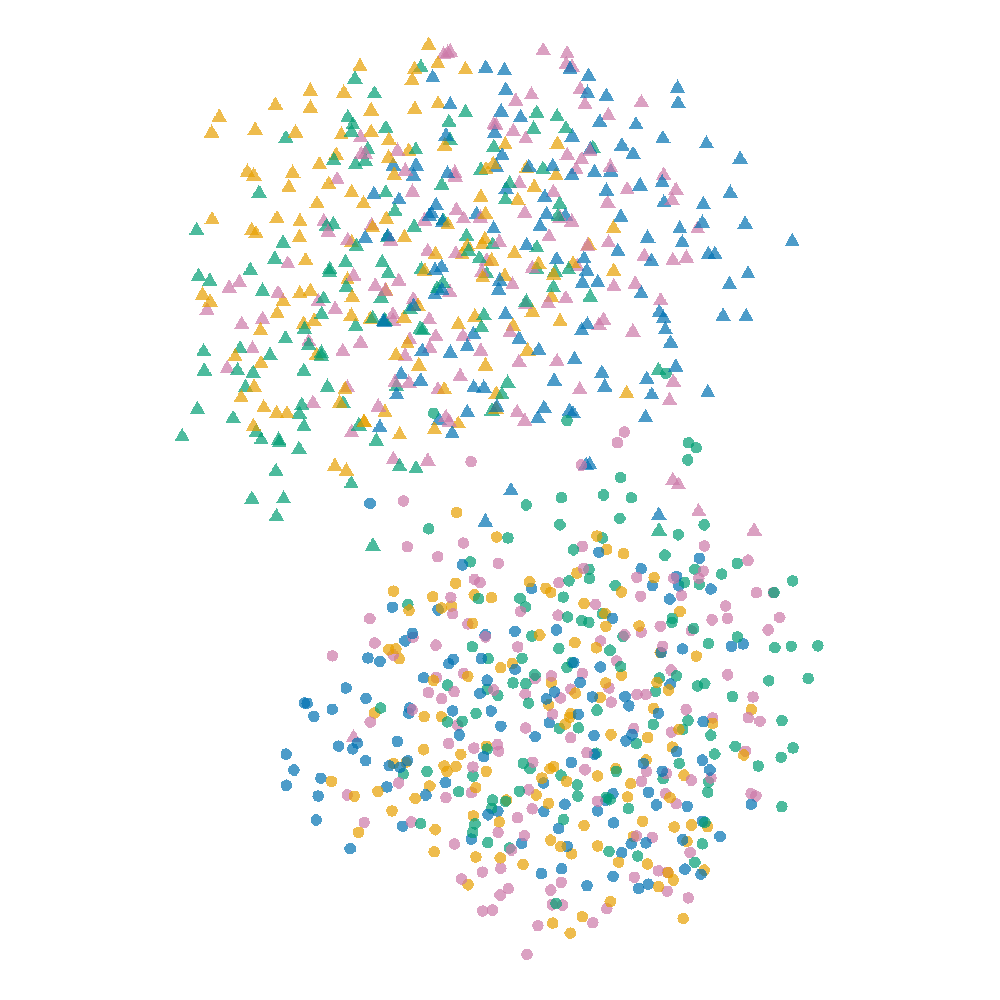}
		\caption{$\beta = 1e^{-5}$}
	\end{subfigure}\hfill
	\begin{subfigure}[b]{0.3\linewidth}
		\includegraphics[width=\linewidth]{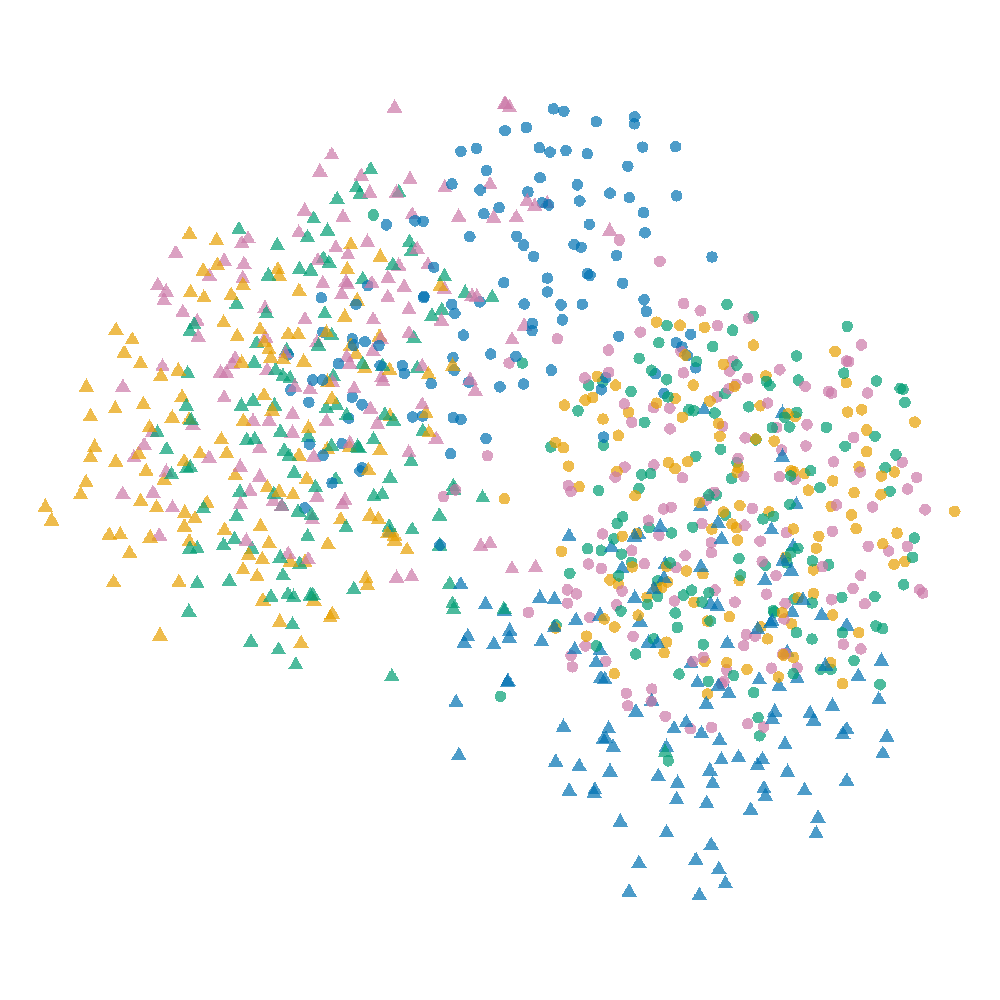}
		\caption{$\beta = 1e^{-7}$}
	\end{subfigure}\hfill
	\caption{Visualizations by \ctsne with a label prior based on the clusters in A (color). We try different values for $\beta$ which influences how strong the prior is factored out.}
	\label{fig:dtrain-ctsne}
\end{figure}

We generated a simple 10-dimensional dataset with 1k data points, with samples assigned to one of four clusters in the first 4 dimensions, and one of two clusters in dimension 5 and 6. The clusters in the first 4 dimensions are centered at the four unit vectors along the four axes of the dimensions.
The clusters in the other dimensions are centered at ${\frac{1}{3}\choose 0},{0 \choose \frac{1}{3}}$.
The remaining four dimensions are gaussian noise $\mathcal{N}(0,1)$.
Each data point is randomly assigned to one cluster of the first four, and one cluster in dimension 5 and 6, and takes the coordinates of the cluster centers plus some gaussian noise $\mathcal{N}(0,0.01)$.
A \tsne visualization is given in App. Fig. \ref{app:hyper_tsne}.
For each method, we provide embeddings for a subset of their search grid that gives a good idea how the embedding changes when varying the parameters
in Figures \ref{fig:dtrain-distances}, \ref{fig:js-grid-distmix}, \ref{fig:js-grid-prior-perplexity}, \ref{fig:js-grid-perplexity-divmix}, \ref{fig:basic-data-maxp-perp-grid}, \ref{fig:dtrain-slle}, and \ref{fig:dtrain-ctsne}.

Looking at the results, we observe that for \confetti, as expected, the gradual increase in $\lambda$ is directly reflected in the embedding with how strongly differently colored points merge. Here, we settle for a perplexity of 50 for the prior and $\lambda=2$, as this allows to take into account much information of the prior while yielding nicely separated and mixed colored clusters.

For \jedi, we have to investigate a good setting of 4 different parameters in a principled manner. Fixing the perplexities, we observe that $\alpha=0$ and $\beta=0.99$ yield much better mixtures of the similarly colored points, than any other combination, with \jedi not converging with $\beta=1$ (App. Fig. \ref{fig:js-grid-distmix}).
\cite{yamano2019some} already pointed out that $\beta=0.99$ worked well as a choice for the divergence mixing.
Fixing $\beta=0.99$ we proceed to analyse the impact of $\alpha$ and the perplexity parameters, noting that the latter is highly dependent on the data set size, and should be adjusted by the user as with original \tsne. Here, we settle for $\alpha=0$, perplexity of $\frac{1}{5}n$ and prior perplexity $\frac{1}{10}n$, where $n$ is the number of samples.

For \ctsne, $\beta=1e^{-5}$ worked best, which is also close to what the authors use for their original experiments.
In the case of \slle, we observe that for $\alpha\geq 0.3$, independent of $k$, similarly colored points are not clustered together anymore.
Furthermore, only with $k\geq 200$ the two desired clusters are clearly visible.
We thus propose to use $\alpha=0.5$ and $k=\frac{1}{2}n$ as default parameters, where $k$ should be carefully adjusted depending on the data set at hand.

\subsection{Synthetic data}
\label{app:synth}

The data has 14 dimensions in total, where each sample belongs to one of 4 clusters (A1-A4) in dimension 1-8 and one of four clusters in dimension 9-12 (B1-B4).
We first draw cluster centers from $\mathcal{N}(0,2)$ for each of the eight clusters.
Then, the feature values for each sample are generated in three steps as follows:
\begin{enumerate}
 \item Pick a cluster a from A1-A4 with probability 0.1, 0.2, 0.3, or 0.4, respectively. Add Gaussian noise to the cluster center a with standard deviation 0.1, 0.2, 0.3, or 0.4, respectively. Noise is drawn and added for each dimension independently.
 \item Pick a cluster a from B1-B4 with probability 0.1, 0.2, 0.3, or 0.4, respectively. Add Gaussian noise to the cluster center a with standard deviation 0.1, 0.2, 0.3, or 0.4, respectively. Noise is drawn and added for each dimension independently.
 \item The remaining 2 dimensions of every sample are Gaussian noise from $\mathcal{N}(0, 1)$.
\end{enumerate}
A \tsne visualization of the data is given in App. Fig. \ref{app:synth_tSNE_fig}.
For this dataset, we take the euclidean distances between all samples in the first 8 dimensions as prior for our methods, and the ground truth label assignment A1-A4 as prior for the other methods.
Here, we additionally provide the embeddings of \confetti and \jedi with label assignment prior, given in App. Fig. \ref{app:synth_label_fig},
and the \nos plots for both the input data and the prior in App. Fig. \ref{app:synth_nos_fig}.
While the \nos plots and corresponding area between curves give insight in how well an embedding factors out or maintains certain information, it is hard to optimize.
In particular, it unclear how to combine and balance the two scores for how much prior is removed and how much information of the input is maintained.
Even if these issues are resolved, the main problem will be that the kNNs that define this score are essentially describing binary relationships between entities, thus optimizing for optimal neighbourhoods does not give any clue how to arrange the samples in a 2D continuous space.

\begin{figure}
 \includegraphics[width=12cm]{./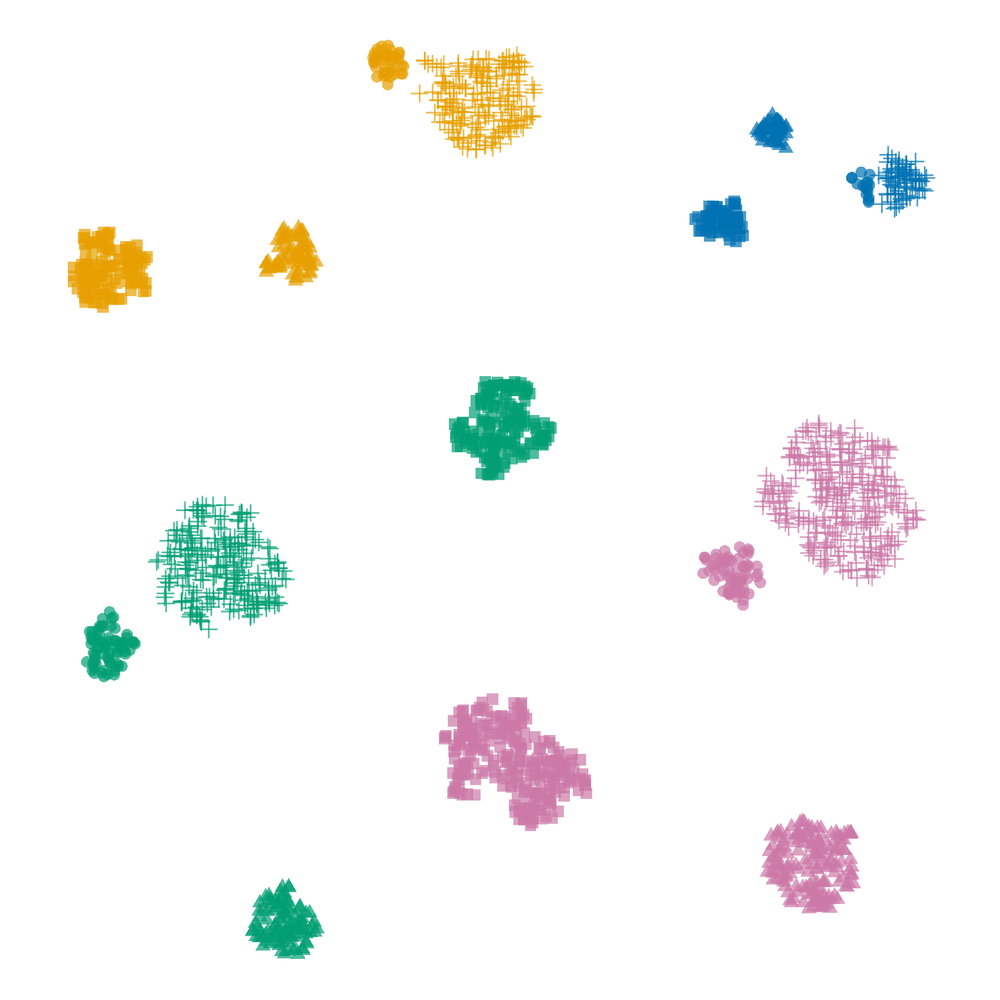}
 \caption{\textit{\tsne of synthetic data.} Visualized is a \tsne plot for the synthetic data set using perplexity 50. Color corresponds to cluster assignment in dimension 1-8
 and shape corresponds to cluster assignment in dimension 9-12.}\label{app:synth_tSNE_fig}
\end{figure}

\begin{figure}
 \centering
        \begin{subfigure}[b]{0.49\textwidth}
            \centering
            \includegraphics[width=\textwidth]{./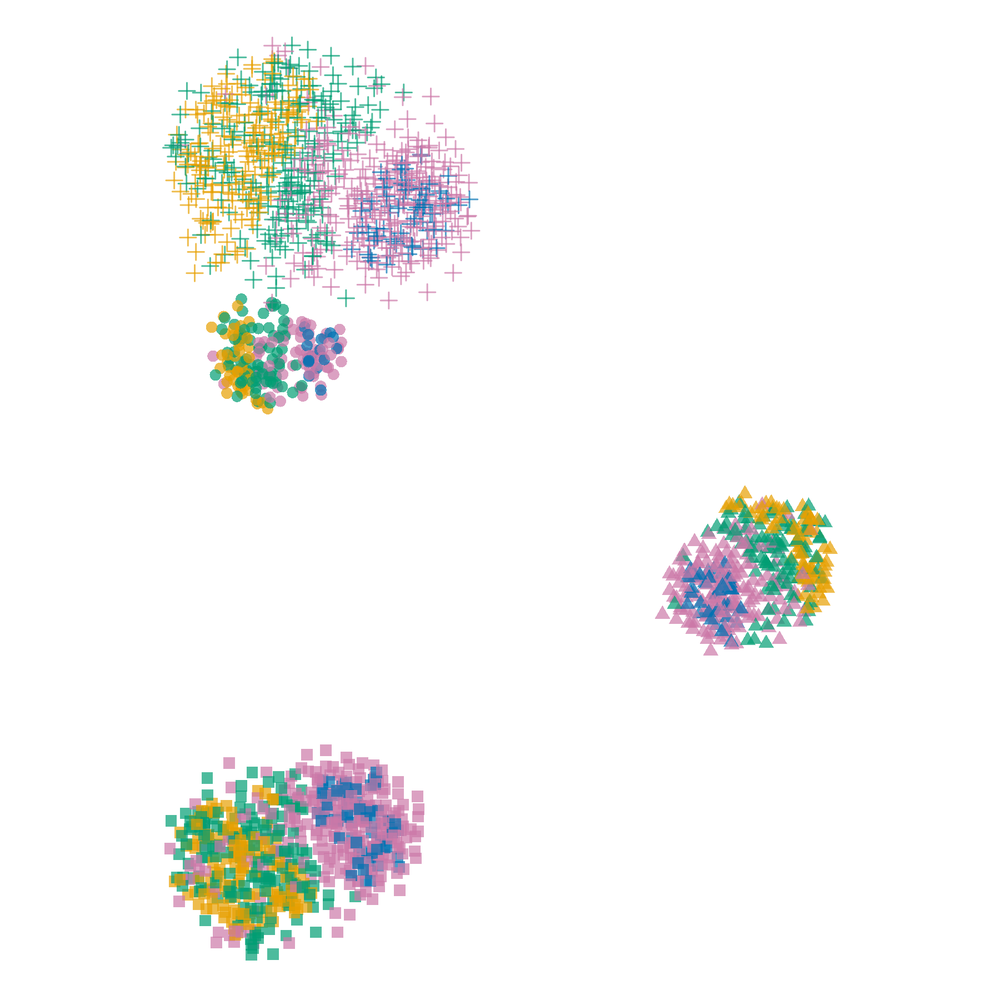}
            \caption{\confetti  $\lambda=1$.}
        \end{subfigure}
        \hfill
        \begin{subfigure}[b]{0.49\textwidth}   
            \centering 
            \includegraphics[width=\textwidth]{./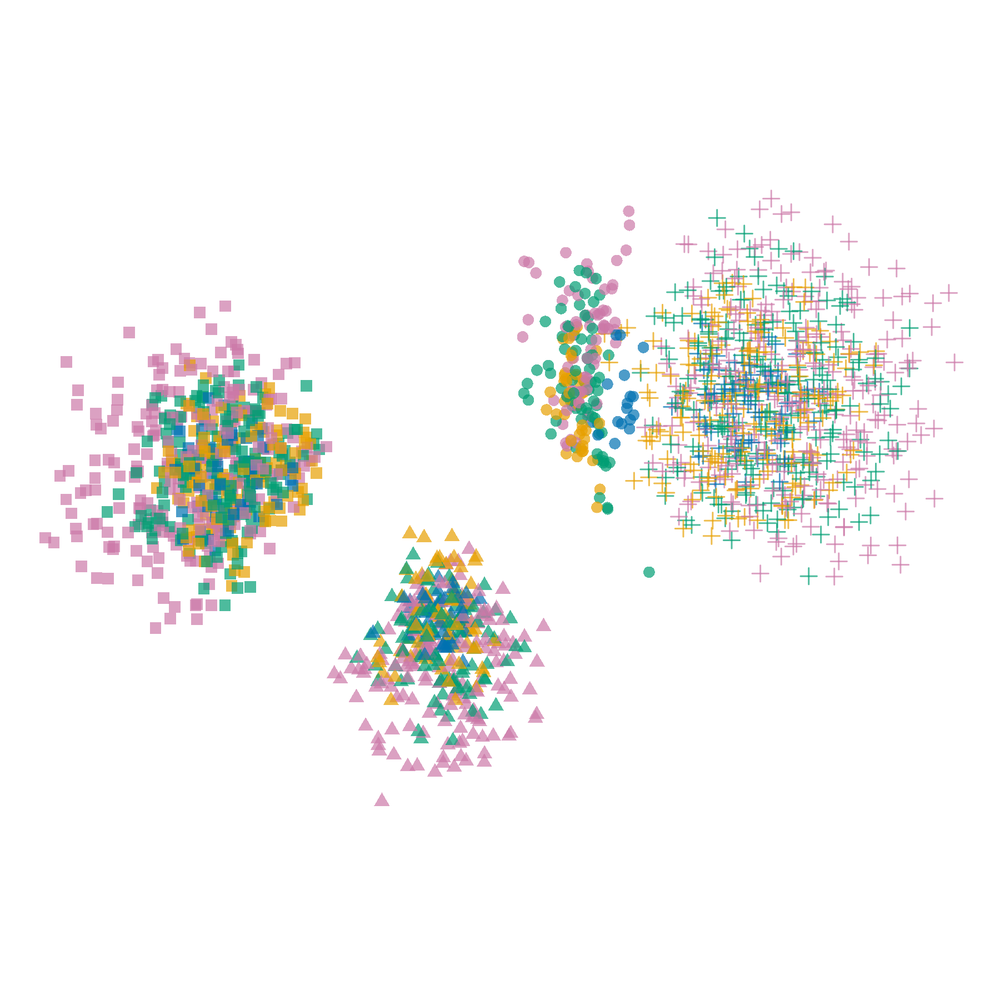}
            \caption{\jedi  $\alpha=0, \beta=.99$.}
        \end{subfigure}
  \caption{\textit{\jedi and \confetti with label prior.} Visualized are the plots generated by our methods when fed with the ground truth labels corresponding to the cluster assignment in the first 8 dimensions of the synthetic data set. Color corresponds to cluster assignment in dimension 1-8 and shape corresponds to cluster assignment in dimension 9-12.}\label{app:synth_label_fig}
\end{figure}

\begin{figure}
 \centering
        \begin{subfigure}[b]{0.49\textwidth}
            \centering
            \includegraphics[width=\textwidth]{./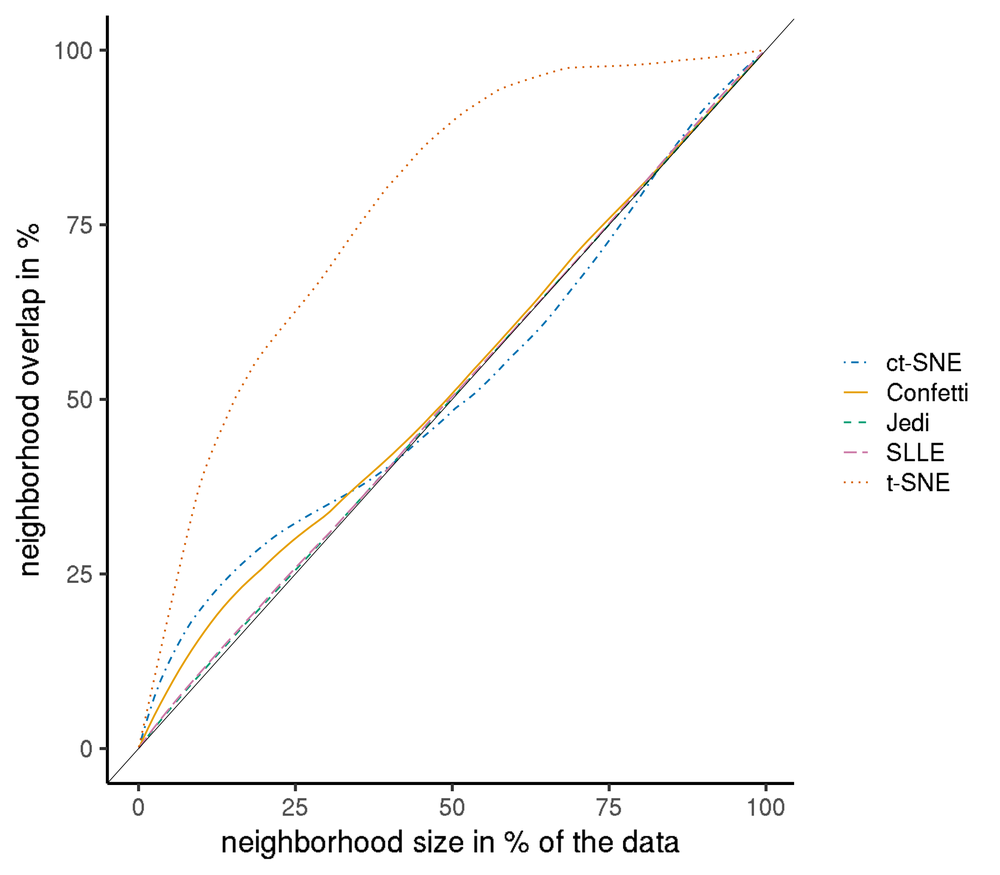}
            \caption{\nos plot between prior labels and embedding (smaller is better).}
        \end{subfigure}
        \hfill
        \begin{subfigure}[b]{0.49\textwidth}   
            \centering 
            \includegraphics[width=\textwidth]{./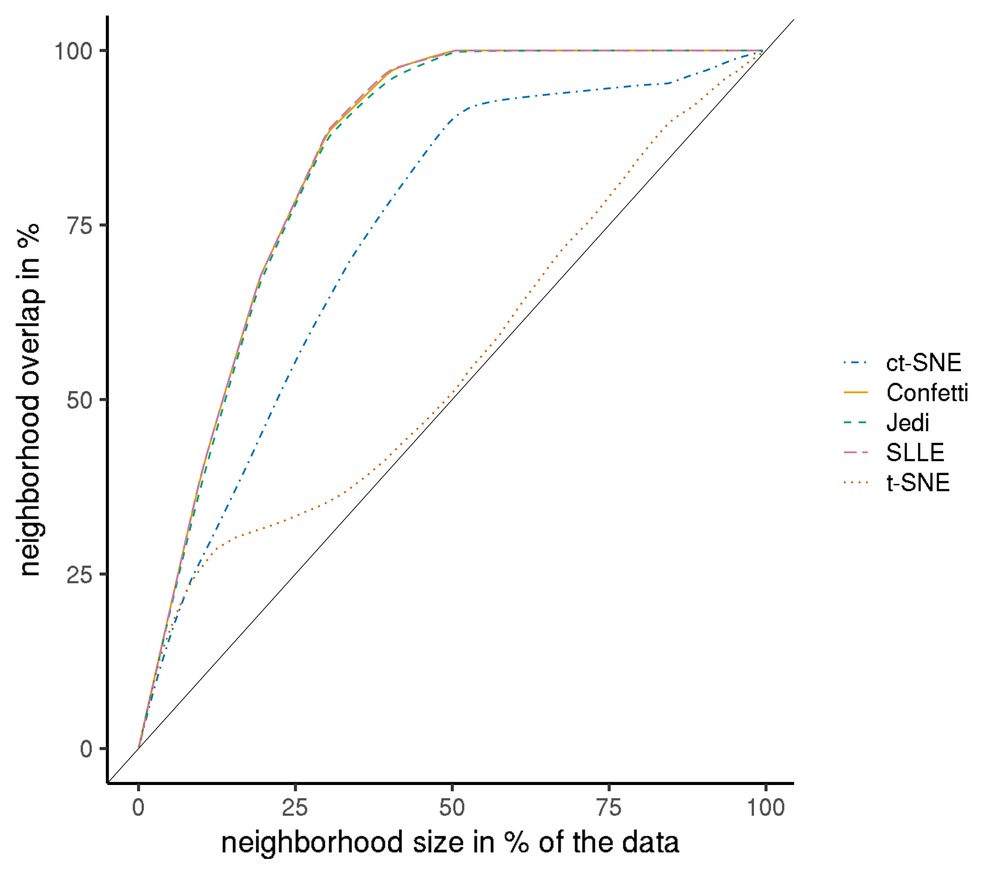}
            \caption{\nos plot between non-prior dimensions and embedding (larger is better).}
        \end{subfigure}
  \caption{\textit{\nos score visualizations.} Given are the \nos plots for prior labels and embedding (a) and non-prior dimensions 9-12 and embedding (b) on the synthetic data set. On the x-axis, the neighbourhood $k$ is varied between 0\% and 100\% of data points, the y-axis is the \nos score.}\label{app:synth_nos_fig}
\end{figure}

\clearpage
\subsection{Flower data}

\begin{figure}[h!]
\centering
 \includegraphics[width=10cm]{./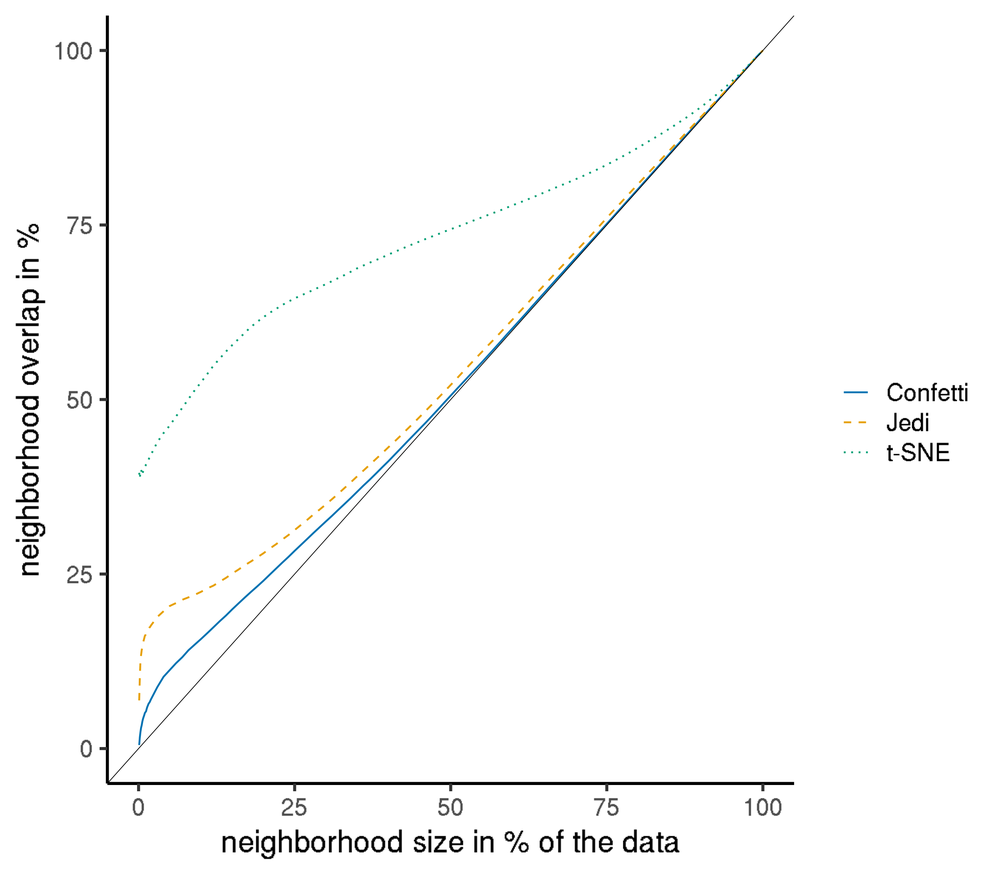}
 \caption{\textit{\nos plot for flowers.} Visualized is the \nos score for the Oxford flower data set for \jedi and \confetti against the prior. On the x-axis, the neighbourhood $k$ is varied between 0\% and 100\% of data points, the y-axis is the \nos score.}\label{app:flower_nos}
\end{figure}

\subsection{Single cell data}

Here, we provide additional plots for the single cell experiments.
In particular, we carried out an agglomerative clustering based on marker gene expression, which arrives at the same number of clusters as the original paper (App. Fig. \ref{app:sc_dendogram})
Additionally, we show how the \ctsne embedding looks when coloring the visualization according to cell type of the original paper (App. Fig. \ref{app:sc_ctsne_labels}),
and a new result when we use tissue information as prior knowledge for \confetti (App. Fig. \ref{app:scumapconfetti_tissue}), as well as the vanilla \umap plot with coloring according to batch ID.

\begin{figure}
 \centering
 \begin{subfigure}[b]{0.49\textwidth}
  \includegraphics[width=\textwidth]{./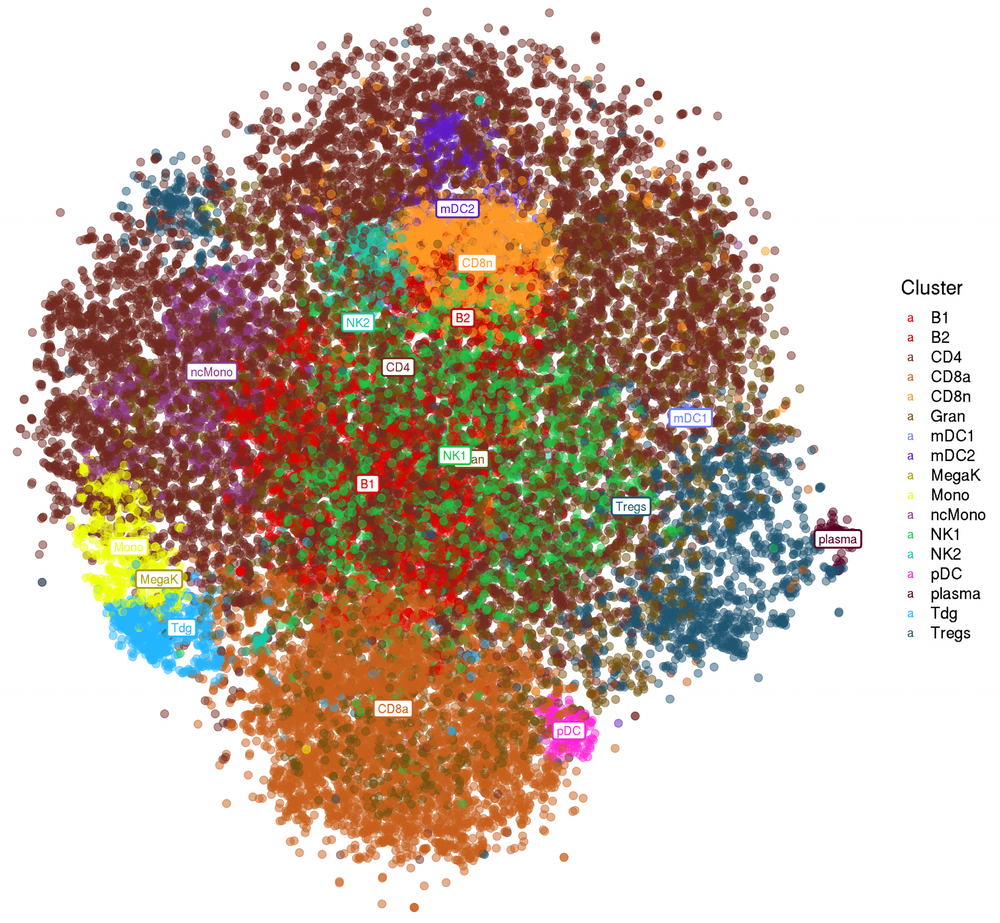}
  \caption{\ctsne embedding on single cell data with color accordint to cell labels.}\label{app:sc_ctsne_labels}
 \end{subfigure} 
 \hfill
 \begin{subfigure}[b]{0.49\textwidth}
  \includegraphics[width=\textwidth]{./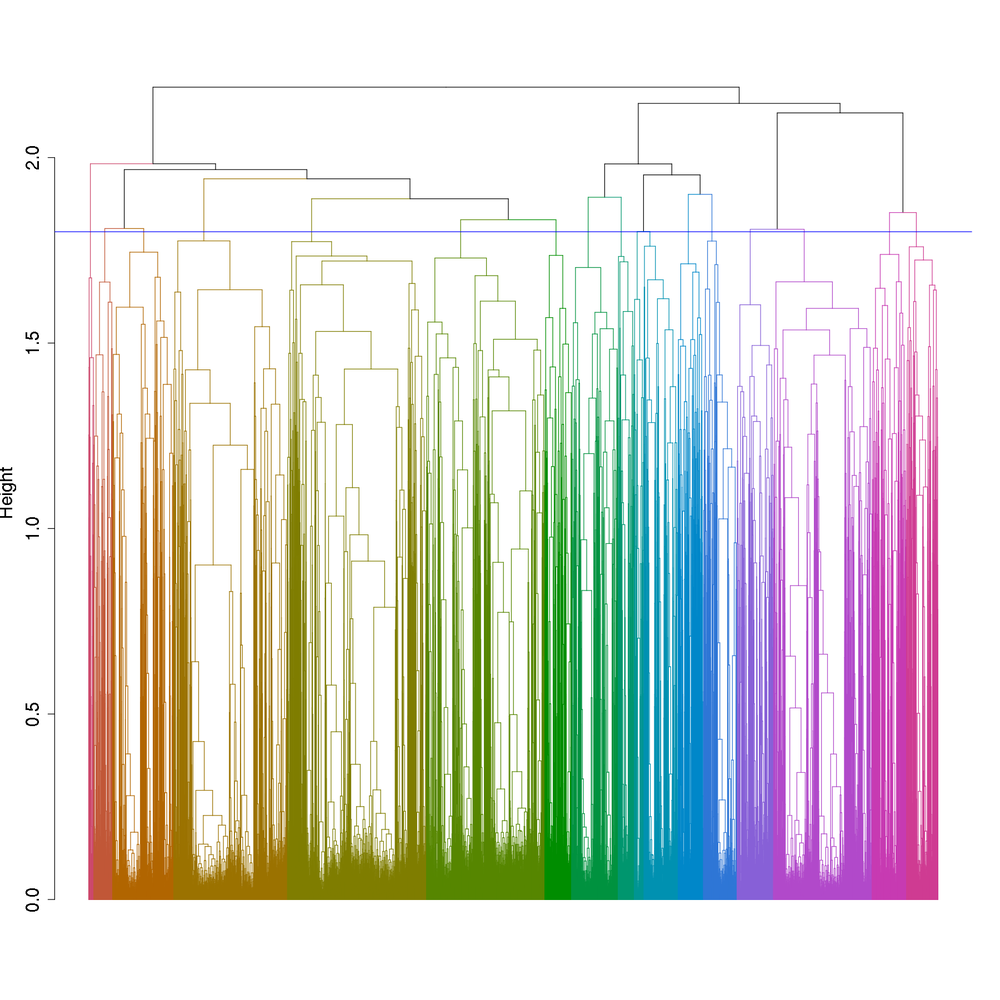}
  \caption{Dendogram for complete linkage clustering of samples according to marker gene expression.}\label{app:sc_dendogram}
 \end{subfigure}
 \caption{\textit{Embedding of \ctsne and clustering according to marker genes}. a) Visualized is the embedding obtained from \ctsne for the single cell data with our cluster labels as prior. The coloring is according to cell type. b) Dendogram of agglomerative clustering with complete linkage on marker gene expression. A natural cutting point is at 1.8 (blue horizontal line), which retrieves the same number of clusters as the orginal paper.}
\end{figure}

\begin{figure}
 \begin{subfigure}[b]{0.32\textwidth}
  \centering
  \includegraphics[width=\textwidth]{./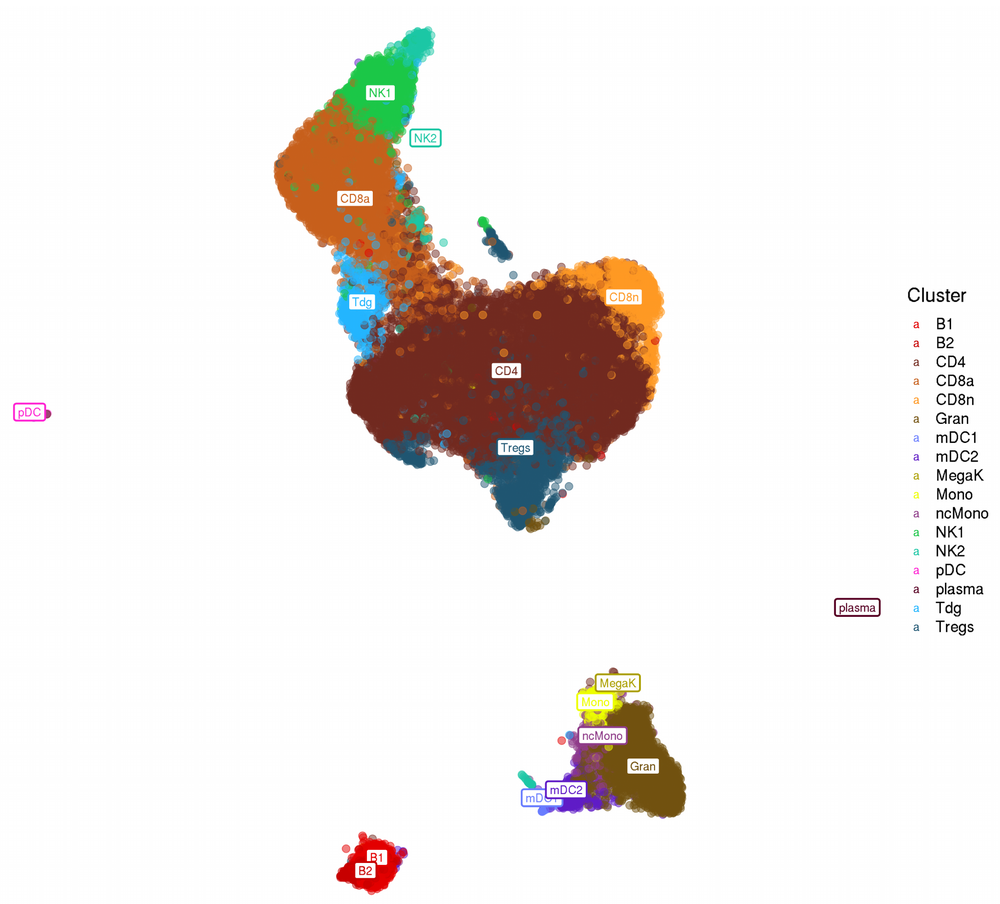}
  \caption{Embedding of \confetti with tissue prior.}\label{app:scumapconfetti_tissue}
 \end{subfigure}
 \hfill
 \begin{subfigure}[b]{0.32\textwidth}
  \centering
  \includegraphics[width=\textwidth]{./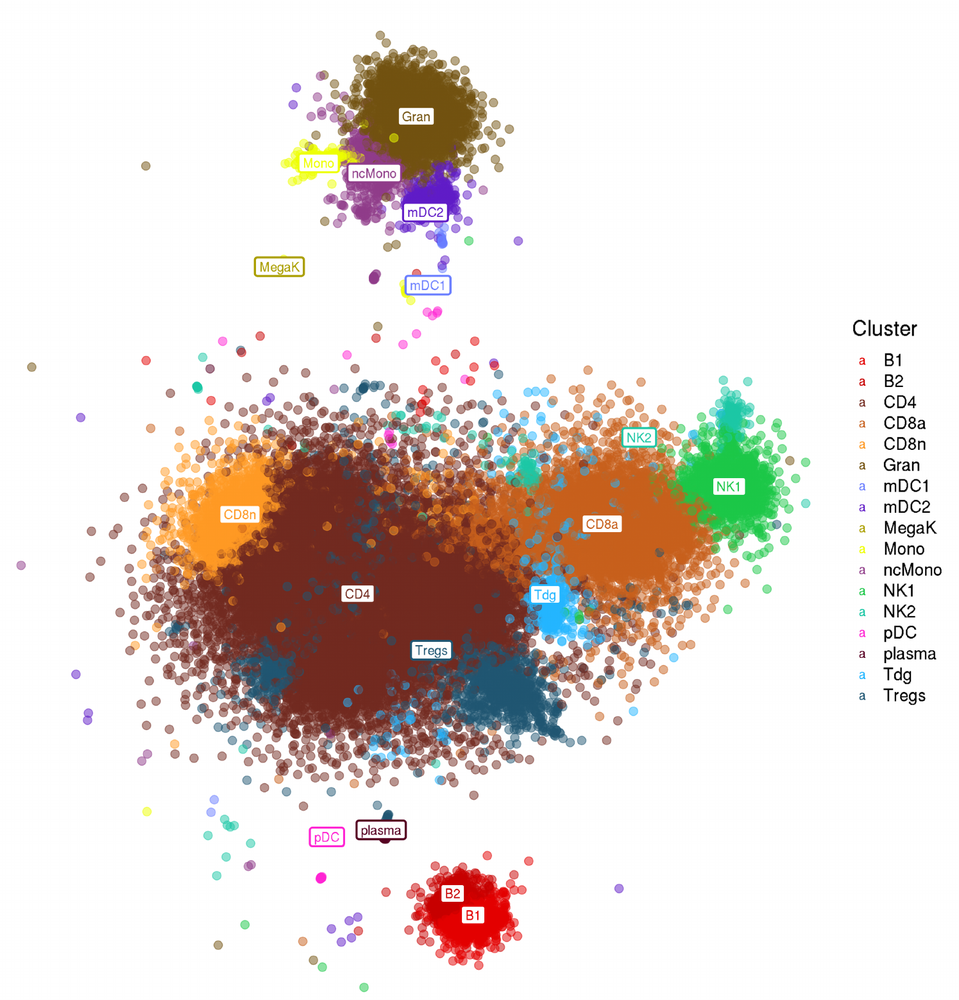}
  \caption{Embedding of \jedi with marker gene prior.}\label{app:sc_jedi}
 \end{subfigure}
 \hfill
 \begin{subfigure}[b]{0.32\textwidth}
  \centering
  \includegraphics[width=\textwidth]{./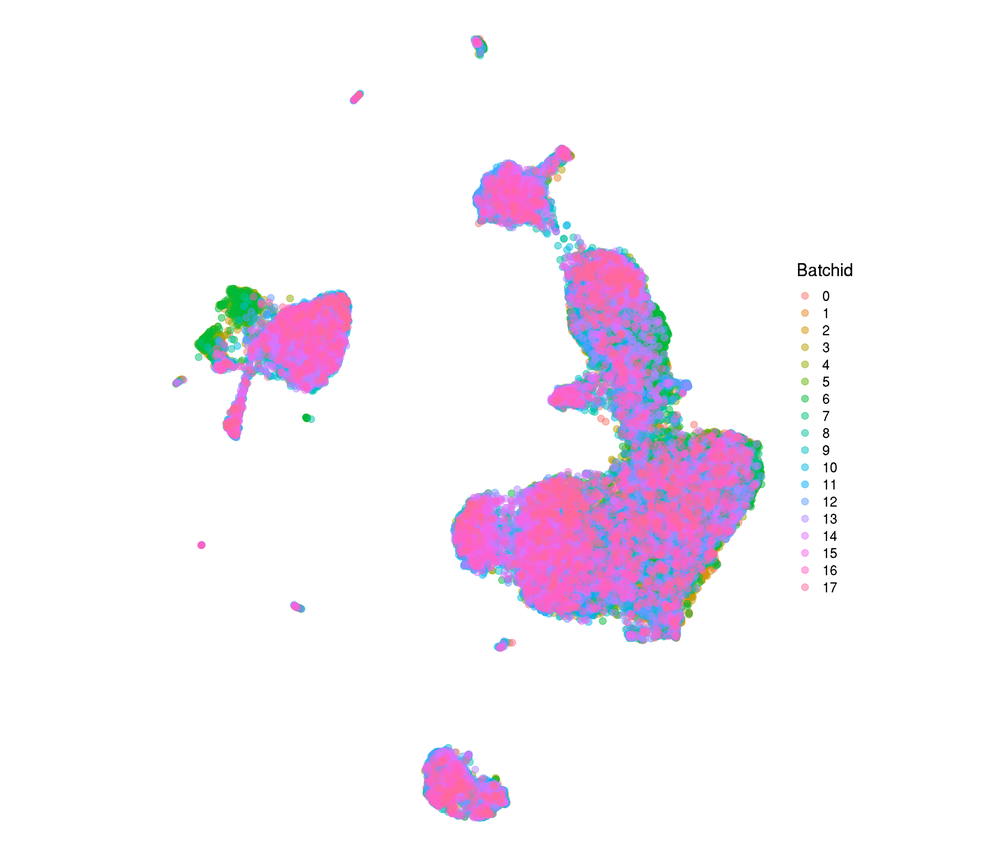}
  \caption{Embedding of vanilla \umap with batch ID coloring.}\label{app:umap_batchcolor}
 \end{subfigure}
 \caption{\textit{Additional SC embeddings.} Visualized is the embedding of \confetti for the single cell data with the tissue type (blood vs CSF) as prior with coloring according to cell labels (a), the embedding obtained from \jedi for the single cell data with marker gene expression as prior, with coloring is according to the cell labels (b), and the 
 original \umap embedding with coloring according to batch ID (c).}
\end{figure}

\end{document}